\newif\ifdraft\draftfalse
\newif\ificml\icmlfalse
\icmltitlerunning{}
\newtheorem{thm}{Theorem}
\newtheorem*{remark*}{Remark}
\newtheorem{lemma}{Lemma}
\newtheorem{definition}{Definition}
\newtheorem{pro}{Proposition}
\newcommand{\E}{\mathbb{E}}
\newcommand{\Reals}{\mathbb{R}}
\newcommand{\Nats}{\mathbb{N}}
\newcommand{\kl}{\mathrm{KL}}
\newcommand{\EE}{\mathbb{E}}
\newcommand{\inv}{^{\raisebox{.ex}{$\scriptscriptstyle-1$}}}
\newcommand{\bI}{\mathbf{I}}
\newcommand{\cN}{\mathcal{N}}
\newcommand{\xhdr}[1]{\vspace{2mm}\noindent{{\bf #1.}}}
\DeclareMathOperator*{\argmin}{arg\,min}
\newcommand{\robust}{robust\xspace}
\newcommand{\robustness}{robustness\xspace}
\newcommand{\hideh}[1]{}
\begin{document}
\ificml
\else
\thispagestyle{plain}
\pagestyle{plain}
\fi

\twocolumn[
\icmltitle{Towards the Unification and Robustness of\\ Perturbation and Gradient Based Explanations
}


\icmlsetsymbol{equal}{*}

\begin{icmlauthorlist}
\icmlauthor{Sushant Agarwal}{to}
\icmlauthor{Shahin Jabbari}{goo}
\icmlauthor{Chirag Agarwal}{goo,equal}
\icmlauthor{Sohini Upadhyay}{goo,equal}\\
\icmlauthor{Zhiwei Steven Wu}{ed}
\icmlauthor{Himabindu Lakkaraju}{goo}
\end{icmlauthorlist}

\icmlaffiliation{to}{David R. Cheriton School of Computer Science, University of Waterloo, Waterloo, ON, Canada}
\icmlaffiliation{goo}{Department of Computer Science, Harvard University, Cambridge, MA, USA}
\icmlaffiliation{ed}{School of Computer Science, Carnegie Mellon University, Pittsburgh, PA, USA}

\icmlcorrespondingauthor{Sushant Agarwal}{sushant.agarwal@uwaterloo.ca}
\icmlcorrespondingauthor{Shahin Jabbari}{jabbari@seas.harvard.edu}

\icmlkeywords{Model Explainability, Local Explainability Methods, Model Interpretability}
\vskip 0.3in

]


\printAffiliationsAndNotice{\icmlEqualContribution} 

\begin{abstract}
As machine learning black boxes are increasingly being deployed in critical domains such as healthcare and criminal justice, there has been a growing emphasis on developing techniques for explaining these black boxes in a post hoc manner. In this work, we analyze two popular post hoc interpretation techniques: SmoothGrad which is a gradient based method, and a variant of LIME which is a perturbation based method. More specifically, we derive explicit closed form expressions for the explanations output by these two methods and show that they both converge to the same explanation in expectation, i.e., when the number of perturbed samples used by these methods is large. We then leverage this connection to establish other desirable properties, such as robustness, for these techniques. We also derive finite sample complexity bounds for the number of perturbations required for these methods to converge to their expected explanation. Finally, we empirically validate our theory using extensive experimentation on both synthetic and real world datasets.\ificml
\footnote{The full technical version of this paper is available at \url{https://arxiv.org/abs/2102.10618}.}\fi
\end{abstract}

\section{Introduction}
\label{sec:intro}

Over the past decade, predictive models are increasingly being considered for deployment in high-stakes domains such as healthcare and criminal justice. However, the successful adoption of predictive models in these settings depends heavily on how well decision makers (e.g., doctors, judges) can understand and consequently trust their functionality. Only if decision makers have a clear picture of the behavior of these models can they assess when and how much to rely on these models, detect potential biases in them, and develop strategies for improving them~\cite{doshi2017towards}. However, the increasing complexity as well as the proprietary nature of predictive models is making it challenging to understand these complex black boxes, thus motivating the need for tools and techniques that can explain them in a faithful and human interpretable manner.

Several techniques have been recently proposed to construct \emph{post hoc explanations} of complex predictive models. While these techniques differ in a variety of ways,
they can be broadly categorized into
\emph{perturbation vs. gradient based techniques}, based on the approaches they employ to generate explanations. For instance, LIME and SHAP~\cite{ribeiro2016should,lundberg2017unified} are called \emph{perturbation based} methods  
because they leverage perturbations of individual instances to construct interpretable local approximations (e.g., linear models), which in turn serve as explanations of individual predictions of black box models. 
On the other hand, SmoothGrad, Integrated Gradients and GradCAM~\cite{simonyan2013saliency, sundararajan2017axiomatic, selvaraju2017grad,smilkov2017smoothgrad} are referred to as \emph{gradient based} methods since they leverage gradients computed at individual instances to explain predictions of complex models. 
Recent research has focused on empirically analyzing the behavior of perturbation and gradient based post hoc explanations. For instance, several works~\cite{ghorbani2019interpretation, slack2019can,dombrowski2019explanations,adebayo2018sanity,alvarez2018robustness} demonstrated that explanations generated using perturbation based techniques such as LIME and SHAP may not be \robust, i.e., the resulting explanations may change drastically with very small changes to the instances. Furthermore,~\citet{adebayo2018sanity} showed that gradient based methods such as SmoothGrad and GradCAM may not generate interpretations that are faithful to the underlying models. 

While several perturbation and gradient based explanation techniques have been proposed in literature and the aforementioned works have empirically examined their behavior, there is very little work that focuses on developing a rigorous theoretical understanding of these techniques and systematically exploring the connections between them. Recently,~\citet{levine2019certifiably}  theoretically and empirically analyzed the \robustness of a sparsified version of  SmoothGrad but their analysis requires several key modifications to the original SmoothGrad, whereas we study SmoothGrad in its original form. Even more recently,~\citet{garreau2020looking} provided closed form solutions for and theoretically analysed Tabular LIME (LIME restricted to tabular data). We study a simpler, non-discretized variant of LIME that benefits by exhibiting several desirable properties, such as being provably robust, unlike the setting in~\citet{garreau2020looking}.
In addition, these works 
do not explore deeper connections between the two classes of techniques. 

In this work, we initiate a study to unify
perturbation and gradient based post hoc explanation techniques. To the best of our knowledge, this work makes the first attempt at establishing connections between these two popular classes of explanation techniques.  More specifically, we make the following key contributions:
\begin{itemize}
\item We analyze two popular post hoc explanation methods -- SmoothGrad (gradient based) and a variant of LIME (perturbation based) for continuous data that we refer to as Continuous LIME, or C-LIME for short. We derive explicit closed form expressions for the explanations output by these methods and demonstrate that they converge to the same output (explanation) in expectation, i.e., when the number of perturbed samples used by these methods is large. 
\item We then leverage this equivalence result to establish other desirable properties of these methods. More specifically, we prove that SmoothGrad and C-LIME satisfy Lipschitz continuity and are therefore \robust to small changes in the input when the number of perturbed samples is large. This work is the first to demonstrate that a variant of LIME is provably robust.

\item We also derive finite sample complexity bounds for the number of perturbed samples required for SmoothGrad and C-LIME to converge to their expected output.

\item Finally, we prove that both SmoothGrad and C-LIME satisfy other interesting properties such as linearity. 
\end{itemize}

We carry out extensive experimentation with synthetic and real world datasets from diverse domains such as online shopping and finance to analyze the behavior of SmoothGrad and C-LIME. Our empirical results not only validate our theoretical claims but also provide other interesting insights. 
We observe that both SmoothGrad and C-LIME need far fewer perturbations (than what our theory predicts) in practice to converge to their expected output  and/or exhibit robustness. SmoothGrad requires even fewer perturbations than C-LIME to be robust and also converges faster than C-LIME.

We also analyze the effects of other parameters such as the variance of the perturbed samples on the convergence as well as \robustness of these methods, and find that smaller values of variance enable these methods to converge faster and exhibit robustness even with fewer perturbed samples. 
\section{Preliminaries}
\label{sec:prelim}
Let us consider a complex function $f:X \rightarrow Y,$ where $X \subseteq \mathbb{R}^d$ for some $d\in\Nats$ and $Y \subseteq \Reals$.
In this section, we provide an overview of two popular post hoc explanation techniques, namely, SmoothGrad and LIME. 
Both SmoothGrad and LIME are local explanation techniques i.e., they explain individual predictions $f(x)$ of a given model $f$. Furthermore, both these methods fall under the broad category of feature attribution methods which  determine the influence of each feature on a given prediction $f(x)$. Below, we describe these methods in detail. We then lay down the setting and assumptions for this work. 
\subsection{SmoothGrad}
\label{sec:sg}
Gradient based explanations are designed to explain predictions $f(x)$ for any $x\in X$, by computing the derivative  of $f(x)$ with respect to each feature of $x$ \cite{ancona2018towards}. 

Vanilla gradient based explanations are often noisy, highlighting random features and ignoring important ones~\cite{adebayo2018sanity}. A convincing explanation for the noise in gradient based saliency maps is that the derivative of $f$ may fluctuate sharply at small scales. Hence, the gradient at any given point is less meaningful than the average of gradients at local neighboring points. This idea has led to SmoothGrad~\cite{smilkov2017smoothgrad}, which in practice reduces the noise in explanations compared to vanilla gradients. 

More concretely, let $S(x)$ (or simply $S$) denote a set of inputs in the neighborhood of $x$. Using $S$, the (empirical) explanation of SmoothGrad for $f$ at point $x$ is defined to be
\begin{equation*}
\text{SG}^f_{S}(x) = \frac{1}{|S|} \sum_{a\in S}\nabla f(a),
\end{equation*}
where $\nabla f$ is the gradient of $f$. When $S$ is drawn from a distribution $P(x)$ (or simply $P$), the expected explanation of SmoothGrad for $f$ at input $x$ can be defined by replacing the sample average with expectation:
\begin{equation*}
    \text{SG}^f_{P}(x) = \EE_{a \sim P}\left[\nabla f(a)\right].
\end{equation*}
Throughout this paper, we use subscripts $P$ and $S$ to distinguish between expected values and empirical averages in our quantities of interest.

\subsection{LIME}
\label{sec:lime}
Another class of explainability models are perturbation based techniques. LIME is a popular perturbation based method that aims to explain the prediction $f(x)$, by learning an interpretable model that approximates $f$ locally around $x$~\cite{ribeiro2016should}.
To obtain a local explanation,  LIME creates perturbed examples in the local neighbourhood of $x$, observes the predictions of $f$ for these examples, and trains an interpretable model on these labeled examples. 

More concretely, let $S$ denote a set of inputs in the neighborhood of $x$ and $\pi: X \times X \to \Reals^{\geq 0}$ a distance metric over $X$. Let $G$ be a class of explanations (or models) and for any $g\in G$, $\Omega(g)$  denote the complexity of $g$ e.g., the complexity of a linear explanation can be measured as the number of non-zero weights. The (empirical) explanation of LIME can be written as 
\begin{equation*}
\text{LIME}^f_{S}(x)=\argmin_{g\in G} \big\{L_x\left(f, g , S, \pi\right) + \Omega\left(g\right)\big\},
\end{equation*} where the loss function $L$ is defined as 
$$L_x\left(f, g, S, \pi\right)  = \frac{1}{|S|}\sum_{a \in S} \pi\left(x, a\right) \left[f(a) - g(a)\right]^2 .$$

When $S$ is drawn from a distribution $P$, the expected explanation of LIME for $f$ at input $x$ can be written by replacing the sample average in the loss function with expectation. We call this quantity $\text{LIME}^f_{P}(x)$.

\begin{remark*}
 The default implementation of LIME has an additional discretization step for the features before optimization. Our definition of LIME here ignores this discretization. 
\end{remark*}

\subsection{Our Setting and Assumptions}
In our setting, we assume $X = \Reals^d$, i.e., we assume the features to be continuous. Different choices of $Y$ in our setting lead to different learning settings. $Y = \Reals$ leads to regression. $Y = [0,1]$ corresponds to (binary or multi-class) classification when $f(x)$ is interpreted as the probability of $f$ belonging to a specific class. 

For any point $x$, in both SmoothGrad and our variant of LIME (discussed below), we assume the sample $S$ in the neighborhood of $x$ is drawn from $\cN(x, \Sigma)$, where $\Sigma = \sigma^2 \bI$ for some $\sigma^2 > 0$. This is a standard choice in practice \cite{garreau2020looking, smilkov2017smoothgrad}.\footnote{Many of our results hold for arbitrary $\Sigma$. We point these out explicitly when we discuss our results.} 

\paragraph{C-LIME.} We use a variant of LIME for continuous features which we refer to as Continuous-LIME (or simply C-LIME). For any given function $f$, input point $x$ and a sample $S$ of inputs in the neighborhood of $x$, the (empirical) explanation of C-LIME can be written as 
$$\text{C-LIME}^f_S(x) = \argmin_{g\in G}\frac{1}{|S|}\sum_{a \in S} \left[f(a) - g(a)\right]^2,$$
where $G$ is the class of linear models. 

We now highlight the main differences between LIME and C-LIME: (i)
C-LIME assumes that the distance metric $\pi$ is a constant function that always outputs 1. Since C-LIME operates on continuous features and uses a Gaussian distribution centered at $x$ to sample perturbations (unlike LIME which samples perturbations uniformly at random), the resulting perturbations are more likely to be closer to $x$ and do not need to be weighted when fitting a local linear model. 
(ii) While LIME allows for a general class of simple explanations (or models) $G$, we restrict ourselves only to linear models for C-LIME since it focuses on continuous features. (iii) Lastly, we exclude the regularizer $\Omega$ from C-LIME i.e., we set $\Omega(g) = 0$ for all $g \in G$. 
Note that the paper that proposes LIME also advocates for enforcing sparsity by first carrying out a feature selection procedure to determine the top K features and then learning the corresponding weights via least squares~\cite{ribeiro2016should}.  See \ificml the full version \else Appendix~\ref{sec:omitted-regular-sparse} \fi for a discussion of the regularised version of LIME.
Finally, for ease of exposition, throughout we assume the output of C-LIME is simply the weights on each feature, and ignore the intercept term. This can be done without loss of generality by centering. Moreover, we are only interested in the learned weights for the features, and not the intercept. For completeness, all of our proofs are written for the case that the intercept is present. 

When clear from context, we refer to the expected output of SmoothGrad and C-LIME for explaining a function $f$ at point $x$ using a Gaussian distribution with mean $x$ and  covariance matrix $\Sigma$ as $\text{SG}^f_{\Sigma}$ and $\text{C-LIME}^f_{\Sigma}$, respectively. Moreover, when it is clear from the context we replace the subscript $S$ to $n$ in all of our quantities of interest to simply emphasize that the size of sample $S$ is $n$.

\section{Equivalence and Robustness}
\label{sec:equiv-robust}
As our first contribution, in Section~\ref{sec:equivalence}, we show that SmoothGrad and C-LIME provide identical explanations in expectation. This establishes a novel connection between gradient based and perturbation based explanation methods, which are often studied independently. Using this connection, in Section~\ref{sec:robust}, we prove that both SmoothGrad and C-LIME are robust, i.e., the explanations provided by these methods for nearby points do not vary significantly.

\subsection{Equivalence} 
\label{sec:equivalence}
As our first result, in Theorem~\ref{thm:equivalence}, we show that the expected output of SmoothGrad and C-LIME are the same for any function at any given input provided that SmoothGrad and C-LIME use the same Gaussian distribution for gradient computation and perturbations, respectively. 
\begin{thm}
\label{thm:equivalence}
Let $f:\mathbb{R}^d \rightarrow \mathbb{R}$ be a function. Then, for any $x\in X$ and any invertible covariance matrix $\Sigma\in \Reals^d \times \Reals^d$
\begin{align*}
    \text{SG}^f_{\Sigma}(x) & = \text{LIME}^f_{\Sigma}(x) = \Sigma^{-1} cov\left(a,f(a)\right) ,
\end{align*}
where $a$ is a random input drawn from  $\cN(x, \Sigma)$, and $cov(a,f(a))$ is a vector with the $i$'th entry corresponding to the covariance of $f(a)$ and $i$'th feature of $a$. 
\end{thm}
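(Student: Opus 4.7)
The plan is to prove each equality by independently showing that both $\text{SG}^f_{\Sigma}(x)$ and $\text{C-LIME}^f_{\Sigma}(x)$ equal $\Sigma^{-1}\,cov(a,f(a))$, treating the two methods separately. Assume throughout that $f$ is sufficiently regular (differentiable with at most polynomial growth) so that expectations and derivatives may be swapped freely, and that $a \sim \cN(x,\Sigma)$ has density $p(a) = C\exp\bigl(-\tfrac{1}{2}(a-x)^\top \Sigma^{-1}(a-x)\bigr)$, whose gradient satisfies the identity $\nabla_a p(a) = -\Sigma^{-1}(a-x)\,p(a)$.

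\textbf{Step 1 (SmoothGrad side via Stein's lemma).} Starting from $\text{SG}^f_{\Sigma}(x) = \EE_{a\sim \cN(x,\Sigma)}[\nabla f(a)]$, I would perform integration by parts coordinate-wise (the multivariate Stein identity): for each $i$,
\begin{equation*}
\int \partial_i f(a)\, p(a)\,da \;=\; -\int f(a)\,\partial_i p(a)\,da \;=\; \int f(a)\,\bigl[\Sigma^{-1}(a-x)\bigr]_i\,p(a)\,da,
\end{equation*}
where the boundary terms vanish by the Gaussian tails. Stacking coordinates yields $\EE[\nabla f(a)] = \Sigma^{-1}\EE[(a-x)f(a)]$. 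Since $\EE[a]=x$, this equals $\Sigma^{-1}\bigl(\EE[af(a)] - \EE[a]\EE[f(a)]\bigr) = \Sigma^{-1}\,cov(a,f(a))$, as required.

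\textbf{Step 2 (C-LIME side via normal equations).} Write a candidate linear explanation as $g(a)=\beta^\top a + \beta_0$ and minimize $\EE\bigl[(f(a)-\beta^\top a-\beta_0)^2\bigr]$ over $(\beta,\beta_0)\in\Reals^{d+1}$; the objective is strictly convex quadratic, so first-order conditions characterize the unique minimizer. The condition $\partial/\partial \beta_0 = 0$ gives $\beta_0 = \EE[f(a)]-\beta^\top x$. Substituting into the condition $\EE\bigl[a\bigl(f(a)-\beta^\top a -\beta_0\bigr)\bigr]=0$ yields
\begin{equation*}
\EE[a f(a)] - \EE[a]\,\EE[f(a)] \;=\; \bigl(\EE[aa^\top] - xx^\top\bigr)\beta,
\end{equation*}
i.e.\ $cov(a,f(a)) = \Sigma\beta$. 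Inverting gives $\beta = \Sigma^{-1}\,cov(a,f(a))$, which (by the paper's convention of reporting only the feature weights) is exactly $\text{C-LIME}^f_{\Sigma}(x)$.

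\textbf{Expected obstacles.} Both steps are conceptually short; the only genuine technicality is Step 1, where I must justify the integration-by-parts by invoking regularity/growth hypotheses on $f$ so that $f(a)p(a)$ vanishes at infinity and the interchange of expectation and differentiation is valid. Beyond that, the calculation in Step 2 is just the standard population-level OLS derivation specialized to the fact that, under $a\sim \cN(x,\Sigma)$, the covariance of the regressors is exactly $\Sigma$; there is nothing Gaussian-specific needed for Step 2 other than having the covariance equal $\Sigma$. Combining the two endpoints delivers the claimed triple equality.
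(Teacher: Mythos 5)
Your proposal is correct and follows essentially the same route as the paper: the SmoothGrad side is handled by the multivariate Stein identity (which the paper cites as a lemma and you re-derive via integration by parts with the Gaussian score $\nabla_a p(a) = -\Sigma^{-1}(a-x)p(a)$), and the C-LIME side is the same population-level least-squares calculation, setting the first-order conditions in $(\beta,\beta_0)$ to zero and using $\EE[aa^\top]-xx^\top=\Sigma$ to get $\Sigma\beta = cov(a,f(a))$. The regularity caveat you flag for the integration by parts corresponds exactly to the hypotheses of the Stein lemma the paper invokes, so there is no gap.
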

\begin{proof}[Proof sketch]
We separately derive closed forms for SmoothGrad and C-LIME. For SmoothGrad we apply a multivariate version of Stein's Lemma \cite{LANDSMAN2008912,liu1994siegel}. The proof for C-LIME uses calculus, and recovers the explanation of C-LIME by differentiation and solving for the solution where the gradient is 0. See \ificml the full version \else Appendix~\ref{sec:omitted-equiv-robust} \fi for more details.
\end{proof}

We point out that Theorem~\ref{thm:equivalence} holds for any covariance matrix and does not require the covariance matrix to be diagonal. 
Furthermore, we note that the closed forms for both SmoothGrad and C-LIME have a nice structure. For a diagonal $\Sigma$, the $i$'th coefficient of $\text{SmoothGrad}^f(x)$ and $\text{C-LIME}^f(x)$ depends only on the covariance of $f$ and the $i$'th feature. In particular, when $\Sigma = \sigma^2 \bI$, then the $i$'th coefficient is simply $cov(f(a),a_i) \sigma^{-2}$. This term captures the dependence of $f$ on the $i$'th feature of the input.

\subsection{Robustness}
\label{sec:robust}
Many interpretability methods come with the drawback that they are very sensitive to the choice of the point where the prediction of the function is going to be  explained~\cite{alvarez2018robustness, ghorbani2019interpretation}. It is hence desirable to have robust explainability methods where two nearby points with similar labels have similar explanations. 

In this section we show that both SmoothGrad and C-LIME are robust. The notion of robustness we use is Lipschitz continuity which is formally defined as follows.

\begin{definition}\label{def:functional_local_lipschitz}
A function $h: \mathbb{R}^{d_1} \rightarrow \mathbb{R}^{d_2}$ for $d_1, d_2\in \Nats$ is \emph{L-Lipschitz}  if there exists a universal constant $L \in \Reals^{> 0}$, such that $\|h(x) - h(x')\|_2 \leq L \|x-x'\|_2$ for all $x, x' \in \mathbb{R}^{d_1}$. 
\end{definition}

We now formally state our robustness result.
\begin{thm}
\label{robust}
Let $f:\mathbb{R}^d \rightarrow \mathbb{R}$ be a function whose gradient is bounded by $\nabla f_{\max}$ and suppose $\Sigma = \sigma^2\bI$. Then  $\text{SG}^f_{\Sigma}$ and $\text{C-LIME}^f_{\Sigma}$ are both L-Lipschitz with $L = \nabla f_{\max}/(2\sigma)$. \end{thm}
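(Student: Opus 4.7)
The plan is to invoke Theorem~\ref{thm:equivalence} to unify both methods and then bound the difference of the common expression using total variation distance between two Gaussians with shifted means.

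First, by Theorem~\ref{thm:equivalence}, $\text{SG}^f_{\Sigma}(x) = \text{C-LIME}^f_{\Sigma}(x) = \mathbb{E}_{a\sim\mathcal{N}(x,\sigma^2 \mathbf{I})}[\nabla f(a)]$ pointwise in $x$, so it suffices to establish the Lipschitz bound for the SmoothGrad form. Writing this expectation as an integral against the Gaussian density $\phi_x(a)$ with mean $x$ and covariance $\sigma^2\mathbf{I}$ gives
$$\text{SG}^f_{\Sigma}(x) - \text{SG}^f_{\Sigma}(x') \;=\; \int \nabla f(a)\bigl[\phi_x(a) - \phi_{x'}(a)\bigr]\,da.$$

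Second, I would pass norms inside the integral and use the gradient bound $\|\nabla f(a)\|_2 \le \nabla f_{\max}$ to reduce the estimate to one about the $L^1$ distance between the two Gaussian densities, recognizing it as twice the total variation distance:
$$\bigl\|\text{SG}^f_{\Sigma}(x) - \text{SG}^f_{\Sigma}(x')\bigr\|_2 \;\le\; \nabla f_{\max}\int \bigl|\phi_x(a) - \phi_{x'}(a)\bigr|\,da \;=\; 2\,\nabla f_{\max}\cdot d_{TV}\bigl(\mathcal{N}(x,\sigma^2\mathbf{I}),\,\mathcal{N}(x',\sigma^2\mathbf{I})\bigr).$$

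Third, I would control the TV term by Pinsker's inequality combined with the closed form of the KL divergence between two Gaussians of equal covariance: $KL(\mathcal{N}(x,\sigma^2\mathbf{I})\,\|\,\mathcal{N}(x',\sigma^2\mathbf{I})) = \|x-x'\|_2^2/(2\sigma^2)$, which gives $d_{TV} \le \|x-x'\|_2/(2\sigma)$. Combining with the previous step yields Lipschitz continuity with a constant of order $\nabla f_{\max}/\sigma$.

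The main obstacle is nailing down the exact constant $1/(2\sigma)$ rather than $1/\sigma$. The crude Pinsker route loses a factor of two; to recover the stated bound I would refine the integral estimate by first recentering $\nabla f(a)$ around a well-chosen constant vector (using $\int[\phi_x - \phi_{x'}]\,da = 0$) before taking norms, or alternatively bound the symmetric Jacobian $\nabla\text{SG}^f_{\Sigma}(x) = \mathbb{E}[\nabla^2 f(x+z)]$ directly via Stein's lemma in the form $\mathbb{E}[z_j\,\partial_i f(x+z)]/\sigma^2$ and integrate along the line segment from $x'$ to $x$. The rest of the argument is routine manipulation of Gaussian integrals.
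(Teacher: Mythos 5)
Your core argument is the same as the paper's: the paper also proves Lipschitzness of SmoothGrad by bounding the difference of the two smoothed expectations through a Pinsker-type inequality together with the closed form $\kl\left(\mathcal{N}(x,\sigma^2\mathbf{I})\,\|\,\mathcal{N}(x',\sigma^2\mathbf{I})\right)=\|x-x'\|_2^2/(2\sigma^2)$, and then transfers the result to C-LIME via Theorem~\ref{thm:equivalence}, exactly as you do. The only cosmetic difference is that the paper phrases the reduction through the pushforward distributions of $\nabla f$ and the data processing inequality, whereas you integrate directly against the two Gaussian densities; these are interchangeable here.

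The real issue is the constant, and your accounting is the careful one. With the standard convention $d_{TV}=\tfrac12\int|\phi_x-\phi_{x'}|$ and Pinsker $d_{TV}\le\sqrt{\kl/2}$, your chain gives $L=\nabla f_{\max}/\sigma$. The paper reaches $\nabla f_{\max}/(2\sigma)$ only because its auxiliary lemma identifies the full $L^1$ distance $\int|X(a)-Y(a)|\,da$ with $d_{TV}$ and then applies Pinsker in the half-$L^1$ convention, i.e., it silently absorbs precisely the factor of two you flag. Moreover, your proposed repairs cannot recover $1/(2\sigma)$: recentering $\nabla f$ by a constant vector does not help, since the only information is $\|\nabla f\|_2\le \nabla f_{\max}$ and the Chebyshev radius of a ball of radius $\nabla f_{\max}$ is still $\nabla f_{\max}$; and the Jacobian/Stein route bounds the directional derivative of $x\mapsto\EE_{z\sim\mathcal{N}(0,\sigma^2\mathbf{I})}[\nabla f(x+z)]$ by $\EE|\langle z,v\rangle|\,\nabla f_{\max}/\sigma^2=\sqrt{2/\pi}\,\nabla f_{\max}/\sigma$, which is sharp: for a mollified $f(x)=\nabla f_{\max}|x|$ in $d=1$, the smoothed gradient $\nabla f_{\max}\left(2\Phi(x/\sigma)-1\right)$ has slope $\sqrt{2/\pi}\,\nabla f_{\max}/\sigma>\nabla f_{\max}/(2\sigma)$ at the origin. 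So the constant $\nabla f_{\max}/(2\sigma)$ is not attainable by these arguments (nor, by the example, in general); what your proof legitimately establishes is the theorem with $L=\nabla f_{\max}/\sigma$ (or $\sqrt{2/\pi}\,\nabla f_{\max}/\sigma$ via the Jacobian bound), which matches the paper's proof once its total-variation convention is fixed. Stating the result with that constant, rather than chasing $1/(2\sigma)$, is the right resolution.
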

\begin{proof}[Proof sketch]
We first prove the Lipschitzness of SmoothGrad
using the Pinkser and data processing inequalities~\cite{divergence}. 
Theorem~\ref{thm:equivalence} then implies that C-LIME is also Lipschitz. See \ificml the full version \else Appendix~\ref{sec:omitted-equiv-robust} \fi for details.
\end{proof}
Theorem~\ref{robust} shows that both SmoothGrad and C-LIME become less robust (i.e., the Lipschitz constants grows) when explaining functions with larger magnitude of gradients, or when the variance parameter $\sigma^2$ used in gradient computation or perturbations decreases. However, the Lipschitz constant is independent of the input dimension $d$.
\section{Convergence Analysis}
\label{sec:finite-sample}
The results in Section~\ref{sec:equiv-robust} prove the equivalence of SmoothGrad and C-LIME and also robustness of these techniques in \emph{expectation} which corresponds to \emph{large sample limits} in practice. Any useful implementation of these techniques is based on finite number of gradient computations or sample perturbations. In this section, we derive sample complexity bounds to examine how fast the empirical estimates for the outputs of SmoothGrad and C-LIME at any given point will converge to the their expected value. This extends the implications of the results in Section~\ref{sec:equiv-robust} to practical implementations of SmoothGrad and C-LIME.

We start by examining how fast the output of SmoothGrad will converge to its expectation.
\begin{pro}
\label{thm:fine-sg}
Let $f:\mathbb{R}^d \rightarrow \mathbb{R}$ be a  function whose gradient is bounded by $\nabla f_{\max}$. Fix $x\in X$, $\epsilon > 0$ and $\delta > 0$. Let $n \geq C(\nabla f_{\max}/\epsilon)^2\ln(d/\delta)$ for some absolute constant $C$. 
Then with probability of at least $1-\delta$, over a sample $S$ of size $n$ from $\cN(x,\Sigma)$, for any $\Sigma\in \Reals^d$, we have that $|\text{SG}^f_{\Sigma}(x)-\text{SG}^f_n(x)\|_{2} \leq \epsilon.$
\end{pro}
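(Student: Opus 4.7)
The plan is to recognize that $\text{SG}^f_n(x) = \tfrac{1}{n}\sum_{j=1}^n \nabla f(a_j)$ with $a_j \sim \cN(x,\Sigma)$ i.i.d., and that by definition $\EE[\nabla f(a_j)] = \text{SG}^f_\Sigma(x)$. Hence the claim reduces to a standard concentration statement: the sample mean of $n$ i.i.d.\ bounded random vectors in $\Reals^d$ concentrates around its mean at the advertised rate. In particular, no properties of the Gaussian perturbation distribution beyond its mean being $x$ are needed, which is consistent with the proposition's claim that it holds for any $\Sigma$.

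First, I would apply Hoeffding's inequality coordinate-wise. Since $\|\nabla f\| \leq \nabla f_{\max}$, each partial derivative $[\nabla f(a_j)]_i$ lies in a bounded interval of width at most $2\nabla f_{\max}$, so Hoeffding gives, for every coordinate $i$ and every $t > 0$,
\begin{equation*}
\Pr\!\left[\,\bigl|[\text{SG}^f_n(x)]_i - [\text{SG}^f_\Sigma(x)]_i\bigr| > t\,\right] \leq 2\exp\!\left(-\tfrac{n t^2}{2\nabla f_{\max}^2}\right).
\end{equation*}
A union bound over the $d$ coordinates controls the coordinate-wise (i.e., $\ell_\infty$) deviation with failure probability at most $2d\exp(-n t^2/(2\nabla f_{\max}^2))$. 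Setting this $\leq \delta$ and inverting for $n$ yields the stated complexity $n \geq C(\nabla f_{\max}/\epsilon)^2 \ln(d/\delta)$ for a suitable absolute constant $C$.

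The one step that requires care is passing from the coordinate-wise guarantee to the $\ell_2$ bound in the statement. A naive $\|\cdot\|_2 \leq \sqrt{d}\,\|\cdot\|_\infty$ conversion would incur an unwanted factor of $d$ in the sample complexity; the cleanest fix is to instead invoke a Hilbert-space version of Hoeffding (e.g., Pinelis' inequality) for i.i.d.\ random vectors with bounded $\ell_2$-norm, which delivers the $\ell_2$ bound directly without any $\sqrt{d}$ blow-up, and in which the $\ln d$ factor (if one wishes to match the stated form) can be absorbed into the absolute constant $C$. I expect this norm-conversion issue to be the main subtlety; once the right vector-valued concentration inequality is in hand, the remaining ingredients (boundedness of the summands and inversion of the tail bound for $n$) are routine.
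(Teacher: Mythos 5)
Your proposal is correct, and it takes a route that differs from the paper's in exactly the step you flag as the subtlety. The paper's proof is the coordinate-wise version of your first paragraph: a Chernoff/Hoeffding bound per coordinate with target accuracy $\epsilon/\sqrt{d}$ and failure probability $\delta/d$, followed by a union bound over the $d$ coordinates and the conversion $\|\cdot\|_2 \leq \sqrt{d}\,\|\cdot\|_\infty$. Note, however, that driving each coordinate's error down to $\epsilon/\sqrt{d}$ via Hoeffding requires $n \gtrsim d\,(\nabla f_{\max}/\epsilon)^2\ln(d/\delta)$, so the paper's argument, followed literally, picks up a factor of $d$ that is not visible in the stated sample complexity; the proof sketch simply does not surface this. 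Your fix --- invoking a Hilbert-space Hoeffding-type inequality (Pinelis) for i.i.d.\ vectors with $\ell_2$-norm bounded by $\nabla f_{\max}$ --- controls $\|\text{SG}^f_n(x)-\text{SG}^f_{\Sigma}(x)\|_2$ directly and dimension-freely, yielding $n \gtrsim (\nabla f_{\max}/\epsilon)^2\ln(1/\delta)$, which is dominated by the stated $C(\nabla f_{\max}/\epsilon)^2\ln(d/\delta)$; so your argument actually proves the proposition as written (and shows the $\ln d$ is unnecessary), whereas the paper's coordinate-wise route only proves it after inflating $n$ by $d$. What the paper's approach buys is elementarity (only scalar Hoeffding and a union bound); what yours buys is the correct, dimension-independent rate. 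Your observation that only $\EE_{a\sim\cN(x,\Sigma)}[\nabla f(a)] = \text{SG}^f_{\Sigma}(x)$ and boundedness of $\nabla f$ are used, so the result holds for arbitrary $\Sigma$, is also consistent with the statement and with the paper's intent.
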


We next examine how fast the output of C-LIME will converge to its expectation.
\begin{thm}
\label{thm:finite-lime}
Let $f:\mathbb{R}^d \rightarrow [-1,1]$ be a  function. Fix $x\in X$, $\epsilon > 0$ and $\delta > 0$. Let $S$ denote a sample of size $n$ from $\cN(x,\Sigma)$ for $\Sigma = \sigma^2\bI$ where $$n \geq C 
\frac{d\ln\left(\frac{d}{\delta}\right)}{\min(\epsilon \sigma^2 , \epsilon \sigma^3/\|x\|_2, \|x\|_2, 1/\sigma^2)^2 },$$ for some absolute constant $C$. Then with probability of at least $1-\delta$,  
$\|\text{C-LIME}^f_{\Sigma}(x)-\text{C-LIME}^f_n(x)\|_{2} \leq \epsilon.$
\end{thm}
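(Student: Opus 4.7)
The plan is to express the empirical C-LIME estimator as an ordinary least squares solution in closed form, and then show that it concentrates around the population version derived in Theorem~\ref{thm:equivalence} by concentrating its matrix and vector ingredients separately.

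First, I would write the empirical explanation in matrix form. Let $\tilde{a}_i = (a_i^\top , 1)^\top \in \Reals^{d+1}$ be the augmented sample, let $\hat{M} = \frac{1}{n}\sum_{i=1}^n \tilde{a}_i \tilde{a}_i^\top$ and $\hat{v} = \frac{1}{n}\sum_{i=1}^n \tilde{a}_i f(a_i)$, and let $M = \EE[\tilde a \tilde a^\top]$, $v = \EE[\tilde a f(a)]$. Then $\text{C-LIME}^f_n(x)$ is obtained by taking the first $d$ coordinates of $\hat{M}^{-1}\hat{v}$ (and similarly for $\text{C-LIME}^f_{\Sigma}(x)$ from $M^{-1}v$). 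Using the matrix identity
\begin{equation*}
\hat{M}^{-1}\hat{v} - M^{-1}v \;=\; \hat{M}^{-1}(\hat{v} - v) \;-\; \hat{M}^{-1}(\hat{M} - M)M^{-1}v,
\end{equation*}
the error splits into two pieces, each a product of an operator norm and a norm of a concentration quantity.

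Second, I would bound the two statistical fluctuations. For $\hat{M} - M$: the rows of the design matrix are Gaussian with mean $x$ and covariance $\sigma^2 \bI$, so standard empirical covariance concentration for Gaussians (e.g.\ Vershynin-style bounds or matrix Bernstein applied to the $(d+1)\times(d+1)$ block structure arising from the intercept row/column) gives $\|\hat M - M\|_{\mathrm{op}} \leq C(\sigma^2 + \|x\|_2^2)\sqrt{d\ln(d/\delta)/n}$ with probability $1-\delta/2$. For $\hat v - v$: since $f$ is bounded in $[-1,1]$ and each coordinate of $\tilde a$ is a sub-Gaussian scalar, the $i$'th coordinate of $\hat{v}-v$ is a mean-zero average of sub-Gaussian random variables (bounded variance on the order of $\sigma^2 + \|x\|_2^2$ for the $d$ feature coordinates and $1$ for the intercept coordinate); applying Hoeffding/Bernstein coordinatewise and union-bounding over $d+1$ coordinates gives $\|\hat v - v\|_2 \leq C \sqrt{(\sigma^2 + \|x\|_2^2)\, d \ln(d/\delta)/n}$.

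Third, I would control the operator norm of $\hat M^{-1}$ and $M^{-1}$. A direct computation using the block structure of $M$ (with blocks $\sigma^2\bI + xx^\top$, $x$, and $1$) and the Sherman–Morrison/Woodbury formula gives $\|M^{-1}\|_{\mathrm{op}} = O(\sigma^{-2})$ when $\|x\|_2 \leq O(\sigma)$ and grows like a polynomial in $\|x\|_2/\sigma^2$ otherwise; similarly $\|v\|_2 = O(\|x\|_2 + \sigma)$ since $f$ is bounded. For $\hat M^{-1}$, once $\|\hat M - M\|_{\mathrm{op}}$ is smaller than $\tfrac12\|M^{-1}\|_{\mathrm{op}}^{-1}$ (which determines a first lower bound on $n$), a standard Neumann-series perturbation argument gives $\|\hat M^{-1}\|_{\mathrm{op}} \leq 2\|M^{-1}\|_{\mathrm{op}}$.

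Finally, I would plug these bounds into the identity above and solve for the $n$ that makes the total error at most $\epsilon$. The two summands contribute separate terms of the form $\|\hat M^{-1}\| \cdot \|\hat v - v\|_2$ and $\|\hat M^{-1}\| \cdot \|\hat M - M\| \cdot \|M^{-1}\|\cdot \|v\|_2$, each of which scales like $\sqrt{d\ln(d/\delta)/n}$ times a polynomial in $\sigma$ and $\|x\|_2$. Setting both $\leq \epsilon$ yields the four thresholds appearing inside the $\min$ in the theorem statement. The main obstacle will be the careful bookkeeping in this last step: tracking the multiplicative dependence on $\sigma$ and $\|x\|_2$ through the perturbation-of-inverse argument (especially the $\epsilon\sigma^3/\|x\|_2$ term, which comes from the interaction of the intercept block of $M^{-1}$ with the size of $v$) is what produces the unusual min over four quantities in the sample-complexity bound.
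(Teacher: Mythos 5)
Your overall route is the same as the paper's: write both the empirical and expected C-LIME outputs in closed form as least-squares solutions, split the difference via a resolvent-type identity into (inverse second-moment matrix) $\times$ (fluctuation of the cross-moment vector) plus (perturbation of the inverse) $\times$ (size of the cross-moment vector), bound the matrix fluctuation by covariance/matrix-Bernstein concentration, the vector fluctuation by coordinatewise sub-Gaussian bounds with a union bound, control the inverse via a smallest-eigenvalue argument, and solve for $n$. The paper does exactly this (Weyl for $\|\EE[aa^\top]^{-1}\|_2\le 1/\sigma^2$, matrix Bernstein for $\lambda_{\min}$ of the empirical matrix, Koltchinskii--Lounici for $\|\hat M-M\|_2$, Chernoff/sub-Gaussian bounds for the vector terms).

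There is, however, one concrete gap in your plan: you keep the intercept coordinate in the design, so your $M=\EE[\tilde a\tilde a^\top]$ has the block form $\bigl(\begin{smallmatrix}\sigma^2\bI+xx^\top & x\\ x^\top & 1\end{smallmatrix}\bigr)$, whose smallest eigenvalue is of order $\sigma^2/(1+\|x\|_2^2)$ (take the direction $u=-tx/\|x\|_2^2$), not $\sigma^2$. You acknowledge that $\|M^{-1}\|_2$ then grows with $\|x\|_2/\sigma$, but your final step nevertheless claims the four thresholds in the theorem drop out; they do not, since already the term $\|\hat M^{-1}\|_2\,\|\hat v-v\|_2$ picks up extra factors polynomial in $\|x\|_2/\sigma$, giving a sample complexity strictly worse than the stated $\min(\epsilon\sigma^2,\epsilon\sigma^3/\|x\|_2,\|x\|_2,1/\sigma^2)^{-2}$ bound when $\|x\|_2$ is large. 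The paper sidesteps this by exploiting the convention that the reported explanation ignores the intercept: it works with the unaugmented matrix $\EE[aa^\top]=\sigma^2\bI+xx^\top$, for which Weyl gives $\lambda_{\min}\ge\sigma^2$ independently of $\|x\|_2$. Your plan needs the analogous step (center the samples, or drop the intercept row/column before inverting). Relatedly, your attribution of the $\epsilon\sigma^3/\|x\|_2$ threshold to the intercept block is off: in the paper it arises because the inverse-perturbation term must be at most $\epsilon/(4\|x\|_2)$ to offset $\|\frac{1}{n}\sum_b bf(b)\|_2\lesssim\|x\|_2$, which via the $1/\sigma^2$ bounds on the inverses forces $\|\hat M-M\|_2\lesssim\epsilon\sigma^4/\|x\|_2$ in the covariance-estimation step.
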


\begin{proof}[Proof sketch]
First observe that we can write the output of C-LIME both in expectation and in finite sample using the closed-form solution of ordinary least square as follows
\begin{align*}
    &\left\|\text{C-LIME}^f_{\Sigma}(x)-\text{C-LIME}^f_n(x)\right\|_{2} = \\&\left\|\EE[(a a^\top)]\inv\EE[a f(a)]-(\frac{1}{n}\sum_{b\in S} b b^\top)\inv (\frac{1}{n}\sum_{b\in S} b f(b))\right\|_2,
\end{align*}
where the expectations are with respect to $a\sim\cN(x,\Sigma)$ and we use $b$ to index a sampled data point in a sample $S$ of size $n$. By algebraic manipulation and applying Cauchy-Schwartz and triangle inequalities, the term above is bounded by
\begin{align*}
&\left\|\EE\left[a a^\top\right]\inv\right\|_{2}
\left\|\EE\left[a f(a)\right]-\frac{1}{n}\sum_{b\in S} b f(b)\right\|_{2}+\\&
\left\|\EE\left[a a^\top\right]\inv-\frac{1}{n}\left(\sum_{b\in S} b b^\top\right)\inv\right\|_{2}
\left\|\frac{1}{n}\sum_{b\in S}b f(b)\right\|_{2}.
\end{align*}
Therefore, it suffices to bound each of the 4 terms of the above equation separately. 
We show that the first term is bounded by $1/\sigma^2$ using Weyl's inequality. We then show that, with high probability, the second term is bounded  by
$\epsilon/(2\sigma^2)$ using Union bound, Sub-Gaussian and Chernoff concentration inequalities. 
By applying the Weyl'sm Cauchy-Schwartz inequalities, Bernstein inequality in the sub-exponential case for matrices~\cite{Tropp12} and covariance estimation techniques~\cite{koltchinskii2017}, we show that, with high probability, the third term is bounded by $\epsilon/(4\|x\|_2)$. Finally, we show that the last term is, with high probability, bounded by $2\|x\|_2,$ by using Union, Chernoff and Sub-Gaussian concentration bounds as well as Cauchy-Schwartz and triangle inequality.
Multiplying the 4 bounds and applying a Union bound, we witness the theorem's claim. See \ificml the full version \else Appendix~\ref{sec:omitted-finite-sample} \fi 
for details.
\end{proof}

Fixing $\sigma^2$ and $x$, the bound in Theorem~\ref{thm:finite-lime} has the standard $1/\epsilon^2$ dependency on the error parameter 
$\epsilon$ and $\ln(1/\delta)$ dependency on the probability of failure $\delta$. Fixing other parameters, the sample complexity increases as either $\sigma^2$ or 
$\|x\|_2$ approach 0 or grow larger and larger. In the large regime, the growth in the sample complexity is in line with the intuition that accurate estimates under higher variance scenarios require more samples. In the small regime, in our analysis, the bound on the norm of the inverse of the product matrices will grow with a rate that is proportional to $\sigma^2$ or $1/\|x\|_2$ causing the growth in the sample complexity. We empirically study this dependency in Section~\ref{sec:exp}.

\section{Additional Properties}
\label{sec:desiderata}
In this section we study additional properties that are satisfied by both SmoothGrad and C-LIME. We defer all the omitted proofs of this section to \ificml the full version. \else Appendix~\ref{sec:omitted-desiderata}.\fi

The first property that we study is linearity.
\begin{pro}[Linearity]
\label{pro:linearity}
Fix a covariance matrix $\Sigma\in \Reals^d \times \Reals^d$. For all $f,g:\Reals^d\to \Reals$, $d\in \Nats$ and $\alpha, \beta \in \Reals$ $$\text{SG}^{\alpha f+\beta g}_{\Sigma}=\alpha\text{SG}^{f}_{\Sigma}+\beta\text{SG}^{g}_{\Sigma},  \text{ and, } $$ $$\text{C-LIME}^{\alpha f+\beta g}_{\Sigma}=\alpha\text{C-LIME}^{f}_{\Sigma}+\beta\text{C-LIME}^{g}_{\Sigma}.$$
\end{pro}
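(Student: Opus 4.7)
The plan is to reduce both statements to standard linearity properties via the closed-form expressions established earlier. Since Theorem~\ref{thm:equivalence} already tells us that $\text{SG}^f_{\Sigma}(x)=\text{C-LIME}^f_{\Sigma}(x)=\Sigma^{-1}\,cov(a,f(a))$ with $a\sim\cN(x,\Sigma)$, it suffices to argue that the common right-hand side is linear in $f$; then both equalities in the proposition follow simultaneously. I would nevertheless present the SmoothGrad case directly from its definition, since that is the cleanest one-liner, and then invoke Theorem~\ref{thm:equivalence} to transfer the conclusion to C-LIME.

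For SmoothGrad, the plan is to start from $\text{SG}^{\alpha f+\beta g}_{\Sigma}(x)=\EE_{a\sim\cN(x,\Sigma)}[\nabla(\alpha f+\beta g)(a)]$, use linearity of the gradient operator to rewrite the integrand as $\alpha\nabla f(a)+\beta\nabla g(a)$, and then use linearity of expectation to split the integral into $\alpha\,\EE_{a\sim\cN(x,\Sigma)}[\nabla f(a)]+\beta\,\EE_{a\sim\cN(x,\Sigma)}[\nabla g(a)]=\alpha\,\text{SG}^{f}_{\Sigma}(x)+\beta\,\text{SG}^{g}_{\Sigma}(x)$. The only mild regularity point worth flagging is that the interchange of gradient and expectation (or, equivalently, the differentiability assumptions that license writing $\nabla(\alpha f+\beta g)=\alpha\nabla f+\beta\nabla g$ pointwise almost surely) is inherited from the hypotheses under which Theorem~\ref{thm:equivalence} was stated, so no extra assumption is needed.

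For C-LIME, I would give two equivalent arguments and pick whichever is shorter in the final write-up. The first is a direct invocation of Theorem~\ref{thm:equivalence}: since covariance is bilinear, $cov(a,\alpha f(a)+\beta g(a))=\alpha\,cov(a,f(a))+\beta\,cov(a,g(a))$, and left-multiplying by the fixed matrix $\Sigma^{-1}$ preserves this linear combination. The second, which avoids reusing Theorem~\ref{thm:equivalence} and therefore stands on its own, is to recall that $\text{C-LIME}^f_{\Sigma}(x)$ is the solution of an ordinary least squares problem whose design-matrix moments depend only on the sampling distribution $\cN(x,\Sigma)$ and whose response moments are linear functionals of the target function $f$; since the OLS solution is $(\EE[aa^{\top}])^{-1}\EE[a f(a)]$ with the first factor independent of $f$ and the second linear in $f$, linearity in $f$ is immediate.

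Neither step poses a real obstacle; the main thing to be careful about is simply to cite Theorem~\ref{thm:equivalence} exactly once so the proof is short and self-contained, and to note in passing that the identity holds pointwise in $x$ (so it holds as an identity of functions on $\Reals^d$, matching the statement of the proposition, which omits the argument $x$).
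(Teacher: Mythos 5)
Your proposal is correct and matches the paper's proof in essence: the paper likewise invokes Theorem~\ref{thm:equivalence} and then uses bilinearity of covariance to show $\Sigma^{-1}cov(a,f(a))$ is linear in $f$, which gives both identities at once. Your alternative one-liners (linearity of $\nabla$ and expectation for SmoothGrad, or of the OLS solution for C-LIME) are valid but only cosmetic variations on the same idea.
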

Linearity implies that the explanation of a more complex function that can be written as a linear combination of two simpler functions is simply the linear combination of the explanations of each of the simpler functions. This is useful e.g., in situations where computing explanations are computationally expensive and new explanations for linear compositions of functions can be simply derived by linear composition of the previously computed explanations.

The next property we study is proportionality. \begin{pro}[Proportionality]
\label{pro:proportionality}
Let $f: \Reals^d \to \Reals$ be a linear function of the form $f(x) =  \theta^\top x + b$
for $\theta\in \Reals^d$ and $b\in \Reals$. For any $x \in \Reals^d$ and $\Sigma\in \Reals^d \times \Reals^d$
\begin{equation*}
  \text{SG}^{f}_{\Sigma}(x) = \text{C-LIME}^{f}_{\Sigma}(x) = k(x) \theta,
\end{equation*}
for some function $k: \mathbb{R}^d \rightarrow \mathbb{R}$.
\end{pro}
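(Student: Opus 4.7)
My plan is to obtain both explanations in closed form via Theorem~\ref{thm:equivalence} and then collapse the resulting covariance vector for a linear $f$. By that theorem, for any invertible covariance matrix $\Sigma$,
$$\text{SG}^{f}_{\Sigma}(x) \;=\; \text{C-LIME}^{f}_{\Sigma}(x) \;=\; \Sigma^{-1}\,cov(a, f(a)),$$
where $a \sim \cN(x,\Sigma)$. So it suffices to evaluate the covariance vector $cov(a, f(a))$ for $f(a) = \theta^\top a + b$ and verify the result is proportional to $\theta$.

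For the $i$-th coordinate, $cov(a_i, f(a)) = cov\bigl(a_i,\, \theta^\top a + b\bigr) = \sum_{j} \theta_j\, cov(a_i, a_j) = (\Sigma\theta)_i$, since an additive constant drops out of a covariance and the covariance matrix of $a$ is $\Sigma$. Therefore $cov(a, f(a)) = \Sigma\theta$, and multiplying by $\Sigma\inv$ yields $\text{SG}^{f}_{\Sigma}(x) = \text{C-LIME}^{f}_{\Sigma}(x) = \theta$. This proves the claim with the trivial choice $k(x) \equiv 1$, which is in particular independent of $x$.

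The main obstacle is essentially nonexistent -- the statement is a one-line consequence of Theorem~\ref{thm:equivalence} combined with bilinearity of covariance. For completeness, one can also give a self-contained verification that bypasses Theorem~\ref{thm:equivalence}: for SmoothGrad, $\EE_{a\sim\cN(x,\Sigma)}[\nabla f(a)] = \EE_a[\theta] = \theta$ because $\nabla f$ is constant; for C-LIME, since $f$ is itself a linear model, the candidate $g = f$ achieves zero expected squared loss and is hence the (unique) minimizer over the class of linear models, with weight vector $\theta$ (uniqueness uses that $\EE[a a^\top] = \Sigma + x x^\top$ is positive definite when $\Sigma$ is). Both derivations produce the same answer, confirming that the scalar function $k$ in the statement can be taken to be the constant $1$.
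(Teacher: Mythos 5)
Your proof is correct, and its main computation is actually a bit sharper than the paper's. Both you and the paper begin by invoking Theorem~\ref{thm:equivalence} to collapse the two methods into one object, but from there the routes differ: the paper proves proportionality on the SmoothGrad side by a chain-rule computation of the gradient, writing $f$ as a function $g$ of the linear score $z=\theta^\top x + b$ so that $\nabla f(x) = \frac{\partial g}{\partial z}\,\theta$ pointwise, whence the Gaussian average is $k(x)\theta$ with $k$ left unspecified; this argument in fact covers the more general case $f = g(\theta^\top x + b)$, which is presumably why the statement only asserts proportionality rather than equality with $\theta$. You instead evaluate the closed form itself: bilinearity of covariance gives $cov(a,f(a)) = \Sigma\theta$, hence $\Sigma\inv cov(a,f(a)) = \theta$, i.e.\ $k \equiv 1$ for a genuinely linear $f$ --- a stronger and cleaner conclusion for the statement as written, and valid for any invertible $\Sigma$, matching the paper's generality. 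Your self-contained fallback ($\EE[\nabla f(a)]=\theta$ since the gradient is constant; C-LIME recovers $f$ exactly because it attains zero loss) is essentially the paper's route specialized to linear $f$. The only nitpick is in that fallback's uniqueness claim for C-LIME: since the model class includes an intercept, strict convexity requires positive definiteness of the augmented second-moment matrix $\EE\left[\begin{pmatrix} a \\ 1\end{pmatrix}\begin{pmatrix} a \\ 1\end{pmatrix}^\top\right]$ rather than of $\EE[aa^\top]$ alone; this does hold for a nondegenerate Gaussian, and in any case your primary argument via Theorem~\ref{thm:equivalence} does not rely on it.
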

Proportionality implies that when the underlying function is linear both SmoothGrad and LIME provide explanations that are proportional to the weights of the underlying function.
Although explaining the weights of a linear function with another set of weights might appear unnecessary, proportionality can be interpreted as a sanity check for explainability methods.~\citet{garreau2020looking} prove a weaker version of proportionality for C-LIME, where the multiplier $k$ might be different for each feature.

An immediate consequence of proportionality is that, in general, SmoothGrad and C-LIME do not provide sparse explanations (for e.g., when the underlying function $f$ is linear and non-sparse). In practice, sparsity can be promoted by adding a regularizer (i.e., by setting $\Omega(g)$ appropriately in our general setting). We study the regularized version of C-LIME in \ificml the full version. \else Appendix~\ref{sec:omitted-regular-sparse}.\fi

\section{Experiments}
\label{sec:exp}

In this section we evaluate our theoretical findings empirically on synthetic and real world datasets. We analyze the equivalence and robustness of SmoothGrad and C-LIME with respect to the number of perturbations. Finally we assess the sensitivity of these results to varying the hyperparameters such as the variance $\sigma^{2}$ in perturbations.

\subsection{Experimental Setup}
\label{sec:setup}

\textbf{Datasets:~} We  generate a synthetic dataset and use 2 real world classification datasets from the UCI Machine Learning Repository \cite{Dua:2019}. 

\xhdr{1.~Simulated}~ We simulate a 1000 sample classification dataset with a 2 dimensional feature space. We fix $y \in \{0,1\}$ randomly for each instance and sample $x \in \Reals^2$ from $\mathcal{N}(\mu_y, I_2)$ where $\mu_0 = [-1,-1]$ and $\mu_1 = [1,1]$. This results in the class clusters illustrated in Figure \ref{fig:sim_db}. 

\begin{figure}[ht!]
    \centering
    \includegraphics[width=0.45\textwidth]{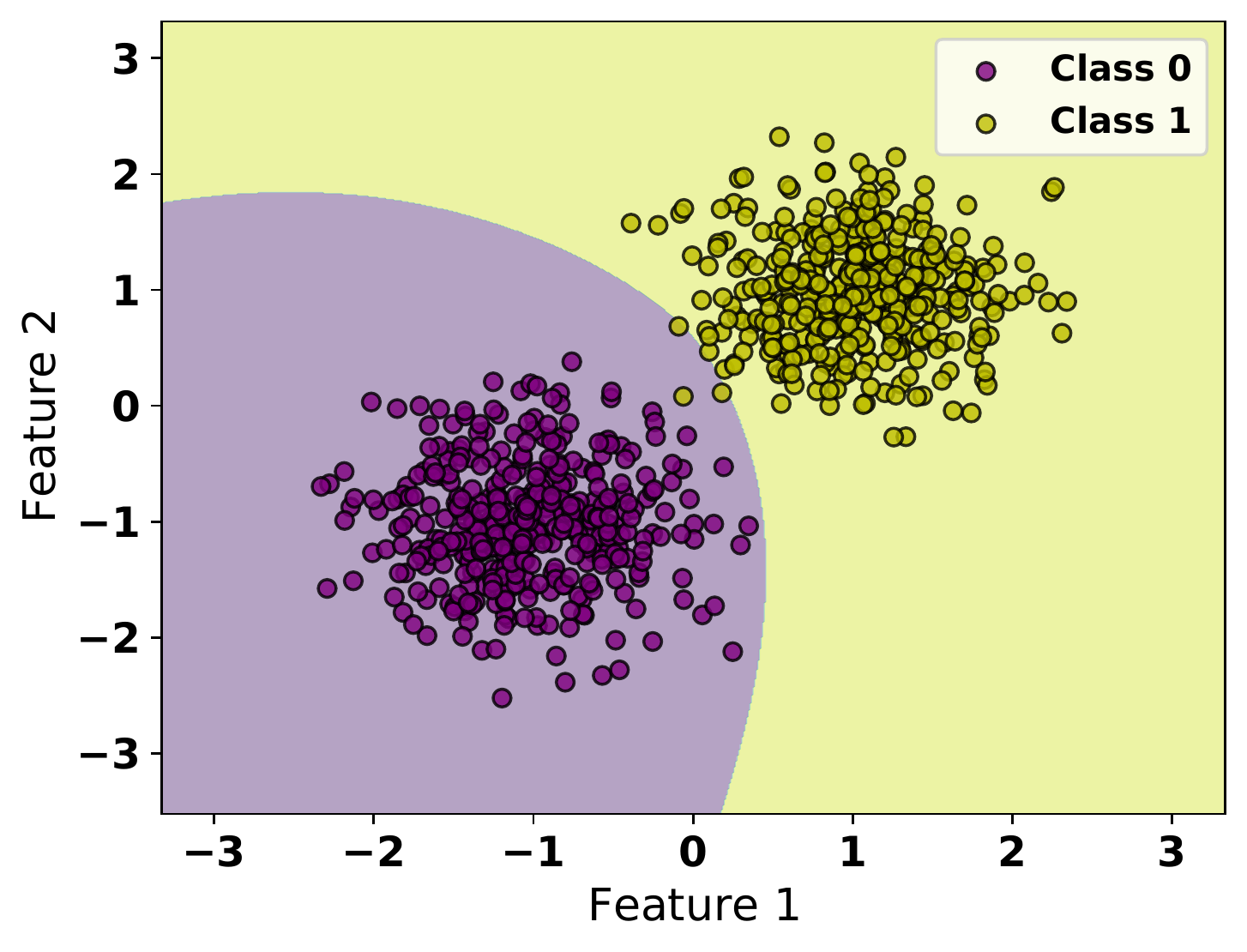}
    \caption{The decision boundary for the  model trained on the simulated data. Each point corresponds to a point in our dataset. Purple and yellow denote the data points with labels 0 and 1, respectively.
    }
    \label{fig:sim_db}
\end{figure}

\xhdr{2.~Bankruptcy}~ This dataset comprises of bankruptcy prediction of Polish companies~\cite{zikeba2016ensemble}.
The input attributes consist of features like net profit, sales and inventory from a pool of 10503 companies. We discard categorical features to align with our theory. As is standard practice when training neural networks, we normalize continuous features to $\mathcal{N}(0,1)$. 
Given the resulting 15 dimensional feature set, the classification task is to predict whether the company in interest will bankrupt or not. 

\xhdr{3.~Online Shopping}~ This dataset comprises 12330 instances of online shopping interactions \cite{sakar2018}. Each sample contains 10 numerical features like the number of pages shoppers visited, time they spend on a page, metrics from Google Analytics and similar. Like with the Bankruptcy dataset, we discard categorical variables and normalize continuous variables, resulting in an 11 dimensional feature space. The target variable for classification is whether an online interaction ends in a purchase or not.

We choose the Bankruptcy and Online Shopping datasets since they contain a large number of real-valued features as assumed by our theory.  

\textbf{Underlying Function:~} For all our experiments, we use a two layer neural network with ELU activation function and 10 nodes per hidden layer.
We follow the standard 80/20 dataset split, \textit{i.e.,} 80\% of the data was used for training the model while 20\% was used for testing. These are the underlying models (functions) that we are explaining in our experiments.
The models are trained using Adam optimizer using a cross-entropy loss function.
Our best performing models achieve a testing accuracy of 99.50\%, 96.30\%, and 99.8\%  using 15, 60, and 100 training epochs for the Simulated, Bankruptcy, and Online Shopping datasets, respectively. We also train models using fewer than the aforementioned training epochs to assess the the impact of  model accuracy on our equivalence and robustness guarantees. 

\textbf{Parameters:} 
Consistent with our theory, for any input point $x$, for both C-LIME and SmoothGrad we generate perturbations from a local neighborhood of $x$ by sampling points from $\cN(x, \sigma^2 \bI)$. We study the effect of the number of perturbations and the value of $\sigma^2$ in our experiments.

\begin{figure}[ht!]
	\centering
	\includegraphics[width=0.95\linewidth]{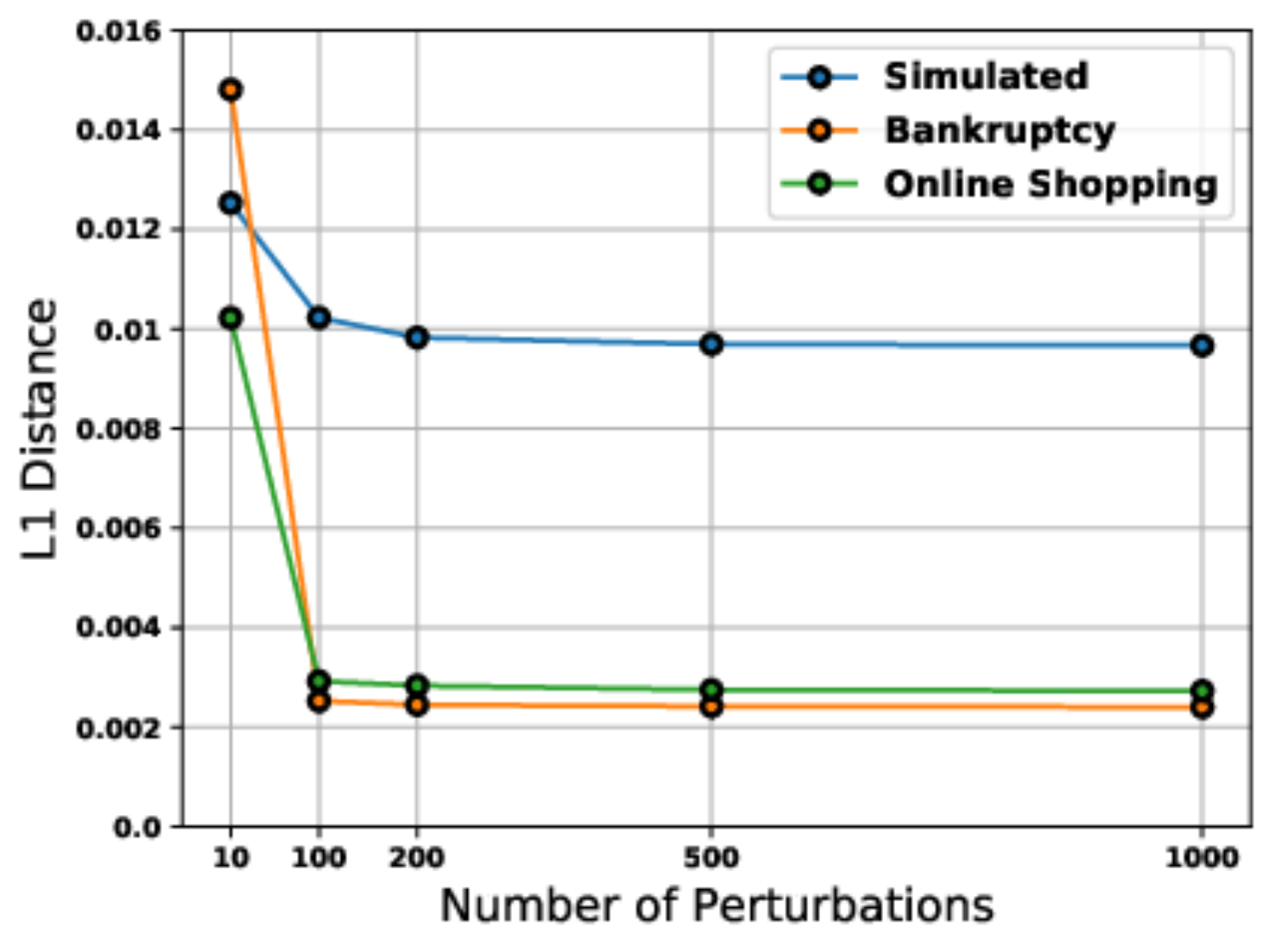}
	\caption{Equivalence plots showing that the $L1$ distance (Y axis) between the explanations of SmoothGrad  and C-LIME decreases as we increase the number of perturbations (X axis). Each curve corresponds to a different dataset.
	}
	\label{fig:equivalence}
\end{figure}

\subsection{Equivalence}
\label{sec:exp_equiv}
To evaluate the equivalence between SmoothGrad and C-LIME, we begin by generating explanations for each instance in the datasets' testing splits using $\sigma^{2}=1$. We probe the effect of varying $\sigma^{2}$ in Section~\ref{sec:sensitivity}. We measure the 
distance between SmoothGrad and C-LIME  explanations for each instance and then average these distances over the entire testing split. We repeat this process for different numbers of perturbations, plotting these average distances versus number of perturbations in Figure~\ref{fig:equivalence}.

\begin{figure*}[ht!]
    \centering
	\begin{subfigure}{0.33\linewidth}
		\centering
    	\includegraphics[width=0.9\linewidth]{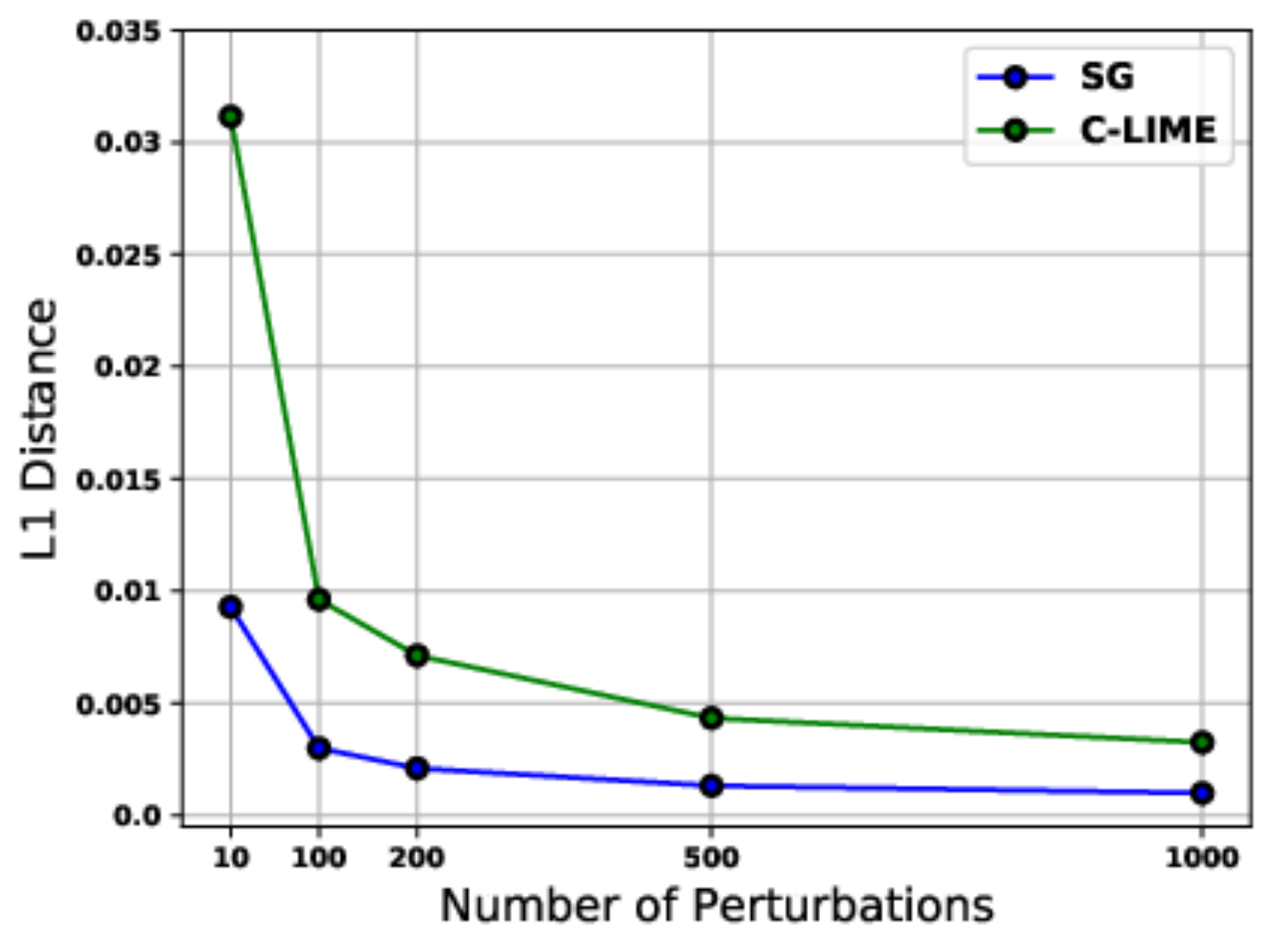}
    	\caption{Simulated}
        \label{fig:sim_robust}
	\end{subfigure}
	\begin{subfigure}{0.33\linewidth}
		\centering
    	\includegraphics[width=0.9\linewidth]{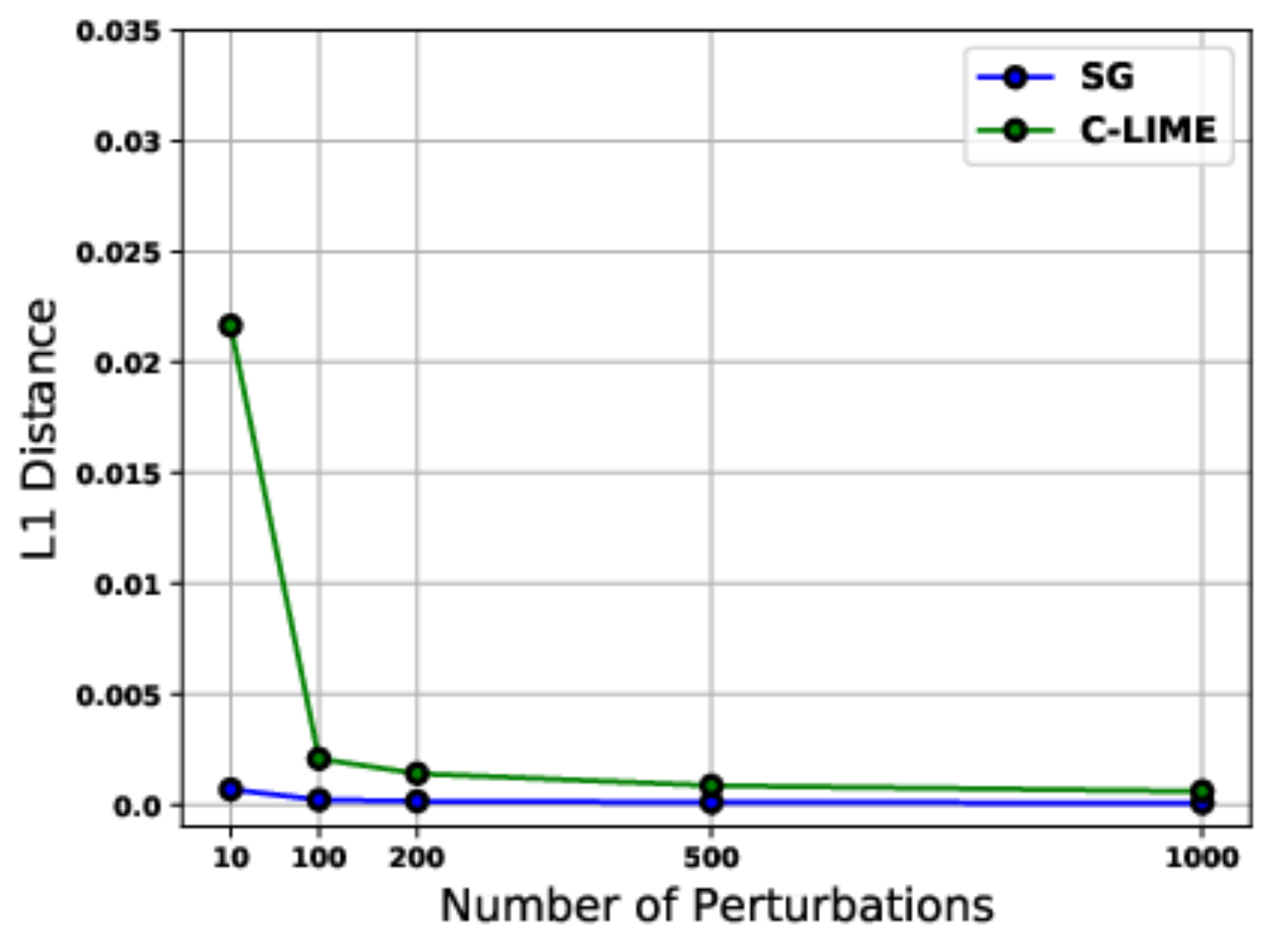}
    	\caption{Bankruptcy}
        \label{fig:bankr_robust}
	\end{subfigure}
	\begin{subfigure}{0.33\linewidth}
		\centering
    	\includegraphics[width=0.9\linewidth]{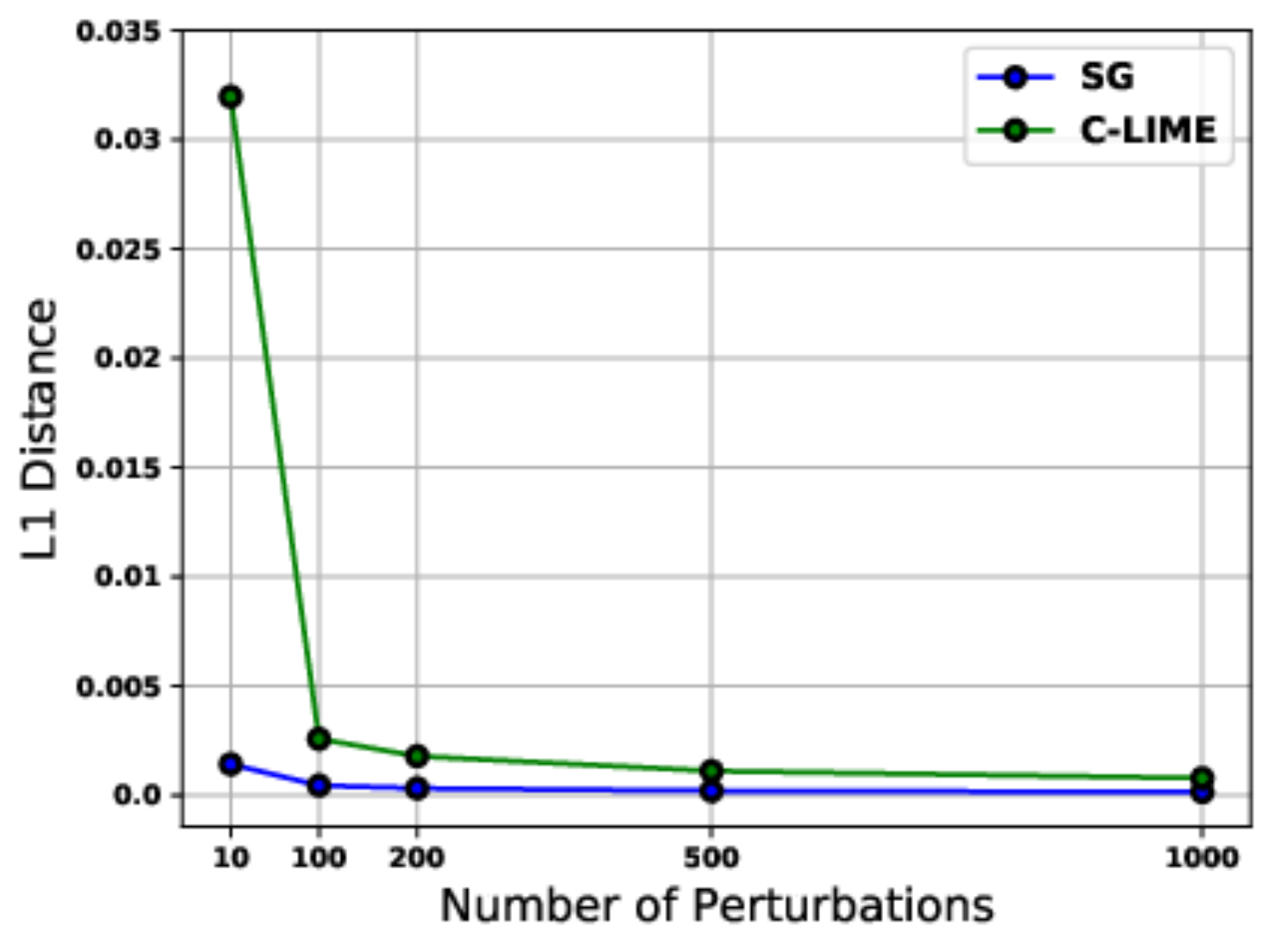}
    	\caption{Shopping}
        \label{fig:shop_robust}
	\end{subfigure}
	\caption{Robustness plots showing the maximum $L1$ distance (Y axis) between the explanations for the original and neighboring inputs averaged over the test data points as a function of number of perturbations used for explanations (X axis). Each plot corresponds to a different dataset. In each plot there are two curves: one for SmoothGrad and another one for C-LIME.
    }
	\label{fig:robustness}
\end{figure*}

We observe that across all three datasets, the average $L1$ distance between the explanations for  SmoothGrad and C-LIME decreases as we increase their respective number of perturbations, supporting equivalence. 
Interestingly, for all the three datasets the equivalence between the two explanation methods is achieved at as low as 100 perturbations. This is significantly lower than the finite perturbation estimates we derive in Proposition \ref{thm:fine-sg} and Theorem \ref{thm:finite-lime}, suggesting that in practice, explanations approach their expected value even with a small number of perturbations.

\subsection{Robustness}
\label{sec:exp_robust}
To evaluate the robustness of  SmoothGrad and C-LIME, we first take each instance $x$ in the testing splits and generate 10 nearby neighbors $x' \sim \mathcal{N}(x,0.01\bI)$. 
We compute explanations for each original instance and its neighbors by perturbations using $\sigma^2 = 1$. For each instance $x$, we compute the distance between the explanation for $x$ and the explanations for each of its neighboring points $x'$. We take the maximum of these distances and then take the average of these maximum distances over the entire testing split. A small value for this average maximum distance suggests that explanations are robust as it implies that the difference between explanations for an instance and its nearby neighbors is small. We compute this average maximum distance for various numbers of perturbations 
and plot them in Figure \ref{fig:robustness}.

The average maximum distance approaches zero across all three datasets, evidencing the robustness of both SmoothGrad and C-LIME. Notice that SmoothGrad appears to be more robust than C-LIME, with the average maximum distance saturating even closer to 0 than C-LIME. Furthermore, SmoothGrad saturates faster than C-LIME at perturbation numbers as small as 200. This suggests that SmoothGrad is more robust than C-LIME for fixed finite perturbations. 

\subsection{Sensitivity Analysis}
\label{sec:sensitivity}
We evaluate the sensitivity of our findings to varying parameters $\sigma^{2}$ and accuracy of the underlying function. 
\paragraph{Sensitivity to $\sigma^2$.}
We begin by evaluating the impact of varying $\sigma^2$ (variance on the perturbations) on our results. We choose to focus on the Bankruptcy dataset, generating the previously described equivalency and robustness plots for $\sigma^2 =$ 0.01, 0.1, and 1, as illustrated in Figure \ref{fig:ablation}. For additional analysis for other dataset refer to \ificml the full version. \else Appendix~\ref{sec:omitted-exp}. \fi 
Notice that SmoothGrad and   C-LIME converge to equivalence faster for smaller $\sigma^2$. Similarly, both SmoothGrad and C-LIME appear to achieve robustness faster for smaller $\sigma^2$. Both of these observations are intuitive as $\sigma^2$ controls the size of the local neighborhood used to generate perturbations. Our theory, on the other hand, predicts that the number of perturbations should increase as either $\sigma^2$ approaches 0 or becomes very large. We suspect that this is due to our style of analysis which requires worst-case bounds on quantities such as the inverse of sampled covariance matrix which hypothetically can grow as $\sigma^2$ approaches $0$.

\begin{figure*}[ht!]
	\begin{subfigure}{0.33\linewidth}
		\centering
    	\includegraphics[width=0.9\linewidth]{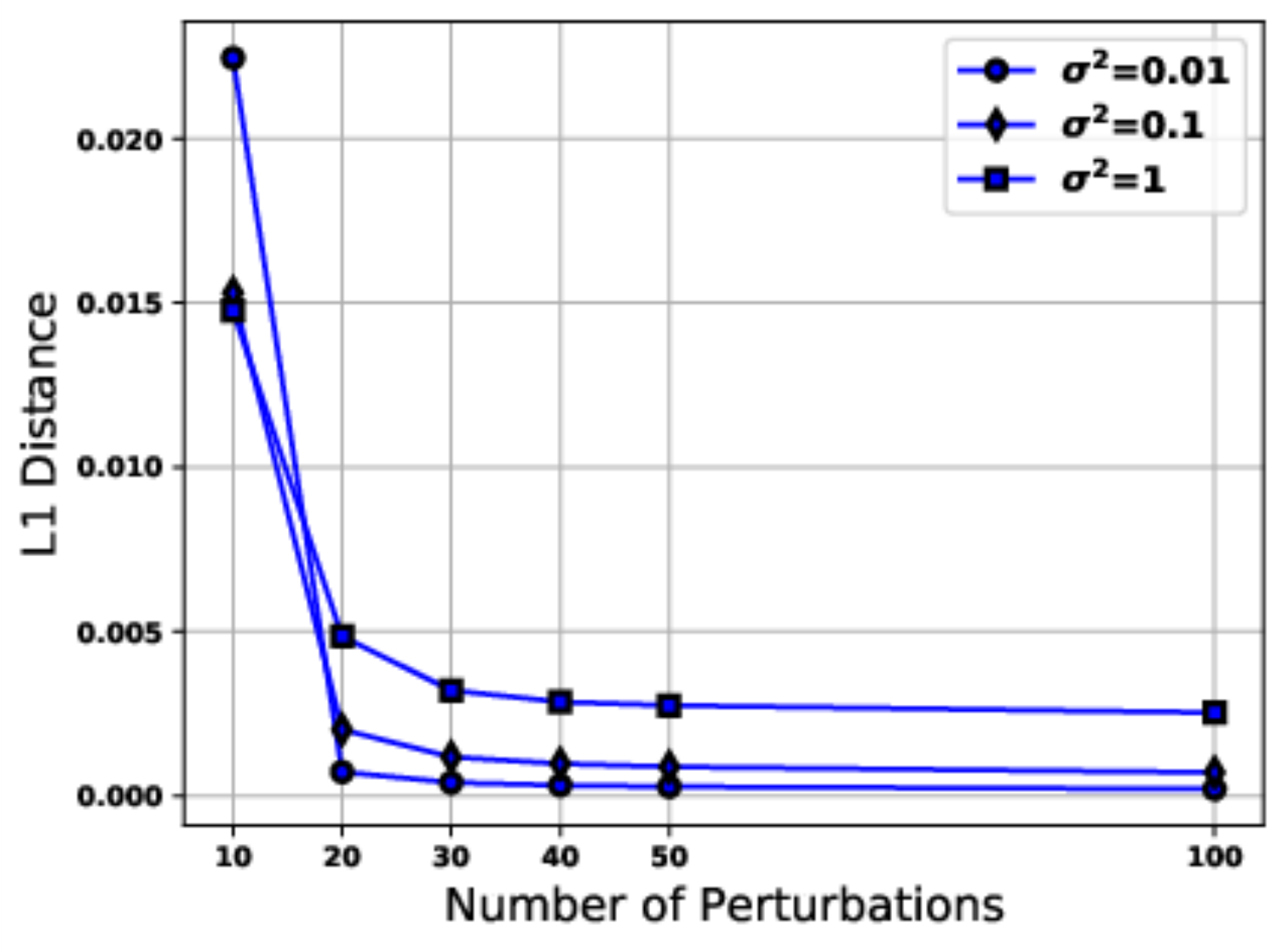}
    	\caption{Equivalence}
        \label{fig:sig_equiv}
	\end{subfigure}
	\begin{subfigure}{0.33\linewidth}
		\centering
    	\includegraphics[width=0.9\linewidth]{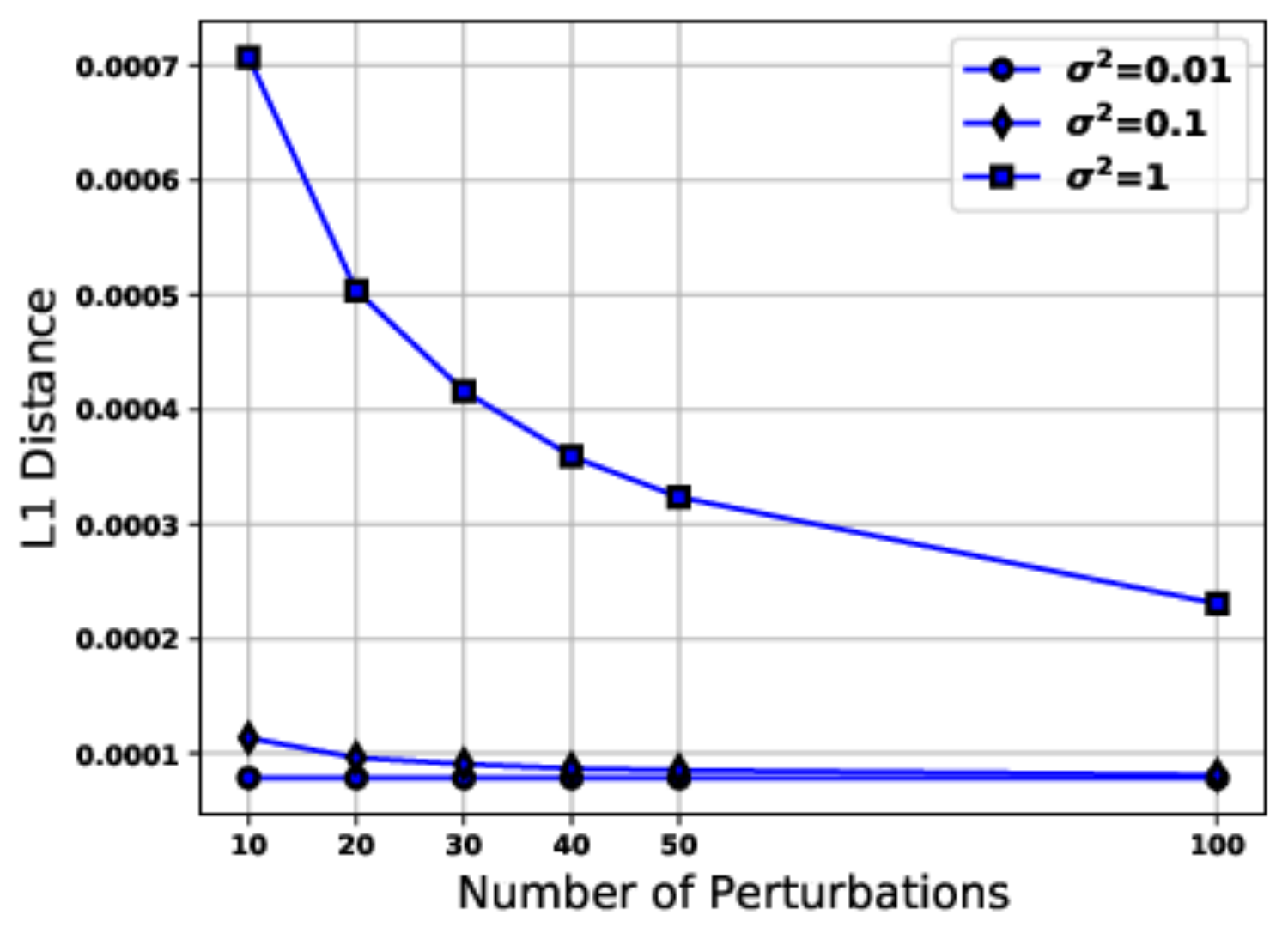}
    	\caption{Robustness of SmoothGrad}
        \label{fig:sig_sg_robust}
	\end{subfigure}
	\begin{subfigure}{0.33\linewidth}
		\centering
    	\includegraphics[width=0.9\linewidth]{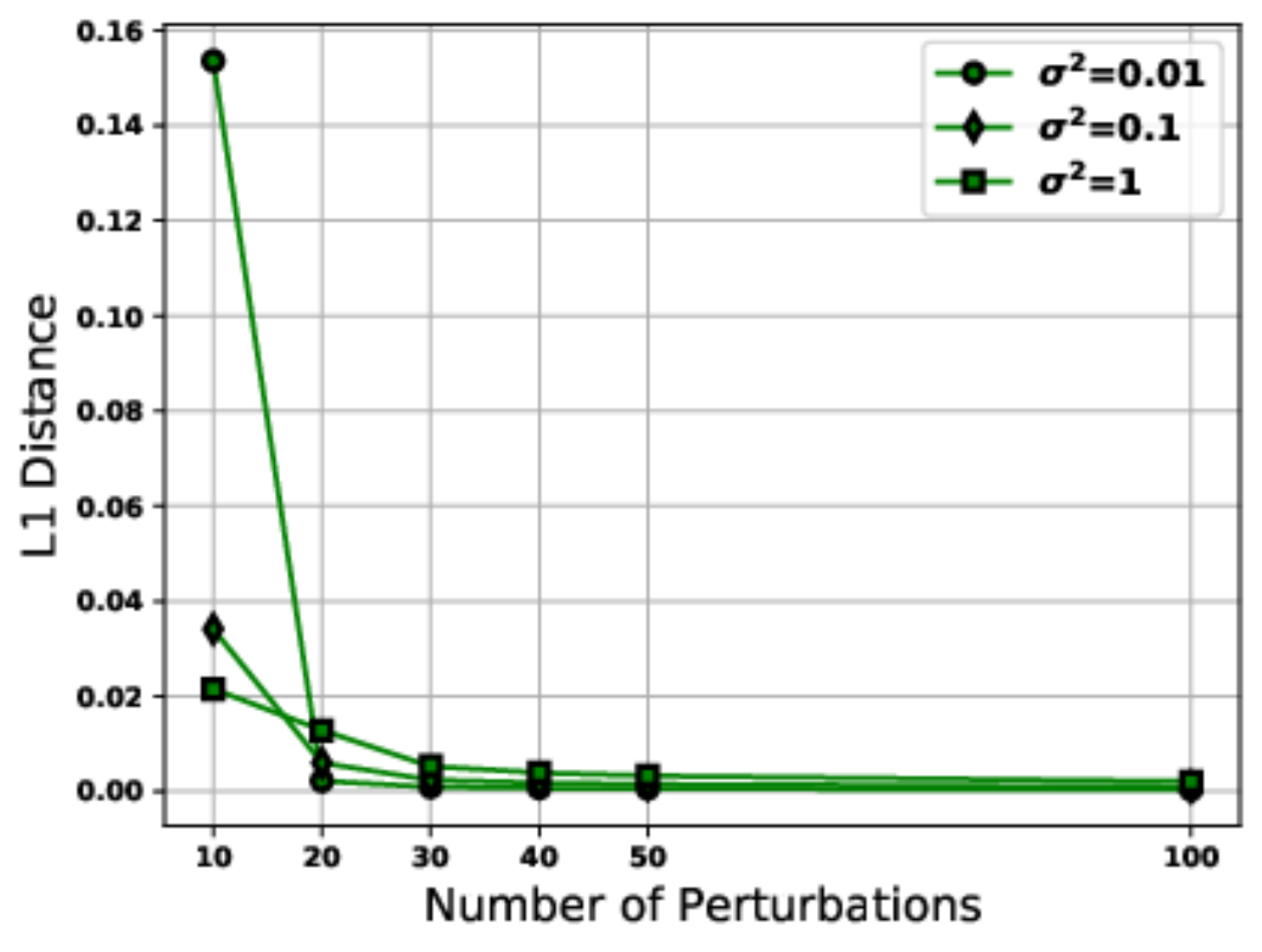}
    	\caption{Robustness of C-LIME}
        \label{fig:sig_lime_robust}
	\end{subfigure}	
	\caption{Equivalence (\ref{fig:sig_equiv}) and robustness plots for SmoothGrad~(\ref{fig:sig_sg_robust}) and C-LIME~(\ref{fig:sig_lime_robust}) for various $\sigma^2$ on the Bankruptcy dataset. In each plot the Y axis corresponds to L1 distance and the X axis corresponds to the number of perturbations.}
	\label{fig:ablation}
\end{figure*}

\paragraph{Sensitivity to Performance of Underlying Function.}
Finally, we analyze whether the performance of the underlying model hinders the equivalence or robustness of SmoothGrad and C-LIME. We modulate model performance by reducing the number of training epochs. We train a model with $86\%$ accuracy using 16 epochs and contrast it to our original model with $96\%$ accuracy after 60 epochs.  Again we choose to focus on the Bankruptcy dataset, generating the previously described equivalency and robustness plots for both  models, as illustrated in Figure \ref{fig:macc}. Interestingly, the convergence rates slow down as the performance of the model becomes worse. See \ificml the full version \else Appendix~\ref{sec:omitted-exp} \fi for more details.

\begin{figure*}[ht!]
	\begin{subfigure}{0.33\linewidth}
		\centering
    	\includegraphics[width=0.9\linewidth]{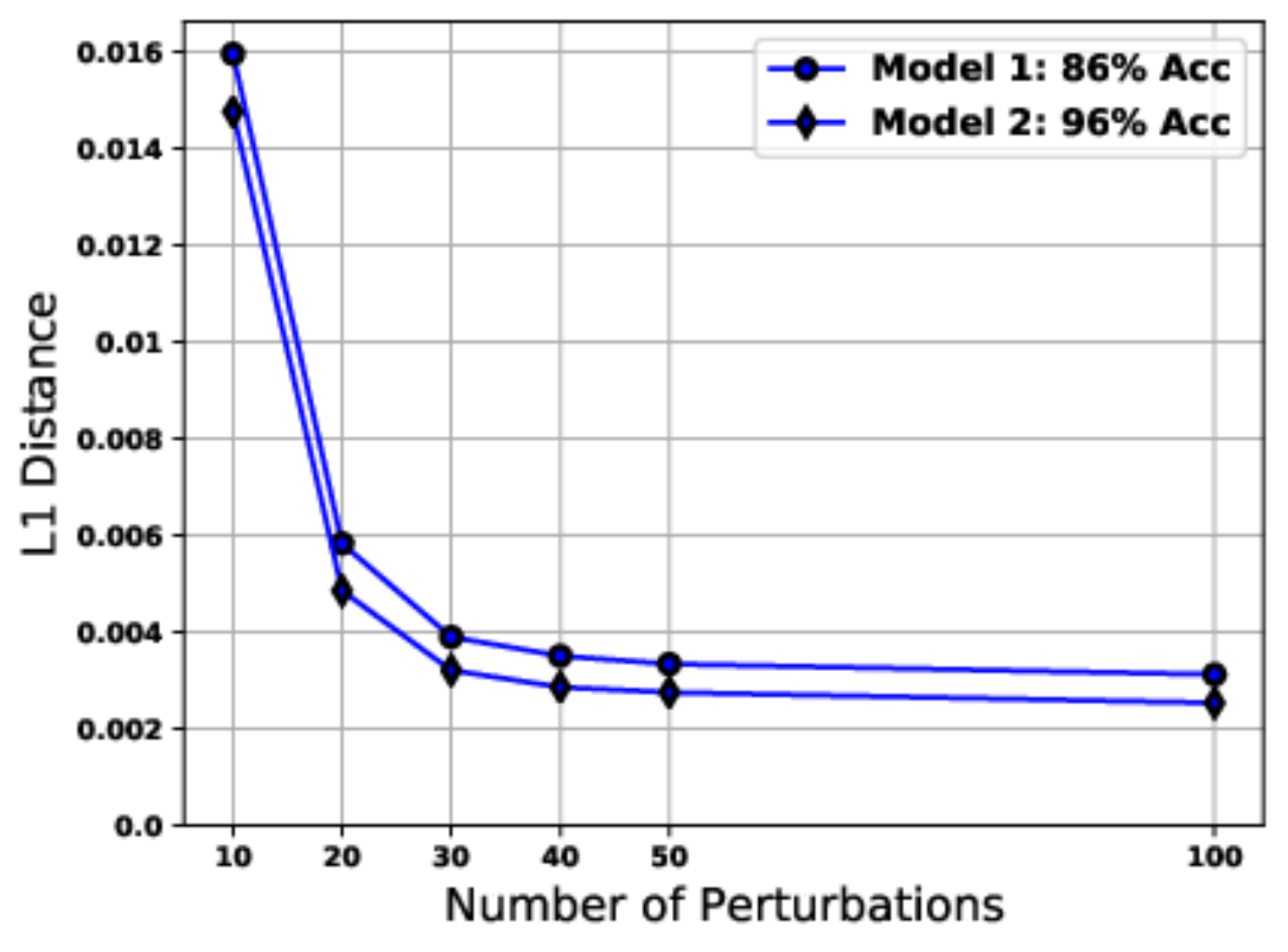}
    	\caption{Equivalence}
        \label{fig:macc_equiv}
	\end{subfigure}
	\begin{subfigure}{0.33\linewidth}
		\centering
    	\includegraphics[width=0.9\linewidth]{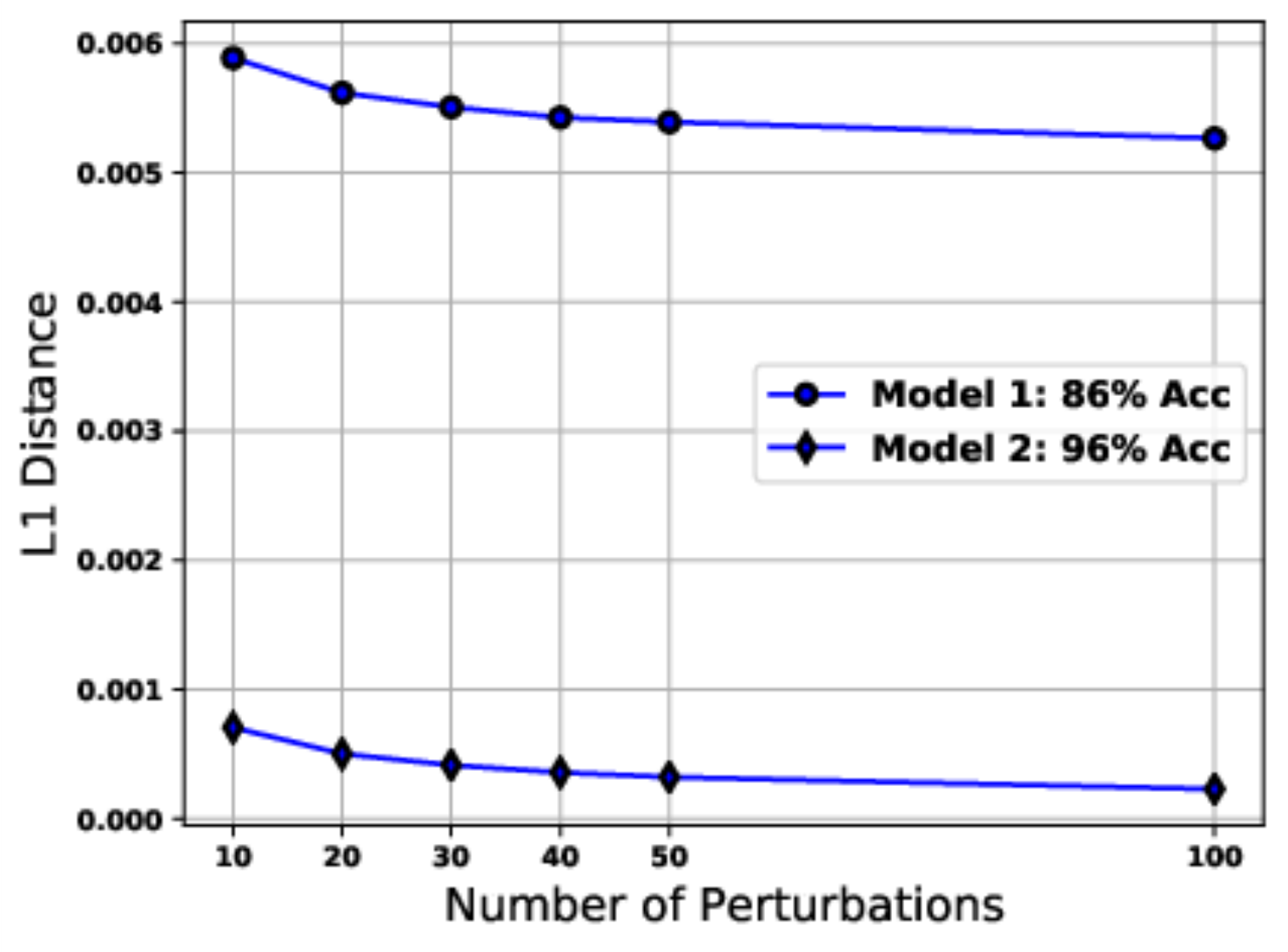}
    	\caption{Robustness of SmoothGrad}
        \label{fig:macc_sg_robust}
	\end{subfigure}
	\begin{subfigure}{0.33\linewidth}
		\centering
    	\includegraphics[width=0.9\linewidth]{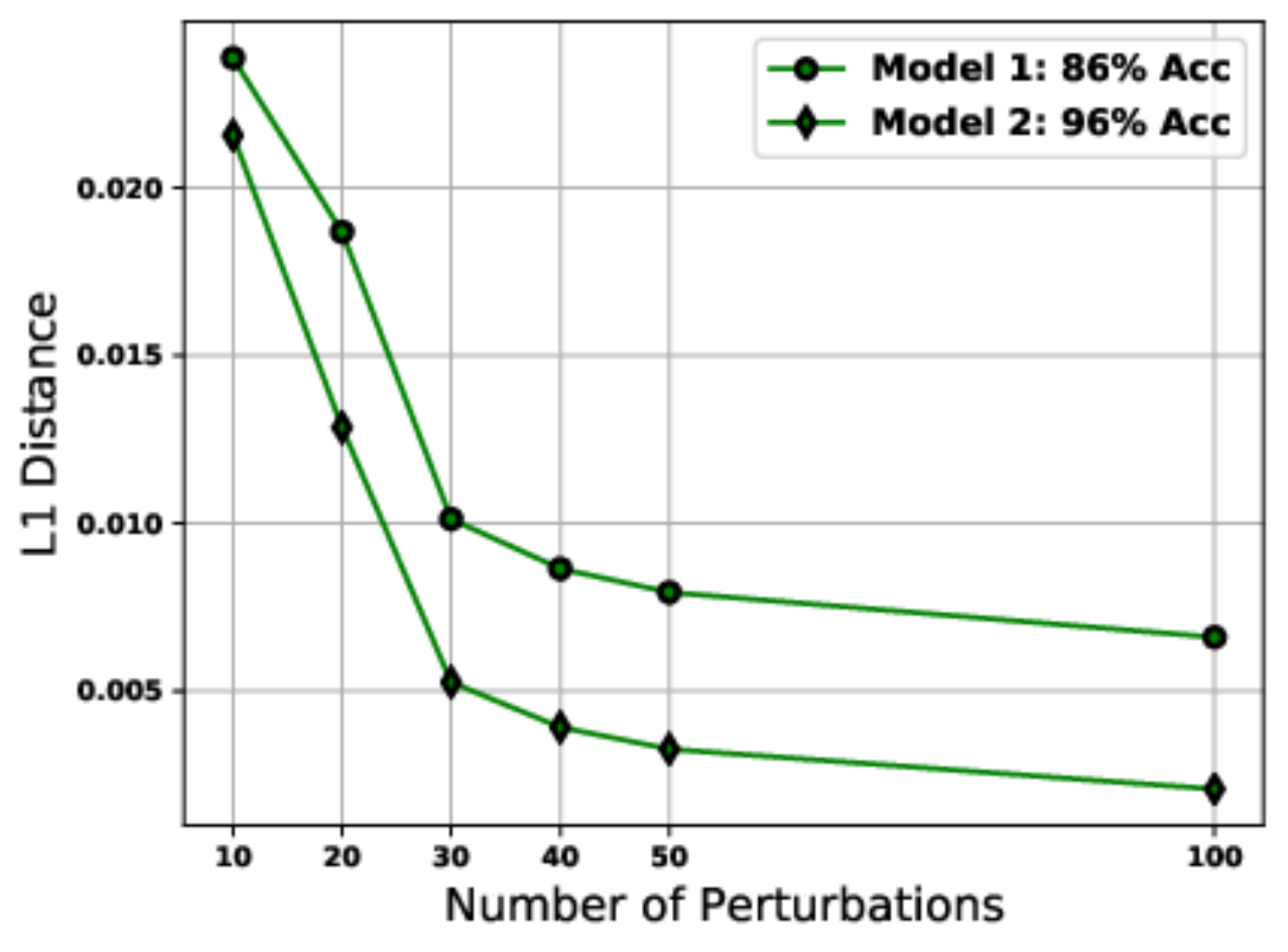}
    	\caption{Robustness of C-LIME}
        \label{fig:macc_lime_robust}
	\end{subfigure}	
	\caption{Equivalence (\ref{fig:macc_equiv}) and robustness plots for SmoothGrad~(\ref{fig:macc_sg_robust}) and C-LIME~(\ref{fig:macc_lime_robust}) for functions with various accuracy on the Bankruptcy dataset. In each plot the Y axis corresponds to L1 distance and the X axis corresponds to the number of perturbations.}
	\label{fig:macc}
\end{figure*}

\section{Related Work}
\label{sec:related}
Interpretability research can be categorized into learning inherently interpretable models, and constructing post hoc explanations. We provide an overview below. 

\xhdr{Inherently Interpretable Models} Many approaches have been proposed to learn inherently interpretable models, for various tasks including classification and clustering. 
To this end, various classes of models such as decision trees, decision lists~\cite{letham15:interpretable}, decision sets~\cite{lakkaraju16:interpretable}, prototype based models~\cite{bien2009classification, kim14:the-bayesian}, and generalized additive models~\cite{lou2012intelligible,caruana15:intelligible} were proposed. However, complex models such as deep neural networks often achieve higher accuracy than simpler  models~\cite{ribeiro2016should}; thus, there has been a lot of interest in constructing post hoc explanations to understand their behavior. 

\xhdr{Post Hoc Explanations} Several techniques have been proposed in recent literature to construct \emph{post hoc explanations} of complex decision models. These techniques 
differ in their access to the complex model (i.e., black box vs. access to internals), scope of approximation (e.g., global vs. local), search technique (e.g., perturbation-based vs. gradient-based), and basic units of explanation (e.g., feature importance vs. rule based). In addition to LIME~\cite{ribeiro2016should} and SHAP~\cite{lundberg2017unified}, there are several other \emph{model-agnostic}, \emph{local explanation} approaches that explain individual predictions of black box models such as Anchors, BayesLIME and BayesSHAP~\cite{ribeiro2018anchors,slack2020i,koh2017understanding}. Several of these approaches rely on input perturbations to learn interpretable local approximations. 

Other local explanation methods including SmoothGrad have been proposed to compute \emph{saliency maps} which capture local feature importance for an individual prediction by computing the gradient at that particular instance~\cite{simonyan2013saliency, sundararajan2017axiomatic, selvaraju2017grad,smilkov2017smoothgrad}. 
There has also been recent work on constructing \emph{counterfactual explanations} which capture what changes need to be made to a given instance in order to flip its prediction~\cite{wachter2017counterfactual,ustun2019actionable,MACE,FACE,looveren2019interpretable, Barocas_2020, karimi2020algorithmic, karimi2020causal}. Such explanations can be leveraged to provide recourse to individuals negatively impacted by algorithmic decisions.
An alternate approach is to construct \emph{global explanations} for summarizing the complete behavior of any given black box by approximating it using interpretable models~\cite{lakkaraju19:faithful,bastani2017interpretability,kim2018interpretability}. 

\xhdr{Analyzing Post Hoc Explanations} Recent work has shed light on the downsides of post hoc explanation techniques. For instance, \citet{rudin2019stop} argued that post hoc explanations are not reliable, as these explanations are not necessarily faithful to the underlying models and present correlations. There has also been recent work on empirically exploring vulnerabilities of black box explanations~\cite{adebayo2018sanity,slack2019can,lakkaraju2020how,rudin2019stop,dombrowski2019explanations}---e.g., \citet{ghorbani2019interpretation} demonstrated that post hoc explanations may not be \robust, changing drastically even with small perturbations to inputs~\cite{alvarez2018robustness}. In addition to the above works, there has also been some recent research that focuses on theoretically analyzing the robustness~\cite{levine2019certifiably,pmlr-v119-chalasani20a}, and other properties~\cite{garreau2020looking} of some of the popular post hoc explanation techniques. However, these works do not attempt to explore deeper connections between different classes of these techniques.

\section{Future Work}
\label{sec:discussion}
We initiate a study on the unification of perturbation and gradient based post hoc explanations,
and pave the way for several promising research directions. It would be interesting to establish connections between other perturbation and gradient based explanations such as SHAP or Integrated gradients. It would also be interesting to study how perturbation and gradient based methods relate to counterfactual explanations. Furthermore, we mainly focused on the analysis of feature attribution methods. It would be exciting to analyze other kinds of explanation methods such as rule based or prototype based methods.

\section*{Acknowledgement}
We sincerely thank Amur Ghose
for invaluable feedback about the proofs of Theorem~\ref{thm:finite-lime} and \ificml Lemmas~3~and~6. \else Lemmas~\ref{LIMEclosed}~and~\ref{LIMEregclosed}.~\fi We also thank Hadi Elzayn for helpful discussion about the proof of Theorem~\ref{thm:finite-lime}. Finally, we thank the anonymous ICML reviewers for their insightful feedback. This work is supported in part by the NSF awards \#IIS-2008461 and \#IIS-2040989, NSF FAI award, Harvard's Center for Research on
Computation and Society, Harvard's Data Science Institute, Amazon, Bayer, and Google. The views expressed are those of the authors and do not reflect the official policy or position of the funding agencies


\bibliographystyle{icml2021}
\bibliography{bib}
\clearpage\newpage
\ificml
\else
\appendix
\section{Proofs of Section~\ref{sec:equiv-robust}}
\label{sec:omitted-equiv-robust}

\subsection{Equivalence}
We separately derive the closed form expressions for the expected outputs of SmoothGrad and C-LIME in Lemmas \ref{SGclosed} and \ref{LIMEclosed}, respectively. 

\paragraph{Closed Form for SmoothGrad.}\label{SG}

We first state Stein's Lemma, that helps us find the closed form expression for the output of SmoothGrad at a point in the large sample limit.

\begin{lemma}[Multivariate version of Stein's Lemma \cite{liu1994siegel}]
\label{lem:stein}
Let $X = \left(X_1, . . . , X_d\right)$ be multivariate normally distributed random variable with mean vector $\mu$ and covariance matrix $\Sigma$. For any function $h: \mathbb{R}^d \rightarrow \mathbb{R}$ such that $\partial h/\partial x_i$ exists almost everywhere, and $\EE \left|\partial h(x)/\partial x_i \right| < \infty, i = 1,...,n$, we write $\nabla h(X) = \left( \partial h/\partial x_1 ,\cdots, \partial h/\partial x_d\right)^\top$. Then 
\begin{equation*}
    cov\left(X, h(X)\right) = \Sigma \,  \EE[ \nabla h(X)].
\end{equation*}{}
\end{lemma}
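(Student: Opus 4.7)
The plan is to verify the identity coordinate-by-coordinate via integration by parts against the multivariate Gaussian density. Let $\phi(x) = (2\pi)^{-d/2}|\Sigma|^{-1/2}\exp(-\tfrac12(x-\mu)^\top \Sigma^{-1}(x-\mu))$ denote the density of $X$. The engine of the argument is the score identity $\nabla_x \phi(x) = -\Sigma^{-1}(x-\mu)\,\phi(x)$, obtained by directly differentiating the exponent; rearranging yields $(x-\mu)\,\phi(x) = -\Sigma\,\nabla_x \phi(x)$, which expresses the recentered variable as, up to sign and a factor of $\Sigma$, the gradient of the density itself. This is the reason multiplication by $X-\mu$ inside an expectation can be traded for differentiation of $h$.

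First I would unwrap the definition: since $\EE[X-\mu]=0$, we have $\mathrm{cov}(X,h(X)) = \EE[(X-\mu)\,h(X)] = \int (x-\mu)\,h(x)\,\phi(x)\,dx$. Applying the score identity converts the integrand and pulls out a constant $-\Sigma$, leaving $-\Sigma \int h(x)\,\nabla_x \phi(x)\,dx$. The next step is to transfer the gradient from $\phi$ onto $h$ via integration by parts. Using Fubini to reduce to a one-dimensional computation in each coordinate $x_i$, the Gaussian tail decay of $\phi$ together with the hypothesis $\EE|\partial h/\partial x_i|<\infty$ is exactly what is needed to make the boundary terms at $\pm\infty$ vanish, producing $\int h(x)\,\nabla_x \phi(x)\,dx = -\int \phi(x)\,\nabla h(x)\,dx$. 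Combining the two steps yields $\mathrm{cov}(X,h(X)) = \Sigma\,\EE[\nabla h(X)]$, which is the desired identity.

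The main technical obstacle is rigorously justifying the integration by parts when $h$ is only assumed differentiable almost everywhere rather than everywhere. My plan is to first establish the identity for smooth, compactly supported $h$, where classical integration by parts applies directly, and then extend to the stated class by mollification: set $h_\epsilon = h \ast \eta_\epsilon$ for a standard smooth bump $\eta_\epsilon$, apply the identity to each $h_\epsilon$, and pass to the limit using dominated convergence (with the integrability hypothesis furnishing the dominating function). An equivalent route is to work directly in the Gaussian Sobolev space, where weak derivatives behave well because the Gaussian measure endows every coordinate with finite moments of all orders. Either route reduces the lemma to a clean one-dimensional integration-by-parts identity carried out separately in each of the $d$ directions, with the score identity providing the only genuinely multivariate ingredient.
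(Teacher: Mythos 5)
The paper itself does not prove this lemma: it is imported verbatim from the cited sources (Liu, 1994; Landsman and Ne\v{s}lehov\'a, 2008) and used as a black box to derive the closed form of SmoothGrad, so there is no in-paper argument to compare against. Your route --- the Gaussian score identity $\nabla_x\phi(x)=-\Sigma^{-1}(x-\mu)\phi(x)$, rewriting $\mathrm{cov}(X,h(X))=\EE[(X-\mu)h(X)]$, and transferring the derivative from $\phi$ to $h$ by coordinatewise integration by parts --- is the standard direct proof, and its first two steps are sound (note only that the covariance being well defined already presupposes $\EE|h(X)|<\infty$ and $\EE|X_ih(X)|<\infty$, and that the vanishing of the boundary terms is most cleanly obtained via the tail-integral/Fubini representation $\int_x^\infty(t-\mu_i)\phi\,dt$ rather than by asserting it from Gaussian decay plus $\EE|\partial_i h|<\infty$ alone).

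The genuine gap is in your extension step. Under the hypothesis as literally stated --- $\partial h/\partial x_i$ exists almost everywhere and is integrable --- the identity is in fact false: take $d=1$, $X\sim\cN(0,1)$, and $h$ a Cantor-type (devil's staircase) function, constant outside $[0,1]$; then $h'=0$ a.e.\ and $\EE|h'|=0<\infty$, yet $\mathrm{cov}(X,h(X))>0$ since $h$ is nondecreasing and nonconstant. Consequently the mollification argument cannot be completed as described: $\nabla(h\ast\eta_\epsilon)=h\ast\nabla\eta_\epsilon$ converges to the \emph{distributional} derivative of $h$ (a measure, possibly with a singular part), not to the pointwise a.e.\ gradient, and integrability of the a.e.\ gradient does not furnish a dominating function for dominated convergence. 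What is needed --- and what the cited statements implicitly assume, and your alternative ``Gaussian Sobolev space'' route tacitly builds in --- is absolute continuity of $h$ along coordinate lines, i.e.\ $h\in W^{1,1}_{\mathrm{loc}}$ with the a.e.\ gradient equal to the weak gradient. With that strengthened hypothesis your argument closes (then $(\nabla h)\ast\eta_\epsilon\to\nabla h$ in $L^1_{\mathrm{loc}}$ and the limit passage is legitimate), and the gap is immaterial for the paper's application, where $h$ is a differentiable (indeed locally Lipschitz) model; but as a proof of the lemma as stated, the limiting step is exactly where it breaks.
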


We can use Lemma~\ref{lem:stein} to derive the closed form expression for the output of SmoothGrad at a point.

\begin{lemma}\label{SGclosed}
Let $f:\mathbb{R}^d \rightarrow \mathbb{R}$ be a function. Then, for any $x\in X$ and invertible covariance matrix $\Sigma\in \Reals^d \times \Reals^d$,
\begin{equation*}
    \text{SG}^f_{\Sigma}(x) = \Sigma^{-1} cov\left(a,f(a)\right),
\end{equation*}
where $a$ is a random input drawn from  $\cN(x, \Sigma)$, and $cov\left(a,f(a)\right)$ is a vector with the $i$'th entry corresponding to the covariance of $f(a)$ and $i$'th feature of $a$.
\end{lemma}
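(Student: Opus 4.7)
The plan is to obtain the closed form as an essentially one-line consequence of the multivariate Stein's Lemma (Lemma~\ref{lem:stein}), which has just been stated. The structure of Stein's Lemma is perfectly matched to what we want: it produces a covariance on one side and an expected gradient on the other, and the expected gradient is precisely the definition of $\text{SG}^f_{\Sigma}(x)$.

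Concretely, I would first recall that by definition
\begin{equation*}
\text{SG}^f_{\Sigma}(x) \;=\; \mathbb{E}_{a \sim \cN(x,\Sigma)}\bigl[\nabla f(a)\bigr].
\end{equation*}
Then I would apply Lemma~\ref{lem:stein} with the multivariate normal random vector $a \sim \cN(x,\Sigma)$ playing the role of $X$ (so $\mu = x$) and with $h = f$. The hypotheses of Stein's Lemma ask that $\partial f / \partial x_i$ exist almost everywhere and that each $\mathbb{E}|\partial f/\partial x_i|$ is finite; I would note that these follow from our standing assumptions on $f$ in Section~\ref{sec:equiv-robust} (in particular, later results such as Theorem~\ref{robust} assume a bounded gradient, which is more than enough here). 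Stein's Lemma then yields
\begin{equation*}
\text{cov}\bigl(a, f(a)\bigr) \;=\; \Sigma \,\mathbb{E}\bigl[\nabla f(a)\bigr] \;=\; \Sigma \cdot \text{SG}^f_{\Sigma}(x).
\end{equation*}

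Since $\Sigma$ is assumed invertible, I would left-multiply by $\Sigma^{-1}$ to conclude
\begin{equation*}
\text{SG}^f_{\Sigma}(x) \;=\; \Sigma^{-1}\,\text{cov}\bigl(a,f(a)\bigr),
\end{equation*}
which is the claim of the lemma. The coordinatewise description of $\text{cov}(a,f(a))$ as the vector whose $i$'th entry is $\text{cov}(f(a), a_i)$ is immediate from the definition of covariance of a vector with a scalar.

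There is no genuine obstacle in this argument, since Stein's Lemma does all the nontrivial work of relating an expected gradient to a covariance under a Gaussian. The only care needed is in articulating the regularity assumptions on $f$ so that Stein's Lemma applies; if one wanted a fully self-contained statement, one could make explicit the mild smoothness/integrability hypothesis on $f$ (e.g.\ bounded gradient), which is already standard in the setting of this paper and is invoked in the subsequent robustness result.
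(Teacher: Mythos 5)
Your proof is correct and matches the paper's argument exactly: apply the multivariate Stein's Lemma with $X = a \sim \cN(x,\Sigma)$ and $h = f$ to get $cov(a,f(a)) = \Sigma\,\EE[\nabla f(a)] = \Sigma\,\text{SG}^f_{\Sigma}(x)$, then invert $\Sigma$. Your added remark about the regularity/integrability hypotheses needed for Stein's Lemma is a reasonable extra precaution that the paper leaves implicit.
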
{}

\begin{proof}
From Lemma 1, it follows that
\begin{align*}
    &cov\left(a, f(a)\right) = \Sigma \,  \EE[ \nabla f(a)]
   = \Sigma \, \text{SG}^f_{\Sigma}(x)\\&\implies 
   \text{SG}^f_{\Sigma}(x) = \Sigma^{-1} cov\left(a, f(a)\right).
\end{align*}
\end{proof}{}

\paragraph{Closed Form for C-LIME.}\label{LIME}
The below lemma states the closed form expression for the output of C-LIME at a point.

\begin{lemma}\label{LIMEclosed}
Let $f:\mathbb{R}^d \rightarrow \mathbb{R}$ be a function. Then, for any $x\in X$ and invertible covariance matrix $\Sigma\in \Reals^d \times \Reals^d$,
\begin{equation*}
    \text{C-LIME}^f_{\Sigma}(x) = \Sigma^{-1} cov\left(a,f(a)\right), 
\end{equation*}
where $a$ is a random input drawn from  $\cN(x, \Sigma)$, and $cov\left(a,f(a)\right)$ is a vector with the $i$'th entry corresponding to the covariance of $f(a)$ and $i$'th feature of $a$.
\end{lemma}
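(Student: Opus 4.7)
The plan is to compute the first-order optimality conditions for the C-LIME population objective and solve the resulting normal equations, then identify the answer with the covariance expression in the statement.

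First I would parameterize the class of linear models as $g(a) = \theta^\top a + b$ with $\theta \in \mathbb{R}^d$ and $b \in \mathbb{R}$ (the paper notes that the intercept can be carried along without loss of generality). Taking $a \sim \mathcal{N}(x, \Sigma)$, the C-LIME expected objective becomes
\[
J(\theta, b) = \mathbb{E}\bigl[(f(a) - \theta^\top a - b)^2\bigr].
\]
Since $J$ is a convex quadratic in $(\theta, b)$, the unique minimizer is characterized by $\nabla_\theta J = 0$ and $\partial_b J = 0$.

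Next I would write down the two stationarity conditions. Differentiating in $b$ gives $\mathbb{E}[f(a) - \theta^\top a - b] = 0$, hence $b = \mathbb{E}[f(a)] - \theta^\top \mathbb{E}[a] = \mathbb{E}[f(a)] - \theta^\top x$. Differentiating in $\theta$ gives $\mathbb{E}[a(f(a) - \theta^\top a - b)] = 0$, i.e.
\[
\mathbb{E}[a\, f(a)] - \mathbb{E}[a a^\top]\,\theta - b\, \mathbb{E}[a] = 0.
\]
Substituting the expression for $b$ yields
\[
\mathbb{E}[a\, f(a)] - x\, \mathbb{E}[f(a)] = \bigl(\mathbb{E}[a a^\top] - x x^\top\bigr)\,\theta.
\]

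Finally I would identify both sides with the stated quantities. The left-hand side is exactly $\mathrm{cov}(a, f(a))$, and the matrix on the right is $\mathrm{cov}(a) = \Sigma$, because $a \sim \mathcal{N}(x, \Sigma)$. Invertibility of $\Sigma$ (by hypothesis) then gives
\[
\theta^* = \Sigma^{-1}\,\mathrm{cov}(a, f(a)),
\]
which is exactly $\text{C-LIME}^f_\Sigma(x)$.

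I do not expect a real obstacle here; the entire argument is a convex quadratic normal-equations calculation. The only place to be careful is ensuring that the intercept $b$ is correctly eliminated so that the matrix on the right collapses from the raw second moment $\mathbb{E}[aa^\top]$ to the centered covariance $\Sigma$; otherwise one would mistakenly get $(\Sigma + xx^\top)^{-1}\mathbb{E}[a f(a)]$ instead of the clean closed form. This is exactly the step that makes the SmoothGrad expression from Lemma~\ref{SGclosed} match, so it deserves explicit verification in the write-up.
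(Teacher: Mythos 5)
Your proposal is correct and follows essentially the same route as the paper's proof: minimize the expected squared loss over linear models $\theta^\top a + b$, set the partial derivatives in $\theta$ and $b$ to zero, eliminate the intercept, and use $\EE[aa^\top] - xx^\top = \Sigma$ to obtain $\theta = \Sigma^{-1}cov(a, f(a))$. The step you flag as needing care (collapsing the raw second moment to $\Sigma$ via the intercept) is indeed the same key simplification the paper performs.
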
{}

\begin{proof}
We are interested in finding a linear function $W$ that is closest to $f$ in the mean square sense, i.e., we want to minimize 
    $\EE_{a \sim \cN(x,\Sigma)}\left[\left|W(a) - f(a)\right|^2\right]$,
where $W(a) = w^\top a+ b$. The output of LIME is $w$. We want to show that $w = \Sigma^{-1} cov\left(a,f(a)\right).$

Enough to prove the below closed form for $W$.
\begin{equation*}
      W(a) = \EE_{a \sim \cN(x, \Sigma)} [f(a)] + cov(f(a),a) \Sigma^{-1} (a - x).
  \end{equation*}

 We want to minimize objective function $O$ below,
\begin{align*}
   O &= (w^\top a + b - f(a))^2 \\
 &=  (w^\top a+b)^2 + f(a)^2 - 2f(a)(w^\top a+b) \\
    &= (w^\top a)^2 + 2b(w^\top a) + b^2 +  f(a)^2- 2f(a)(w^\top a+b).
\end{align*}
We now take partial derivative of the above expression $O$ w.r.t. $w$ and $b$, and equate with $0$ in both cases, to find the respective values. Observe that,
\begin{equation*}
    (w^\top a)^2 = (w^\top a)(w^\top a) = w^\top aa^\top w.
\end{equation*}
Take the gradient of $(w^\top a)^2$ w.r.t. $w$. This is basically taking derivative w.r.t. $w$ for the form $w^\top A w$. The gradient, in transpose form
is $w^\top(A+A^\top)$. $A$ is symmetric here $(A = aa^\top)$. Hence, derivative of $(w^\top a)^2 = 2 aa^\top w$. Set 
\begin{equation*}
  \frac{\partial O}{\partial w} = 0 \implies 2(aa^\top)w + 2ba - 2f(a)a  = 0.
\end{equation*}
Take expectation over first three terms. Since all expectations are over $\cN(\mu,\Sigma)$, we often drop this dependency for conciseness in the rest of this proof.
 \begin{equation*}
  \EE[aa^\top]w + bx - \EE[f(a)a] = 0.
 \end{equation*}
Set 
\begin{equation*}
  \frac{\partial O}{\partial b} = 0 \implies 2(w^\top a) + 2b - 2f(a) = 0.
\end{equation*}
 Taking expectation, 
$ b = \EE[f(a)] - \EE[w^\top a].$
Substituting value for $b$ into the equation for $w$ gives
\begin{align*}
    &\EE[aa^\top]w + (\EE[f(a)] - \EE[w^\top a])x - \EE[f(a)a] = 0 \implies\\
 &\EE[aa^\top]w - \EE[w^\top a]x - ( \EE[f(a)a] - \EE[f(a)]x ) = 0.
\end{align*}
Observe that,
\begin{align*}
  &\EE[f(a)a] -  \EE[f(a)]x  = cov(a,f(a)) \implies\\ &
 \EE[aa^\top]w - \EE[w^\top a]x -  cov(a,f(a)) = 0.
\end{align*}
Also,
$  \EE[w^\top a] =  w^\top \EE[a] = w^\top x = x^\top w.$
And,
\begin{align*}
   &\EE[w^\top a]x = x\EE[w^\top a] = x x^\top w
    \implies \\& (\EE[aa^\top] - x x^\top)w -  cov(a,f(a)) = 0
    \implies \\& \Sigma w =  cov(a,f(a)).
\end{align*}
Since $\Sigma$ is symmetric positive definite and invertible,
\begin{align*}
     &w = \Sigma^{-1} cov(a,f(a)) \implies\\&
      W(a) = \EE_{a \sim \cN(x,\Sigma)} [f(a)] + cov(f(a),a)\Sigma^{-1} (a-x).
  \end{align*}{} 
  \end{proof}{}

\paragraph{Connecting SmoothGrad and C-LIME}
\begin{proof}[Proof of Theorem \ref{thm:equivalence}]
We derive closed form expressions for SmoothGrad in Lemma \ref{SGclosed} and for C-LIME in Lemma \ref{LIMEclosed}, and it is easy to see these two are equal.
\end{proof}
\subsection{Robustness}

\begin{lemma}\label{lem:robust}
Let $h\colon \Reals^{d_1} \rightarrow \Reals^{d_2}$ for $d_1, d_2 \in \cN$ be a function such that $\|h(x) \|_2 \leq h_{\max}$ for any $x\in \Reals^{d_1}$. Let $g$ be a function defined as
\begin{equation*}
    g(\mu) = \mathbb{E}_{x\sim \mathcal{N}(\mu, \sigma^2 \bI)}\left[h(x)\right].
\end{equation*}
Then $g$ is $(h_{\max}/(2\sigma))$-Lipschitz  w.r.t. $L2$ norm.
\end{lemma}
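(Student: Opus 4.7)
The plan is to reduce the Lipschitz statement about $g$ to an information-theoretic comparison between the two Gaussians $P_\mu = \cN(\mu, \sigma^2 \bI)$ and $P_{\mu'} = \cN(\mu', \sigma^2 \bI)$, using the boundedness of $h$ together with Pinsker's inequality and (implicitly) the data processing inequality.

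First, I would rewrite $g(\mu) - g(\mu')$ as an integral against the difference of the two Gaussian densities, and bound it in $L_2$ norm. Using the variational identity $\|v\|_2 = \sup_{\|u\|_2=1} u^\top v$, I get
\begin{equation*}
\|g(\mu) - g(\mu')\|_2 \;=\; \sup_{\|u\|_2=1} \int u^\top h(x)\,\bigl(p_\mu(x) - p_{\mu'}(x)\bigr)\,dx.
\end{equation*}
Since $|u^\top h(x)| \leq \|h(x)\|_2 \leq h_{\max}$, the right-hand side is bounded by $h_{\max} \int |p_\mu(x) - p_{\mu'}(x)|\,dx$, which is exactly $2 h_{\max}$ times the total variation distance $\mathrm{TV}(P_\mu, P_{\mu'})$. (Equivalently, via data processing, $\mathrm{TV}$ between the pushforwards $h_\# P_\mu$ and $h_\# P_{\mu'}$ is at most $\mathrm{TV}(P_\mu, P_{\mu'})$, and a maximal coupling then controls $\|\EE[h] - \EE[h']\|_2$ by $2 h_{\max} \cdot \mathrm{TV}$.)

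Next I would invoke Pinsker's inequality, $\mathrm{TV}(P_\mu, P_{\mu'}) \leq \sqrt{\tfrac{1}{2}\,\kl(P_\mu \| P_{\mu'})}$, combined with the standard closed-form KL divergence between two isotropic Gaussians sharing covariance $\sigma^2 \bI$:
\begin{equation*}
\kl\bigl(\cN(\mu, \sigma^2 \bI)\,\|\,\cN(\mu', \sigma^2 \bI)\bigr) \;=\; \frac{\|\mu - \mu'\|_2^2}{2\sigma^2}.
\end{equation*}
Chaining the three estimates yields a bound of the form $\|g(\mu) - g(\mu')\|_2 \leq C \cdot h_{\max} \cdot \|\mu-\mu'\|_2/\sigma$ for a universal constant $C$, which establishes Lipschitz continuity with the Lipschitz constant scaling as claimed.

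The main obstacle I expect is squeezing out the exact constant $h_{\max}/(2\sigma)$ rather than the looser value that falls out of a plain application of Pinsker. A naive chain gives $2 h_{\max} \cdot \sqrt{\kl/2} = h_{\max} \|\mu-\mu'\|_2/\sigma$, so recovering the extra factor of $2$ in the denominator will either require a more refined transport/coupling argument (for example, exploiting the explicit optimal coupling of two isotropic Gaussians via a shared standard normal, which controls $\EE\|h(X_\mu) - h(X_{\mu'})\|_2$ by $h_{\max}$ times a ratio involving the Gaussian CDF), or a careful bookkeeping of the one-sided variational bound that leverages the fact that only one of the two Gaussian tails contributes. Once that constant is tight, the conclusion is immediate, and plugging $h = \nabla f$ and $h_{\max} = \nabla f_{\max}$ into this lemma gives the Lipschitz statement for $\mathrm{SG}^f_\Sigma$ in Theorem~\ref{robust}, with robustness of $\text{C-LIME}^f_\Sigma$ then following from Theorem~\ref{thm:equivalence}.
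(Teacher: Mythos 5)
Your route is the same as the paper's: the paper first proves an auxiliary lemma (Lemma~\ref{ref:pinsker}) stating $\|\EE[X]-\EE[Y]\|_2 \le a_{\max}\sqrt{\kl(X\|Y)/2}$ for random vectors supported in a ball of radius $a_{\max}$ --- proved exactly as in your first display, by bounding the difference of expectations by the $L^1$ distance of the densities and invoking Pinsker --- and then combines it with the data processing inequality and the closed-form KL divergence $\kl\bigl(\cN(\mu,\sigma^2\bI)\,\|\,\cN(\nu,\sigma^2\bI)\bigr)=\|\mu-\nu\|_2^2/(2\sigma^2)$, finally setting $h=\nabla f$ to get Theorem~\ref{robust}. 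So in substance your proposal reproduces the paper's proof.

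The one place you diverge --- the constant --- is exactly where the paper is shaky, and your bookkeeping is the correct one. With the standard normalization $d_{\mathrm{TV}}=\tfrac12\int|p-q|$, the chain gives $\|g(\mu)-g(\nu)\|_2\le 2h_{\max}\,d_{\mathrm{TV}}\le 2h_{\max}\sqrt{\kl/2}=h_{\max}\|\mu-\nu\|_2/\sigma$; the paper arrives at $h_{\max}/(2\sigma)$ only because its auxiliary lemma identifies $\int|p-q|$ with $d_{\mathrm{TV}}$ and then applies Pinsker in the halved normalization, a factor-of-two mismatch of conventions. Moreover, the refined coupling you contemplate cannot rescue the stated constant, because it is false: with $d_1=d_2=1$ and $h(x)=h_{\max}\,\mathrm{sign}(x)$ one gets $g(\mu)=h_{\max}\bigl(2\Phi(\mu/\sigma)-1\bigr)$, whose derivative at $0$ equals $\sqrt{2/\pi}\,h_{\max}/\sigma\approx 0.80\,h_{\max}/\sigma>h_{\max}/(2\sigma)$. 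So the lemma holds only with a constant of order $h_{\max}/\sigma$ (the sharp value for bounded $h$ being $\sqrt{2/\pi}\,h_{\max}/\sigma$); your bound $h_{\max}\|\mu-\nu\|_2/\sigma$ is correct, essentially best possible for this argument, and still delivers the qualitative content of Theorem~\ref{robust} (Lipschitz constant proportional to $\nabla f_{\max}/\sigma$, independent of $d$), so do not spend effort chasing the extra factor of $2$.
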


To prove Lemma~\ref{lem:robust}, we first state the following lemma.

\begin{lemma}\label{ref:pinsker}
Let $X$ and $Y$ be random variables over $\Omega \subseteq C = \{a \in \Reals^d \mid \|a\|_2\leq a_{\max}\text{ and } d\in \cN \}$. Then
\small{
\[
\left|| \mathbb{E}[X] - \mathbb{E}[Y] \right\|_2 \leq a_{\max} \sqrt{\frac{\kl(X\| Y)}{2}}.
\]
}\normalsize
\end{lemma}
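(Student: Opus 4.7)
The plan is to reduce the vector-valued expectation bound to a one-dimensional problem by duality, then apply a standard total-variation bound on expectations of bounded functions, and finally invoke Pinsker's inequality to convert the total variation into the claimed KL term.

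First, I would use the dual representation of the Euclidean norm: for any vector $v\in\Reals^d$,
\[
\|v\|_2 = \sup_{u\in\Reals^d,\,\|u\|_2=1} u^\top v.
\]
Applying this to $v = \EE[X]-\EE[Y]$, and then interchanging inner product and expectation (by linearity), gives
\[
\|\EE[X]-\EE[Y]\|_2 \;=\; \sup_{\|u\|_2=1}\Bigl(\EE[u^\top X]-\EE[u^\top Y]\Bigr).
\]
So it suffices to bound $|\EE[u^\top X]-\EE[u^\top Y]|$ uniformly over unit vectors $u$.

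Second, for a fixed unit vector $u$, the scalar random variable $g(X) := u^\top X$ satisfies, by Cauchy--Schwarz,
\[
|g(x)| \;\le\; \|u\|_2 \|x\|_2 \;\le\; a_{\max}\qquad \text{for all } x\in\Omega.
\]
I then apply the well-known total-variation bound on expectations of bounded measurable functions: if $\mu_X,\mu_Y$ denote the laws of $X,Y$ and $|g|\le a_{\max}$, one has
\[
\bigl|\EE_{\mu_X}[g]-\EE_{\mu_Y}[g]\bigr| \;\le\; a_{\max}\cdot \|\mu_X-\mu_Y\|_{TV},
\]
where $\|\mu_X-\mu_Y\|_{TV}$ is the total variation distance (written here with the convention that makes the constant come out as stated; the proof uses the Jordan decomposition of $\mu_X-\mu_Y$ and the fact that subtracting any constant from $g$ does not change the integral against a signed measure of total mass zero).

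Third, I invoke Pinsker's inequality to relate total variation to KL:
\[
\|\mu_X-\mu_Y\|_{TV} \;\le\; \sqrt{\tfrac{1}{2}\,\kl(\mu_X\|\mu_Y)} \;=\; \sqrt{\tfrac{1}{2}\,\kl(X\|Y)}.
\]
Combining the three steps and taking the supremum over $u$ gives the claimed bound. The only real subtlety is being careful with the total-variation normalization convention when combining the bounded-function bound with Pinsker, so that the constant matches the statement; otherwise every step is a one-line application of a standard inequality.
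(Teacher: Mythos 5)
Your argument is essentially the paper's argument in a thin disguise: the duality step $\|\EE[X]-\EE[Y]\|_2=\sup_{\|u\|_2=1}\left(\EE[u^\top X]-\EE[u^\top Y]\right)$ together with $|u^\top a|\le a_{\max}$ is just a coordinatized version of the paper's direct estimate $\left\|\int_{a\in C} a\,(X(a)-Y(a))\right\|_2\le a_{\max}\int_{a\in C}\left|X(a)-Y(a)\right|$, and both proofs then finish by invoking Pinsker. So there is no genuinely different idea here, only a different packaging of the triangle/Cauchy--Schwarz step.

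The point you explicitly defer --- ``the convention that makes the constant come out as stated'' --- is, however, a real gap, because no convention makes it come out. Under the standard normalization $d_{\mathrm{TV}}=\tfrac12\|p-q\|_1$, your bounded-function step gives $|\EE[u^\top X]-\EE[u^\top Y]|\le 2a_{\max}\,d_{\mathrm{TV}}$ (the Jordan-decomposition/centering refinement does not help, since $u^\top a$ can range over an interval of length $2a_{\max}$), and Pinsker then yields $a_{\max}\sqrt{2\,\kl(X\|Y)}$; under the $L_1$ convention you get $a_{\max}\|p-q\|_1$, but Pinsker then reads $\|p-q\|_1\le\sqrt{2\,\kl}$, landing in the same place. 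The lost factor of $2$ is not recoverable because the inequality as stated is too strong: take $d=1$ and $X,Y$ supported on $\{\pm a_{\max}\}$ with $\Prob[X=a_{\max}]=\tfrac12+\epsilon$ and $\Prob[Y=a_{\max}]=\tfrac12$; then $|\EE[X]-\EE[Y]|=2a_{\max}\epsilon$ while $a_{\max}\sqrt{\kl(X\|Y)/2}=a_{\max}\epsilon+O(\epsilon^3)$. You are in good company: the paper's own proof commits exactly the same slippage, identifying $\int|X(a)-Y(a)|$ with $d_{\mathrm{TV}}$ and then applying the half-normalized Pinsker inequality. The harmless fix is to prove and use the bound $a_{\max}\sqrt{2\,\kl(X\|Y)}$, which propagates through Lemma~\ref{lem:robust} and Theorem~\ref{robust} and only changes the Lipschitz constant from $\nabla f_{\max}/(2\sigma)$ to $\nabla f_{\max}/\sigma$.
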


\begin{proof}
We will write $X(\cdot)$ and $Y(\cdot)$ to denote the probability densities of the two random variables. The difference in expectations can be written as:
\small{
\begin{align*}
\left\|\mathbb{E}[X] - \mathbb{E}[Y]\right\|_2 &= \left \| \int_{a\in C} a (X(a) - Y(a) ) \right\|_2\\
\tag{Triangle inequality} &\leq \int_{a\in C}  \left\| a \left(X(a) - Y(a)\right )\right\|_2\\
\tag{$\|a\|_2 \leq a_{\max}$} &\leq \int_{a\in C} a_{\max} \left | X(a) - Y(a) \right|\\
 &= a_{\max} \; d_{\text{TV}}(X,Y),
\end{align*}
}\normalsize
where $d_{\text{TV}}$ denotes the total variation distance. The lemma follows from applying the 
Pinsker's inequality: $d_{\text{TV}}(X,Y) \leq \sqrt{\kl(X\| Y) / 2}$.\end{proof}

\begin{proof}[Proof of Lemma~\ref{lem:robust}]
Let $\mu, \nu \in \Reals^{d_1}, \sigma \in \Reals$. The KL divergence of the two multivariate Gaussian distributions can be written as
\[
\kl\left(\mathcal{N}(\mu, \sigma^2 \bI) \| \mathcal{N}(\nu, \sigma^2 \bI)\right) = \frac{\|\mu - \nu\|_2^2}{2 \sigma^2}.
\]
By data processing inequality (see e.g. \citet{divergence}), we know that
\begin{align*}
\kl(g(\mu)\| g(\nu)) &= \kl\left(h \left(\mathcal{N}(\mu, \sigma^2 \bI) \right) \| h\left(\mathcal{N}(\nu, \sigma^2 \bI)\right)\right) \\&\leq \kl(\mathcal{N}(\mu, \sigma^2 \bI) \| \mathcal{N}(\nu, \sigma^2 \bI)).
\end{align*}
By applying Lemma~\ref{ref:pinsker} and inequality above, we get
\begin{align*}
|g(\mu) - g(\nu)| &= \left|  \mathbb{E}_{x\sim \mathcal{N}(\mu, \sigma^2 \bI)}[h(x)] -  \mathbb{E}_{x\sim \mathcal{N}(\nu, \sigma^2 \bI)}[h(x)] \right| \\&\leq h_{\max}  \|\mu - \nu\|_2/(2\sigma),
\end{align*}
which completes the proof.
\end{proof}

\begin{proof}[Proof of Theorem \ref{robust}]
Lemma \ref{lem:robust} establishes the Lipschitzness of SmoothGrad by selecting $h$ to be the gradient of $f$. Due to the equivalence result in Theorem \ref{thm:equivalence}, Lemma \ref{lem:robust} also implies that C-LIME is Lipschitz with the same constant.
\end{proof}
\section{Proofs of Section~\ref{sec:finite-sample}}
\label{sec:omitted-finite-sample}

\subsection{Convergence Analysis of SmoothGrad}
\begin{proof}[Proof of Proposition~\ref{thm:fine-sg}]
By definition of SmoothGrad we have that $SG^f_\Sigma(x) = \EE\left[\nabla f(a)\right]$ where the expectation is with respect to $a \sim \cN(x, \sigma^2 \bI)$. Furthermore, $SG^f_n(x) = \Sigma_{i=1}^{n} \nabla f(a)/n $ where $a \sim \cN(x, \sigma^2 \bI)$. Since $\nabla f$ is bounded, by an application of Chernoff bound and given the choice of number of perturbations $n$ in the statement, we have that in any dimension $i$
$$\big|SG^f_\Sigma(x)[i]-SG^f_n(x)[i]\big| \leq \frac{\epsilon}{\sqrt{d}},$$
with probability of at least $1-\delta/d$. A Union bound then gives us that with probability of at least $1-\delta$, $\left\|SG^f_\Sigma(x)-SG^f_n(x)\right\|_{2}\leq \epsilon$, as claimed.
\end{proof}

\subsection{Convergence Analysis of C-LIME}
\begin{proof}[Proof of Theorem~\ref{thm:finite-lime}]

By definition of C-LIME and using the closed form of the ordinary least square we have that 
$\text{C-LIME}^f_\Sigma(x) = \EE \left[\left(a a^\top\right)\inv a f(a)\right]$,
where $a~\sim \cN(x, \sigma^2 \bI)$. Similarly, we can write down the closed-form of C-LIME using sample $S$ of size $n$ from $\cN(x, \sigma^2 \bI)$ as
\small{
$$
\text{C-LIME}^f_n(x) = \left(\frac{1}{n}\sum_{b \in S} b b^\top\right)\inv \left(\frac{1}{n}\sum_{b \in S}b f(b)\right),
$$
}\normalsize
where $b$ indexes the data points in a sample $S$ of size $n$.

\small{
\begin{align*}
&\left\|\text{C-LIME}^f_\Sigma(x)-\text{C-LIME}^f_n(x)\right\|_{2} \\= &
\left\|\EE\left[(a a^\top)\right]\inv\EE\left[a f(a)\right]-\left(\frac{1}{n}\sum_{b \in S} b b^\top\right)\inv \left(\frac{1}{n}\sum_{b \in S}b f(b)\right)\right\|_2 \\
 = &\|\EE\left[(a a^\top)\right]\inv\EE\left[a f(a)\right] - \EE\left[(a a^\top)\right]\inv\left(\frac{1}{n}\sum_{b \in S}b f(b)\right)\\&+ \EE\left[(a a^\top)\right]\inv\left(\frac{1}{n}\sum_{b \in S}b f(b)\right)\\&-\left(\frac{1}{n}\sum_{b \in S}  b b^\top\right)\inv \left(\frac{1}{n}\sum_{b \in S}b f(b)\right)\|_2 \\
\leq & \left\|\EE\left[a a^\top\right]\inv\right\|_{2}
\left\|\EE\left[a f(a)\right]-\frac{1}{n}\sum_{b \in S}b f(b)\right\|_{2}\\&+\left\|\EE\left[a a^\top\right]\inv-\frac{1}{n}\left(\sum_{b \in S} b b^\top\right)\inv\right\|_{2}
\left\|\frac{1}{n}\sum_{b \in S}b f(b) \right\|_{2},
\end{align*}
}\normalsize
where the inequalities are by  algebraic manipulation and then applying Cauchy–Schwarz and triangle inequalities.

We next bound each of the four terms above separately.
\paragraph{Term 1}
\small{
\begin{align*}
&\left\|\EE\left[a a^\top\right]\inv\right\|_2 \leq \lambda_{\max}\left(\EE\left[a a^\top\right]\inv\right)\\&=\frac{1}{\lambda_{\min}\left(\EE\left[a a^\top\right]\right)} \\&=\frac{1}{\lambda_{\min}\left(\EE\left[aa^\top\right]-\EE\left[a\right] \E\left[a^\top\right]+\EE\left[a\right] \E\left[a^\top\right]\right)}\\
&\leq \frac{1}{\lambda_{\min}\left(\EE\left[a a^\top \right]-\EE\left[a \right] \E\left[a^\top \right]\right)+\lambda_{\min}\left(\EE\left[a\right] \E\left[a^\top\right]\right)}\\&\leq \frac{1}{\sigma^2}.
\end{align*}}\normalsize
The first inequality is due to application of Weyl's inequalities. The last inequality follows from the following two observations. (1) The first term in the denominator is simply the smallest eigenvalue of the covariance matrix of the Gaussian distribution, which is $\sigma^2$. (2) The second term in the denominator corresponds to the smallest eigenvalue of a rank 1 matrix which is 0.

\paragraph{Term 2}
Define $c = a - x$ and $d = b - x$. Hence, both $c$ and $d$ are random variables drawn from $\cN(0, \sigma^2\bI)$. Also, with slight abuse of notation, let $g(y)=f(x+y)$.
\small{
\begin{align*}
    & \left\|\EE\left[a f(a) \right]-\frac{1}{n}\sum_{b \in S}bf(b) \right\|_{2} \\
    &=     \left\|\EE\left[\left(c+x\right) g(c) \right]-\frac{1}{n}\sum_{d \in S}\left(d+x\right)g(d) \right\|_{2}\\
    &=     \left\|\EE\left[c g(c)\right]-\frac{1}{n}\sum_{d \in S}dg(d)  +\EE\left[x g(c)\right]-\frac{1}{n}\sum_{d \in S}xg(d) \right\|_{2} \\
    &\leq     \left\|\EE\left[c g(c)\right]-\frac{1}{n}\sum_{d \in S}dg(d)\right\|_2+ \left\|\EE\left[xg(c)\right]-\frac{1}{n}\sum_{d \in S}xg(d) \right\|_{2} \\    & \leq    \left\|\EE\left[c g(c)\right]-\frac{1}{n}\sum_{d \in S}dg(d)\right\|_2+ \|x\|_2\left\|\EE\left[g(c)\right]-\frac{1}{n}\sum_{d \in S}g(d) \right\|_{2},
    \end{align*}
}\normalsize
where the first inequality is due to triangle inequality and the second one is by Cauchy-Schwarz. Also note that with a slight abuse of notation we summed over $d$ instead of $b$.

Using a Union bound and a Chernoff bound for Sub-Gaussian random variable, with probability of at least $1-\delta/6$, we can bound the first term by $\epsilon \sigma^2/4$ when $n > C d/(\epsilon\sigma^2)^2\ln(d/\delta)$, for some absolute constant $C$. Since $f$ and hence $g$ are bounded in $[-1,1]$, we can use a Chernoff bound to show that when $n> C\|x\|^2_2/(\epsilon\sigma^2)^2\ln(\delta)$, for some absolute constant $C$, then the second term is bounded by $\epsilon \sigma^2/4$ with probability at least $1-\delta/6$.

Applying the union bound then gives us that 
\small{\begin{align*}
    \left\|\EE\left[a f(a) \right]-\frac{1}{n}\sum_{b \in S}bf(b) \right\|_{2}
    &=\frac{\sigma^2\epsilon}{2},
\end{align*}
}\normalsize
with probability of at least $1-\delta/3$.

\paragraph{Term 3}
\small{
\begin{align*}
    &\left\|\EE\left[a  a^\top\right]\inv-\left(\frac{1}{n}\sum_{b \in S} b b^\top\right)\inv\right\|_{2}
      \\= &\left\|\left(\frac{1}{n}\sum_{b \in S} b b^\top\right)^{\inv} \left(\left(\frac{1}{n}\sum_{b \in S} b b^\top\right)\EE\left[a a^\top\right]\inv-\bI\right)\right\|_{2}
     \\ \leq &\left\|\left(\frac{1}{n}\sum_{b \in S} b b^\top\right)^{\inv} \right\|_{2}
     \left\|\left(\frac{1}{n}\sum_{b \in S} b b^\top\right)\EE\left[a a^\top\right]\inv-\bI\right\|_{2},
\end{align*}
}\normalsize
where the last inequality is using the Cauchy-Schwartz inequality. We now proceed to bound each term separately. 

$\newline$
\noindent\textbf{Term 3a} With probability of at least $1-\delta/6$,
\small{
\begin{align*}
& \left\|\left(\frac{1}{n}\sum_{b \in S} b b^\top\right)^{\inv} \right\|_{2}  \leq \lambda_{\max}\left(\left(\frac{1}{n}\sum_{b \in S} b b^\top\right)^{\inv}\right) \\
 & = \frac{1}{\lambda_{\min}\left(\frac{1}{n}\sum_{b \in S} b b^\top\right)} \\
 & = \frac{1}{\lambda_{\min}\left(\frac{1}{n}\sum_{b \in S} b b^\top-\EE\left[a a^\top\right]+\EE\left[a a^\top\right]\right)} \\
& \leq 
\frac{1}{\lambda_{\min}\left(\frac{1}{n}\sum_{b \in S} b b^\top-\EE\left[a a^\top\right]\right)+\lambda_{\min}\left(\EE\left[a a^\top\right]\right)} \\ & \leq \frac{2}{\sigma^2}.
\end{align*}
}\normalsize

The first inequality is by the application of Weyl’s inequalities. The last inequality is the results of the following two observations. (1) Below, we show that 
$\lambda_{\min}\left(\frac{1}{n}\sum_{b \in S} b b^\top-\EE\left[a a^\top\right]\right) \geq -\sigma^2/2$, with probability of at least $1-\delta/6$. (2) Below, we also show that $\lambda_{\min}\left(\EE\left[a a^\top\right]\right) \geq \sigma^2$.

\noindent\textbf{(1)} We want to find a lower bound on $\lambda_{\min}\left(\frac{1}{n}\sum_{b \in S} b b^\top-\EE\left[a a^\top\right]\right)$. First observe that 
\small{
\begin{align*}
    \lambda_{\min}\left(\frac{1}{n}\sum_{b \in S} b b^\top-\EE\left[a a^\top\right]\right)   =  \lambda_{\max}\left(\EE\left[a a^\top\right]-\frac{1}{n}\sum_{b \in S} b b^\top\right).
\end{align*}
}\normalsize
So it suffices to find an upper bound on the maximum eigenvalue of the right term in equality above. Let us define
$$Z = \EE\left[ a a^\top\right]-\frac{1}{n}\sum_{b \in S} b b^\top.$$
Observe that 
$$\EE[Z] = 0 \text{ and }
\EE[Z^p] \leq \frac{p!}{2} \sigma^{p-1} (\sigma\bI)^2 \text{ for }p \in \Nats, p \geq 2.$$
Hence, an application of Bernestein inequality in the subexponential case for matrices~\cite{Tropp12} implies that, with probability of at least $1-\delta/6$,   
\small{
\begin{align*}
\lambda_{\max}\left(\EE\left[a a^\top\right]-\frac{1}{n}\sum_{b \in S} b b^\top\right) \leq \frac{\sigma^2}{2},
\end{align*}
}\normalsize
when $n \geq C\ln(d/\delta)/\sigma^4$ for some absolute constant $C$.

$\newline$
\noindent\textbf{(2)} 
\small{
\begin{align*}
  \lambda_{\min}&\left(\EE\left[a a^\top\right]\right)  \\
  &=   \lambda_{\min}\left(\EE\left[a a^\top\right]-\EE\left[a\right]\EE\left[a^\top\right]+\EE\left[a\right]\EE\left[a^\top\right]\right)\\ &\geq \lambda_{\min}\left(\EE\left[a a^\top\right]-\EE\left[a\right]\EE\left[a^\top\right]\right)+\lambda_{\min}\left(\EE\left[a\right]\EE\left[a^\top\right]\right) \\ & = \sigma^2.
\end{align*}
}\normalsize
The first inequality is due to application of Weyl's inequalities. The last equality corresponds to following two observations: (1) The first term is simply the smallest eigenvalue of the covariance matrix of the Gaussian distribution, which is $\sigma^2$. (2) The second term is  the smallest eigenvalue of a rank 1 matrix which is 0.

$\newline$
\noindent\textbf{Term 3b}
Define 
\small{
$$
\mathbf{E} = \left(\frac{1}{n}\sum_{b \in S} bb^\top\right)-\EE\left[a a^\top\right].$$}\normalsize
Using this notation
\small{
\begin{align*}
    &\left\|\left(\frac{1}{n}\sum_{b \in S} b b^\top\right)\EE\left[a a^\top\right]\inv-\bI\right\|_{2}  \\ &= 
    \left\|\left(\EE\left[a a^\top\right] + \mathbf{E}\right)\EE\left[a a^\top\right]\inv-\bI\right\|_{2} 
     \\ & = \left\|\mathbf{E}\EE\left[a a^\top\right]\inv\right\|_2  \\
     \\ & \leq \left\|\mathbf{E}\right\|_2\left\|\EE\left[a a^\top\right]\inv\right\|_2    \\&  \leq \frac{\epsilon \sigma^2}{8 \|x\|_2}.
\end{align*}
}
\normalsize
First inequality is by Cauchy-Schwartz. To derive the last inequality, we use the following two observations.
(1) By \citet{koltchinskii2017}, with probability of at least $1-\delta/6$, $\|\mathbf{E}\|_2 \leq \epsilon \sigma^4/(8\|x\|_2)$ when 
$$n \geq C \frac{1}{(\frac{\epsilon \sigma^3}{\|x\|_2})^2}\left(d + \ln(\frac{12}{\delta})\right),$$ for some absolute constant $C > 0$.
(2) The other term is equal to Term 1 which we earlier bounded by $1/\sigma^2$.

\paragraph{Term 4} By triangle inequality,
\small{
\begin{align*}
    \left\|\frac{1}{n}\sum_{b \in S}b f(b) \right\|_{2} & =   \left\|\frac{1}{n}\sum_{b \in S}b f(b) - \EE\left[af(a)\right]+\EE\left[af(a)\right] \right\|_{2} 
    \\ & \leq \left\|\frac{1}{n}\sum_{b \in S}b f(b) - \EE\left[a f(a)\right]\right\|_2+\left\|\EE\left[af(a)\right] \right\|_{2} \\
    &= \left\|\frac{1}{n}\sum_{b \in S}b f(b) - \EE\left[a f(a)\right]\right\|_2+\|x\|_{2}.
\end{align*}
}\normalsize

We now bound the first term. Note that this term is identical to Term 2. Hence, when 
$n \geq Cd\ln(d/\delta)/\|x\|_2^2$, it is bounded by $\|x\|_2$ with probability of at least $1-\delta/3$.

This implies that, with probability of at least $1-\delta/3$, 
\small{
\begin{align*}
    \left\|\frac{1}{n}\sum_{b \in S}b f(b) \right\|_{2} & \leq 2\|x\|_2.
\end{align*}}
\normalsize

\paragraph{Putting It All Together}
Multiplying the bounds on all terms and applying a Union bound, with probability of at least $1-\delta$, the $L2$ distance between the empirical average and expected output of C-LIME is at most $\epsilon$. 
\end{proof}

\section{Proofs of Section~\ref{sec:desiderata}}
\label{sec:omitted-desiderata}

\subsection{Linearity}
\begin{proof}[Proof of Proposition~\ref{pro:linearity}]
From Theorem \ref{thm:equivalence}, we know that
\begin{equation*}
    \text{SmoothGrad}^f_\Sigma(x) = \text{C-LIME}^f_\Sigma(x) = \Sigma^{-1}cov(a,f(a)).
\end{equation*}
Hence, it suffices to prove that $\Sigma^{-1}cov(a,f(a))$ is linear. Observe that,
\small{
\begin{align*}
 \Sigma^{-1} &cov(a,(\alpha f+ \beta g)(a)) 
 =\Sigma^{-1} cov(a,\alpha f(a) + \beta g(a))  \\
 &= \Sigma^{-1} cov(a,\alpha f(a)) + \Sigma^{-1} cov(a,\beta g(a)) \\
 &= \alpha \Sigma^{-1} cov(a,f(a)) + \beta \Sigma^{-1} cov(a,g(a)).
\end{align*}
}
\normalsize
Hence, $\Sigma^{-1} cov(a,f(a))$ is linear.
\end{proof}

\subsection{Proportionality}

\begin{proof}[Proof of Proposition~\ref{pro:proportionality}]
From Theorem \ref{thm:equivalence}, we know that
\begin{equation*}
    \text{SmoothGrad}^f_\Sigma(x) = \text{C-LIME}^f_\Sigma(x) = \Sigma^{-1}cov(a,f(a)).
\end{equation*}
Hence, it suffices to prove proportionality for SmoothGrad. Consider a classifier $f$. Set
\begin{equation*}
z = a_0 + a_1 x_1 + \ldots + a_d x_d.
\end{equation*}
This implies
\small{
\begin{align*}
    \nabla f(x_1, \ldots ,x_n)
     &= \left( \frac{\partial f}{\partial x_1}, \ldots, \frac{\partial f}{\partial x_d} \right)
   = \left( \frac{\partial g}{\partial z}\frac{\partial z}{\partial x_1}, \ldots \frac{\partial g}{\partial z} \frac{\partial z}{\partial x_d} \right)
     \\ &= \left( \frac{\partial g}{\partial z} a_1, \ldots,  \frac{\partial g}{\partial z} a_d \right)
     = \frac{\partial g}{\partial z} (a_1, \ldots, a_d).
\end{align*}
}
\normalsize
Hence, for all $(x_1, \ldots ,x_d)$,
\begin{equation*}
    \nabla f(x_1, \ldots ,x_d) = k(x_1, \ldots ,x_d) (a_1, \ldots, a_d),
\end{equation*}
      for some  $k: \mathbb{R}^d \rightarrow \mathbb{R}$, and, for all $(x_1, \ldots, x_d)$, 
\begin{equation*}
    \text{SG}^f_\Sigma(x_1, \ldots, x_d)  
    = k'(x_1, \ldots ,x_d) (a_1, \ldots, a_d),
\end{equation*}
for some  $k': \mathbb{R}^d \rightarrow \mathbb{R}$.
\end{proof}

\section{Regularization}
\label{sec:omitted-regular-sparse}
The below lemma states the closed form expression for the output of C-LIME at a point, when there is an $L2$ regularizer.

\begin{lemma}\label{LIMEregclosed}
Let $f:\mathbb{R}^d \rightarrow \mathbb{R}$ be a function. Then, for any $x\in X$ and invertible covariance matrix $\Sigma\in \Reals^d \times \Reals^d$, and $L2$ regularization parameter
$\lambda \geq 0$,
\begin{equation*}
    \text{C-LIME}^f_{\Sigma}(x) = (\Sigma+\lambda\bI)^{-1} cov\left(a,f(a)\right) ,
\end{equation*}
where $a$ is a random input drawn from  $\cN(x, \Sigma)$, and $cov\left(a,f(a)\right)$ is a vector with the $i$'th entry corresponding to the covariance of $f(a)$ and $i$'th feature of $a$.
\end{lemma}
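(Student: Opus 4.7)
The plan is to follow the same structure as the proof of Lemma \ref{LIMEclosed}, but augment the objective with the $L2$ regularizer $\lambda \|w\|_2^2$. Concretely, let the linear explanation be $W(a) = w^\top a + b$ and minimize
\[
O(w,b) = \EE_{a \sim \cN(x,\Sigma)}\!\left[(w^\top a + b - f(a))^2\right] + \lambda \|w\|_2^2.
\]
As before, I would compute $\partial O/\partial b$ and $\partial O/\partial w$, take expectations under $a \sim \cN(x,\Sigma)$, and set both to zero. The intercept equation is unchanged by the regularizer and still yields $b = \EE[f(a)] - w^\top x$, since differentiating the regularizer with respect to $b$ contributes nothing.

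Differentiating $O$ with respect to $w$ gives $2\EE[(w^\top a + b - f(a))\,a] + 2\lambda w = 0$, i.e.\ $\EE[aa^\top]w + b\,x - \EE[a f(a)] + \lambda w = 0$. Substituting $b$ into this equation and using the identities $\EE[af(a)] - \EE[f(a)]\,x = cov(a,f(a))$ and $\EE[w^\top a]\,x = xx^\top w$ (both of which appear in the proof of Lemma \ref{LIMEclosed}), the cross terms collapse exactly as in the unregularized case, leaving
\[
\bigl(\EE[aa^\top] - xx^\top + \lambda \bI\bigr)\,w \;=\; cov(a,f(a)).
\]
Since $\EE[aa^\top] - xx^\top = \Sigma$ (the covariance of $a$), this becomes $(\Sigma + \lambda\bI)\,w = cov(a,f(a))$. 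The matrix $\Sigma + \lambda \bI$ is positive definite (hence invertible) whenever $\Sigma$ is positive semidefinite and $\lambda \geq 0$ with $\Sigma$ already invertible, so I can solve $w = (\Sigma + \lambda \bI)^{-1} cov(a,f(a))$, which is exactly the claimed closed form.

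There is no real obstacle here beyond carefully tracking the algebra: the only new element relative to Lemma \ref{LIMEclosed} is the extra $+\lambda w$ term from differentiating the regularizer, which cleanly folds into the matrix on the left-hand side as $+\lambda \bI$. I would not need to re-derive the intercept calculation or the covariance identity; I would simply cite the corresponding steps from the proof of Lemma \ref{LIMEclosed} and then highlight the single modification produced by the regularizer.
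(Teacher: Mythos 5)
Your proposal is correct and matches the paper's own proof essentially step for step: differentiate the $L2$-regularized least-squares objective in $w$ and $b$, solve the first-order conditions using $b = \EE[f(a)] - w^\top x$ and the identities $\EE[af(a)] - \EE[f(a)]x = cov(a,f(a))$ and $\EE[aa^\top] - xx^\top = \Sigma$, and invert $\Sigma + \lambda \bI$. The only cosmetic difference is that you cite the unregularized computation from Lemma~\ref{LIMEclosed} rather than rewriting it, which is a perfectly valid shortcut.
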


\begin{proof}
We are interested in finding a linear function $W$ that is closest to $f$ in the mean square sense, plus an added $L2$ regularizer term, i.e., we want to minimise 
\begin{equation}\label{exp}
    \EE_{a \sim \cN(x,\Sigma)}\left[\left|W(a) - f(a)\right|^2\right] + \lambda|w|^2,
\end{equation}
where $W(a) = w^\top a+ b$. The output of LIME is $w$. We want to show that
\begin{equation*}
     w = (\Sigma+\lambda\bI)^{-1} cov\left(a,f(a)\right). 
\end{equation*}
Enough to prove the below closed form for $W$.

\begin{equation*}
      W(a) = \EE_{a \sim \cN(x, \Sigma)} [f(a)] + cov(f(a),a) (\Sigma + \lambda\bI)^{-1} (a - x).
  \end{equation*}
  We need to minimize objective function $O$ below,
\small{
\begin{align*}
   O = &(w^\top a + b - f(a))^2 + \lambda w^2
 \\=  &(w^\top a+b)^2 + f(a)^2 - 2f(a)(w^\top a+b) + \lambda w^2\\
    = &(w^\top a)^2 + 2b(w^\top a) + b^2 +  f(a)^2- 2f(a)(w^\top a+b) + \lambda w^2.
\end{align*}
}
\normalsize
We now take partial derivative of the above expression $O$ w.r.t. $w$ and $b$, and equate with $0$ in both cases, to find the respective values. Observe that,
\begin{equation*}
    (w^\top a)^2 = (w^\top a)(x^\top a) = w^\top aa^\top w.
\end{equation*}
Take the gradient of $(w^\top a)^2$ w.r.t. $w$. This is basically taking derivative w.r.t. $w$ for the form $w^\top A w$. The gradient, in transpose form
is $w^\top(A+A^\top)$. $A$ is symmetric here $(A = aa^\top)$.
Hence, derivative of $(w^\top a)^2 = 2 aa^\top w$. Set 
\begin{equation*}
  \frac{\partial O}{\partial w} = 0 \implies 2(aa^\top)w + 2ba - 2f(a)a + 2\lambda w = 0.
\end{equation*}
Take expectation over first three terms. Since all expectations are over $\cN(\mu,\Sigma)$, we often drop this dependency for conciseness in the rest of this proof.
 \begin{equation*}
  \EE[aa^\top]w + bx - \EE[f(a)a]  + \lambda w = 0.
 \end{equation*}
Set 
\begin{equation*}
  \frac{\partial O}{\partial b} = 0 \implies 2(w^\top a) + 2b - 2f(a) = 0.
\end{equation*}
Taking expectation, 
$
 b = \EE[f(a)] - \EE[w^\top a].$
Substituting value for $b$ into the equation for $w$,
\begin{align*}
    &\EE[aa^\top]w + (\EE[f(a)] - \EE[w^\top a])x - \EE[f(a)a] + \lambda w = 0 \implies\\&
 \EE[aa^\top]w - \EE[w^\top a]x - ( \EE[f(a)a] - \EE[f(a)]x ) + \lambda w = 0.
\end{align*}
Observe that,
\begin{align*}
  &\EE[f(a)a] -  \EE[f(a)]x  =  cov(a,f(a)) \implies \\ &
 \EE[aa^\top]w - \EE[w^\top a]x -  cov(a,f(a)) + \lambda w = 0.
\end{align*}
And,
$
  \EE[w^\top a] =  w^\top \EE[a] = w^\top x = x^\top w.
$
Also,
\begin{align*}
   &\EE[w^\top a]x = x\EE[w^\top a] = x x^\top w
    \implies \\&(\EE[aa^\top] - x x^\top)w -  cov(a,f(a)) + \lambda w = 0
    \implies \\&(\Sigma + \lambda\bI) w =  cov(a,f(a)).
\end{align*}
Since $\Sigma$ is symmetric positive definite and invertible, $\Sigma + \lambda \bI$ is also symmetric positive definite and invertible for $\lambda \geq 0$. This implies that,
\begin{align*}
 &w = (\Sigma + \lambda\bI)^{-1} cov(a,f(a)) \implies \\ &
      W(a) = \EE_{a \sim \cN(x,\Sigma)} [f(a)] + cov(f(a),a)(\Sigma + \lambda\bI)^{-1} (a-x).
  \end{align*}{} 
  \end{proof}{}

\section{Additional Experimental Results}
\label{sec:omitted-exp}
\begin{figure*}[ht!]
	\begin{subfigure}{0.33\linewidth}
		\centering
    	\includegraphics[width=0.9\linewidth]{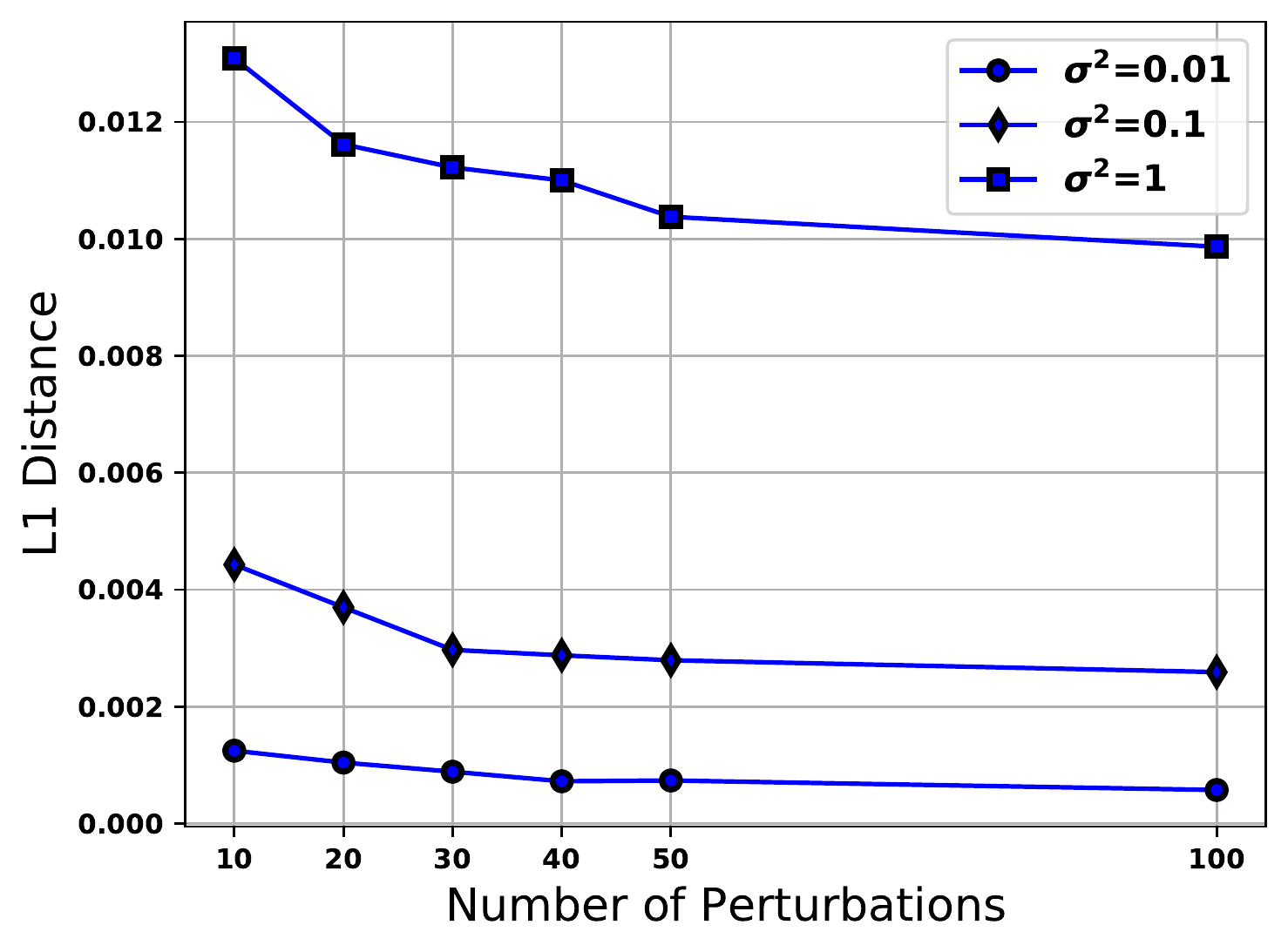}
    	\caption{Equivalence}
        \label{fig:sig_equiv_sim}
	\end{subfigure}
	\begin{subfigure}{0.33\linewidth}
		\centering
    	\includegraphics[width=0.9\linewidth]{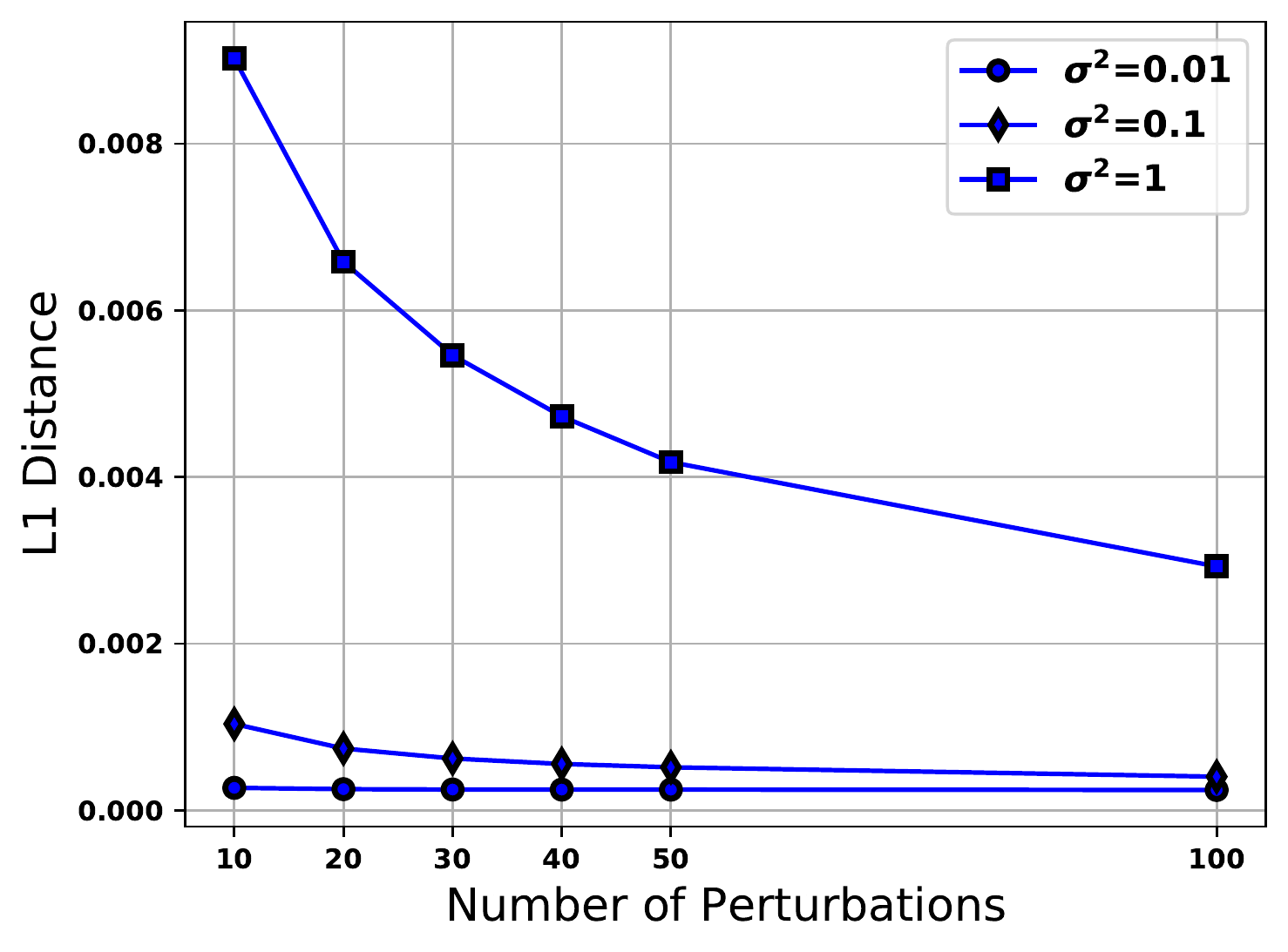}
    	\caption{Robustness of SmoothGrad}
        \label{fig:sig_sg_robust_sim}
	\end{subfigure}
	\begin{subfigure}{0.33\linewidth}
		\centering
    	\includegraphics[width=0.9\linewidth]{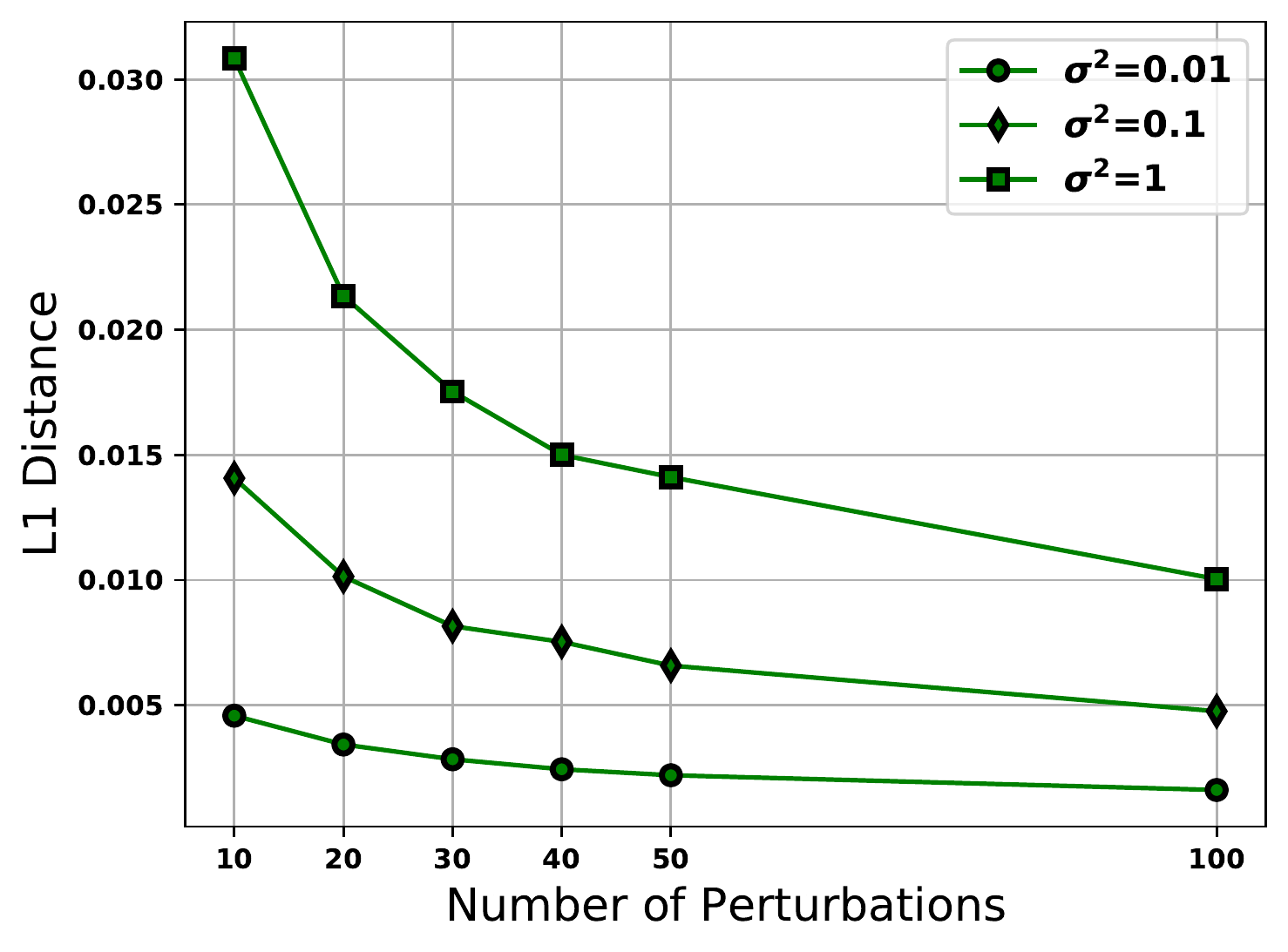}
    	\caption{Robustness of C-LIME}
        \label{fig:sig_lime_robust_sim}
	\end{subfigure}	
	\caption{Equivalence (\ref{fig:sig_equiv_sim}) and robustness plots for SmoothGrad~(\ref{fig:sig_sg_robust_sim}) and C-LIME~(\ref{fig:sig_lime_robust_sim}) for various $\sigma^2$ on the Simulated dataset. In each plot the Y axis corresponds to L1 distance and the X axis corresponds to the number of perturbations.}
	\label{fig:ablation_sim}
\end{figure*}

\begin{figure*}[ht!]
	\begin{subfigure}{0.33\linewidth}
		\centering
    	\includegraphics[width=0.9\linewidth]{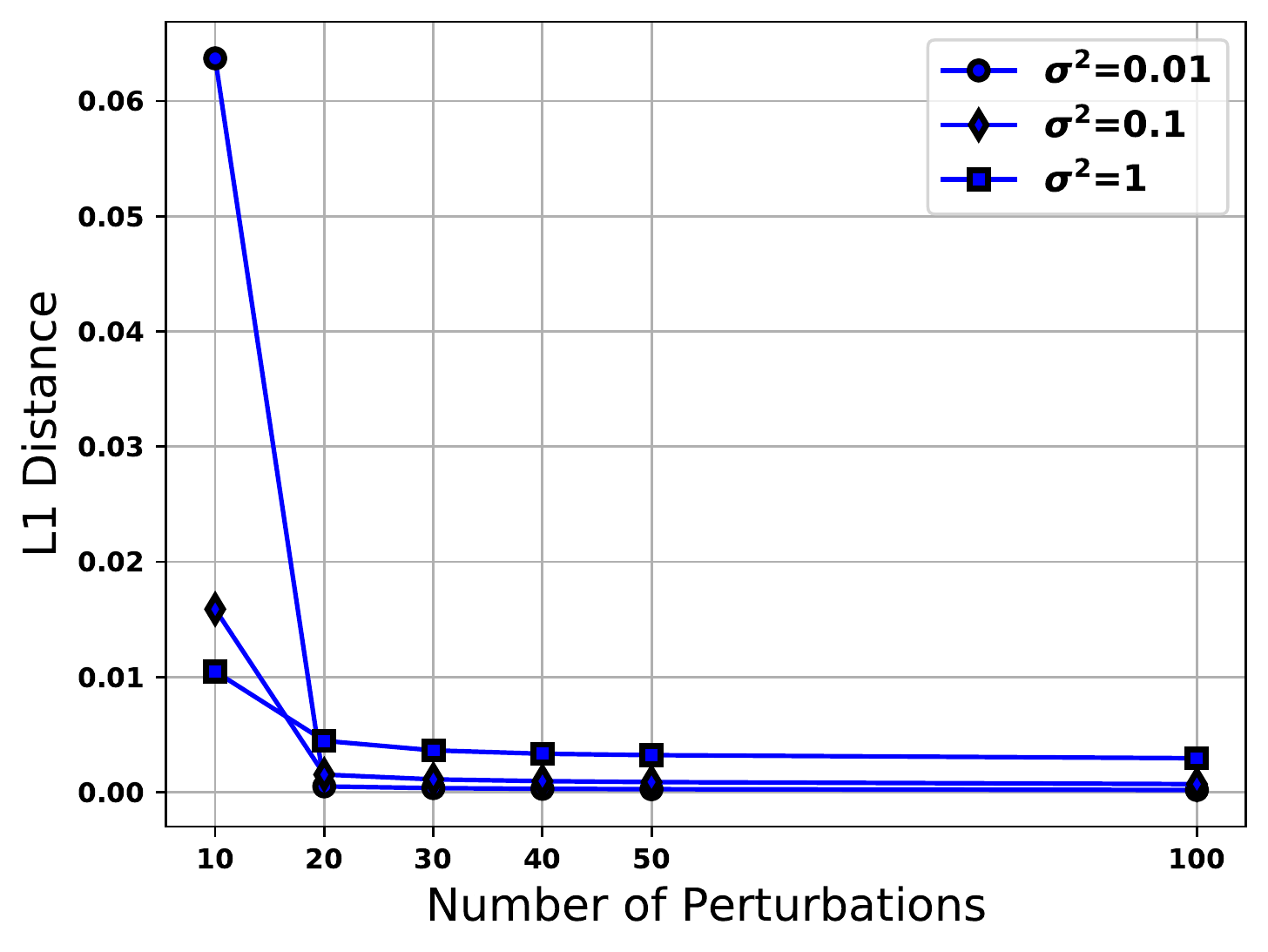}
    	\caption{Equivalence}
        \label{fig:sig_equiv_shopper}
	\end{subfigure}
	\begin{subfigure}{0.33\linewidth}
		\centering
    	\includegraphics[width=0.9\linewidth]{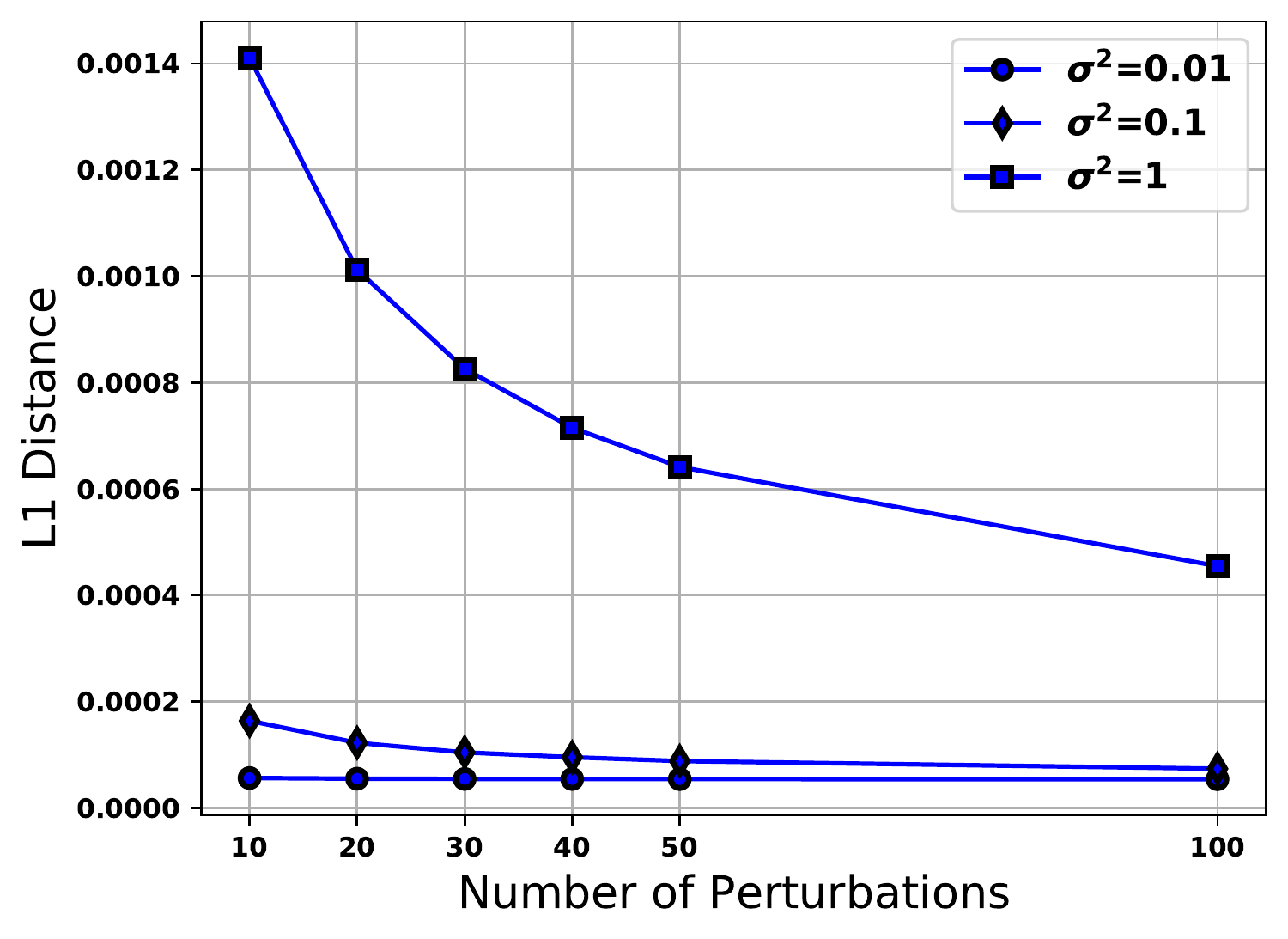}
    	\caption{Robustness of SmoothGrad}
        \label{fig:sig_sg_robust_shopper}
	\end{subfigure}
	\begin{subfigure}{0.33\linewidth}
		\centering
    	\includegraphics[width=0.9\linewidth]{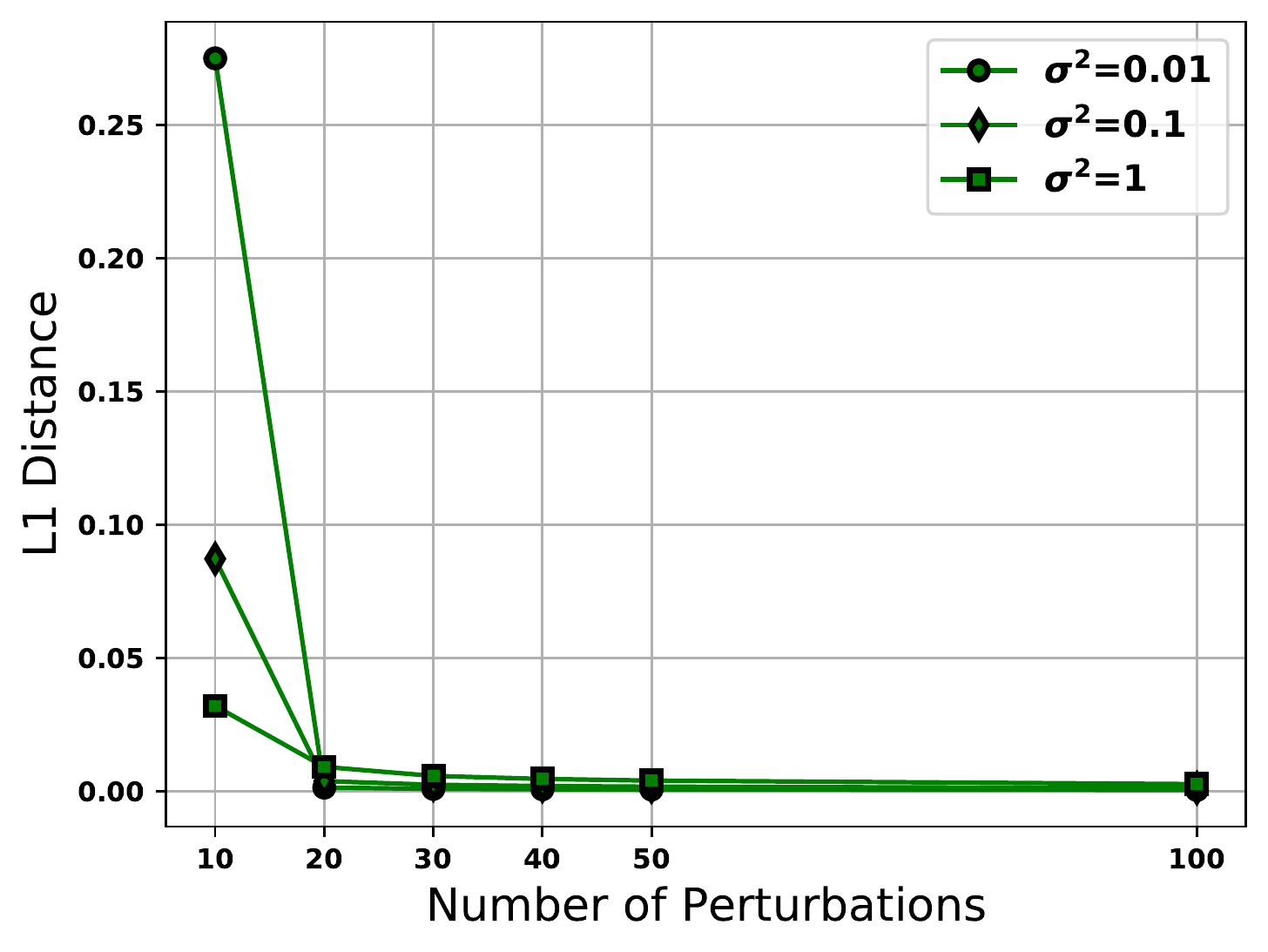}
    	\caption{Robustness of C-LIME}
        \label{fig:sig_lime_robust_shopper}
	\end{subfigure}	
	\caption{Equivalence (\ref{fig:sig_equiv_shopper}) and robustness plots for SmoothGrad~(\ref{fig:sig_sg_robust_shopper}) and C-LIME~(\ref{fig:sig_lime_robust_shopper}) for various $\sigma^2$ on the Online Shopping dataset. In each plot the Y axis corresponds to L1 distance and the X axis corresponds to the number of perturbations.}
	\label{fig:ablation_sim}
\end{figure*}

\begin{figure*}[ht!]
	\begin{subfigure}{0.33\linewidth}
		\centering
    	\includegraphics[width=0.9\linewidth]{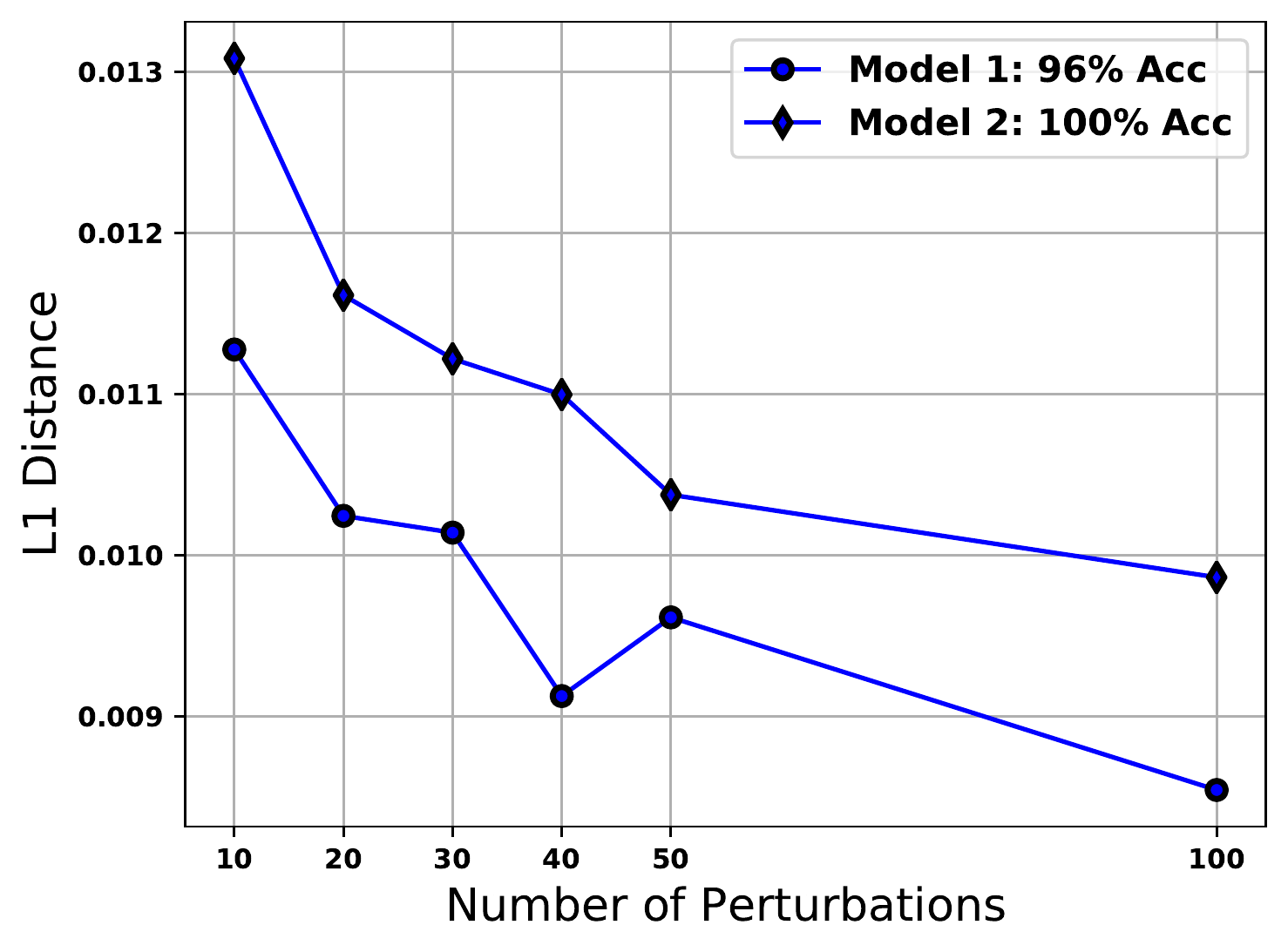}
    	\caption{Equivalence}
        \label{fig:macc_equiv_sim}
	\end{subfigure}
	\begin{subfigure}{0.33\linewidth}
		\centering
    	\includegraphics[width=0.9\linewidth]{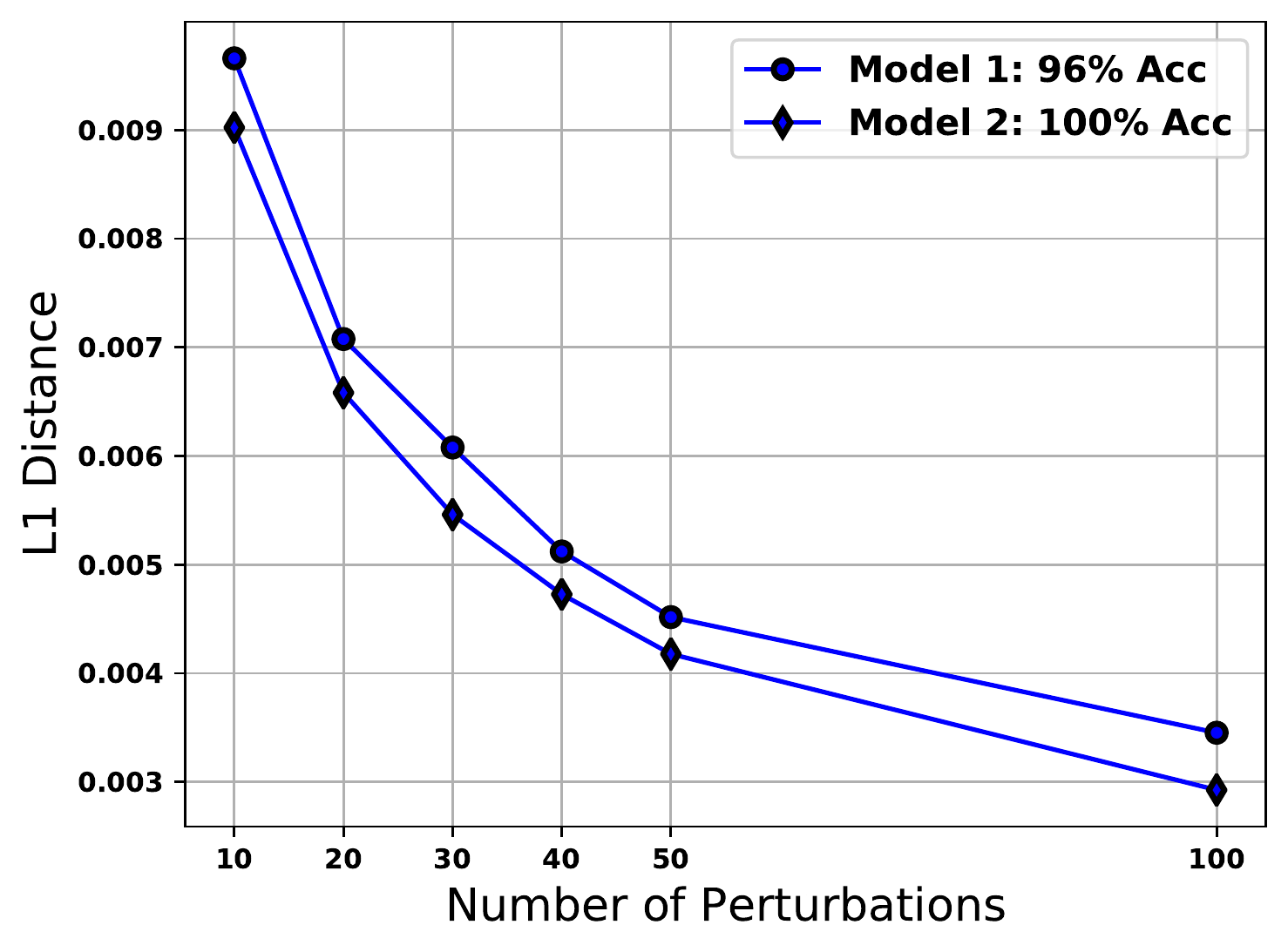}
    	\caption{Robustness of SmoothGrad}
        \label{fig:macc_sg_robust_sim}
	\end{subfigure}
	\begin{subfigure}{0.33\linewidth}
		\centering
    	\includegraphics[width=0.9\linewidth]{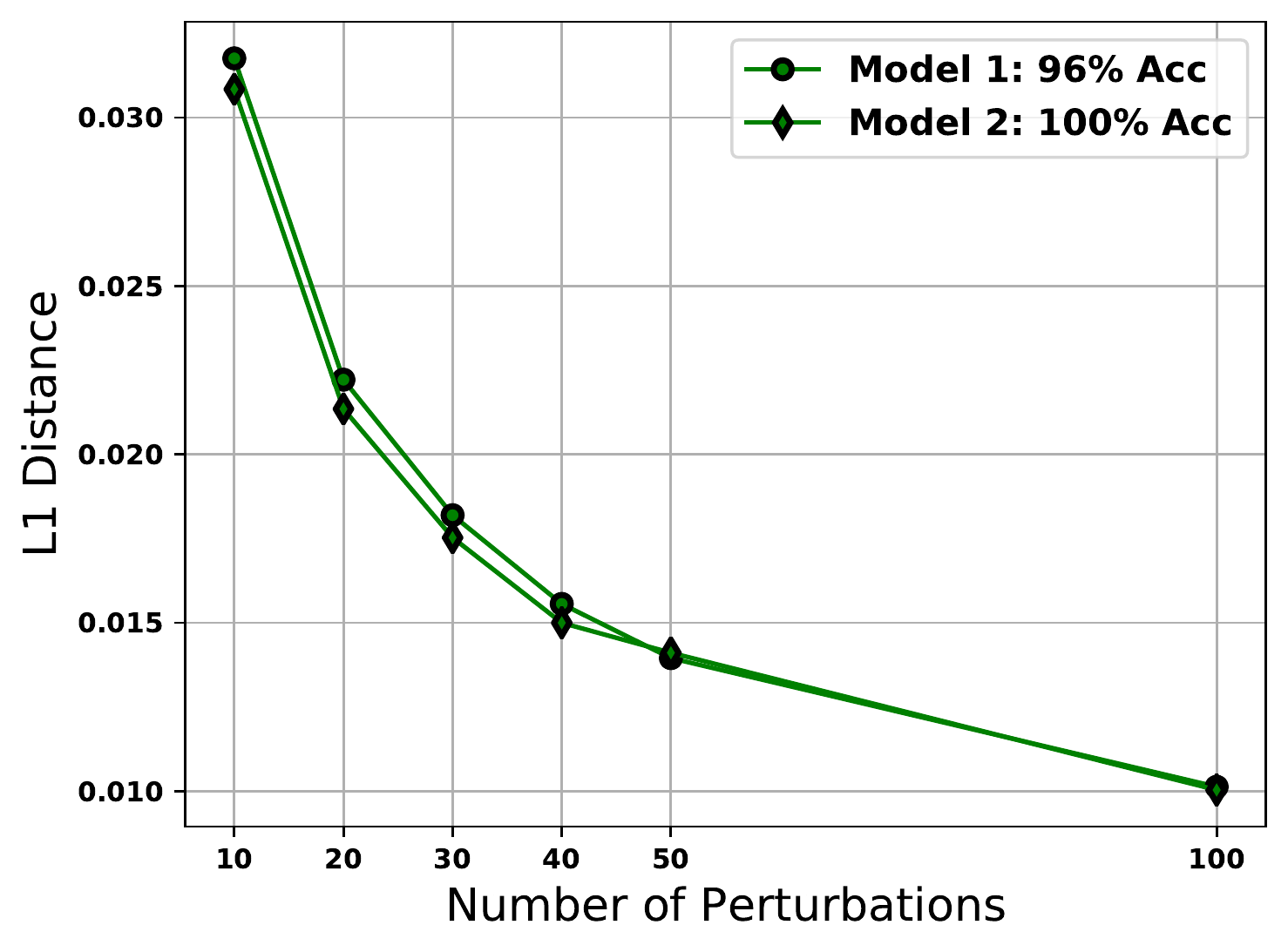}
    	\caption{Robustness of C-LIME}
        \label{fig:macc_lime_robust_sim}
	\end{subfigure}	
	\caption{Equivalence (\ref{fig:macc_equiv_sim}) and robustness plots for SmoothGrad~(\ref{fig:macc_sg_robust_sim}) and C-LIME~(\ref{fig:macc_lime_robust_sim}) for functions with various accuracies on the Simulated dataset. In each plot the Y axis corresponds to L1 distance and the X axis corresponds to the number of perturbations.}
	\label{fig:macc_sim}
\end{figure*}

\begin{figure*}[ht!]
	\begin{subfigure}{0.33\linewidth}
		\centering
    	\includegraphics[width=0.9\linewidth]{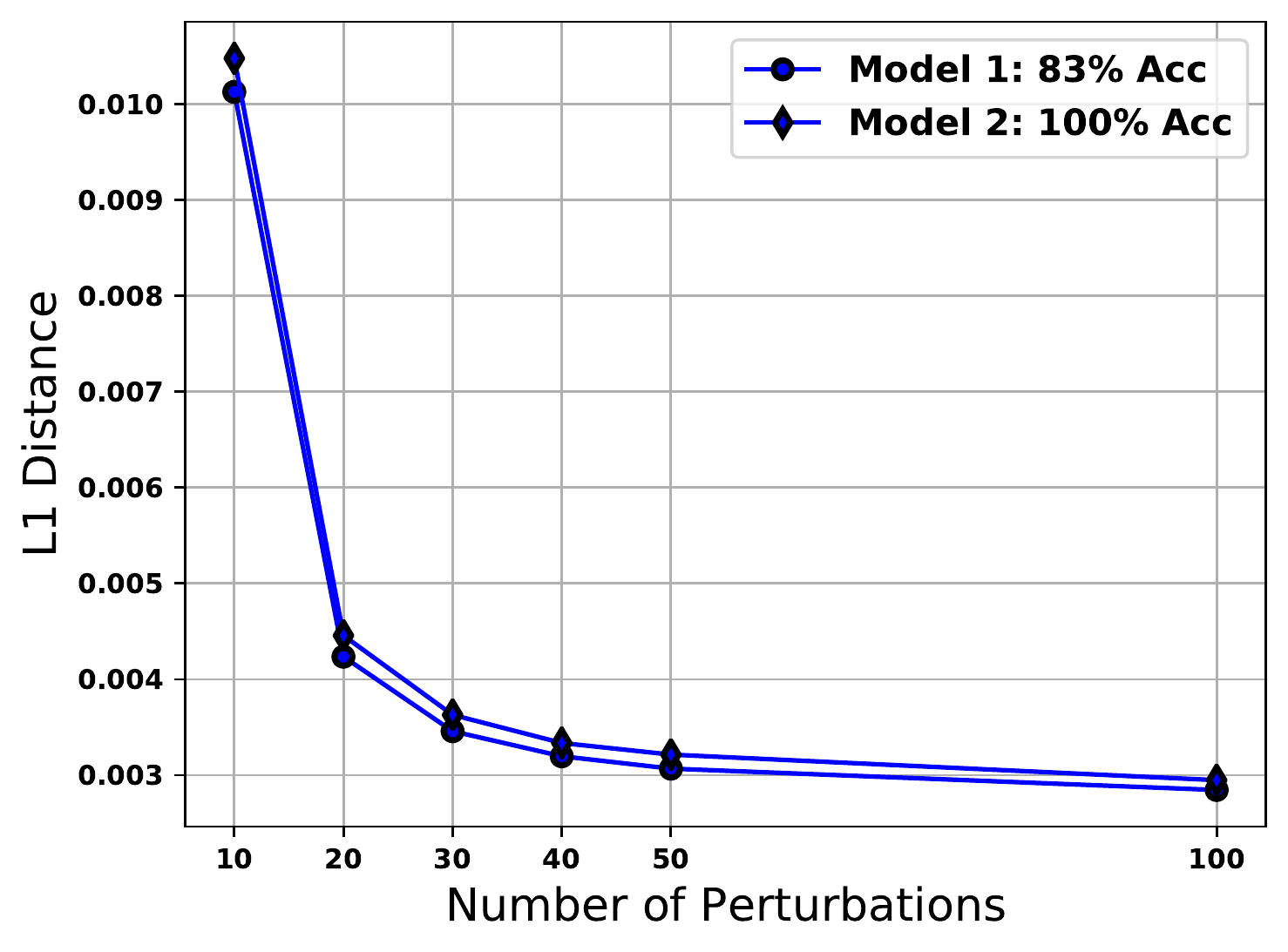}
    	\caption{Equivalence}
        \label{fig:macc_equiv_shopper}
	\end{subfigure}
	\begin{subfigure}{0.33\linewidth}
		\centering
    	\includegraphics[width=0.9\linewidth]{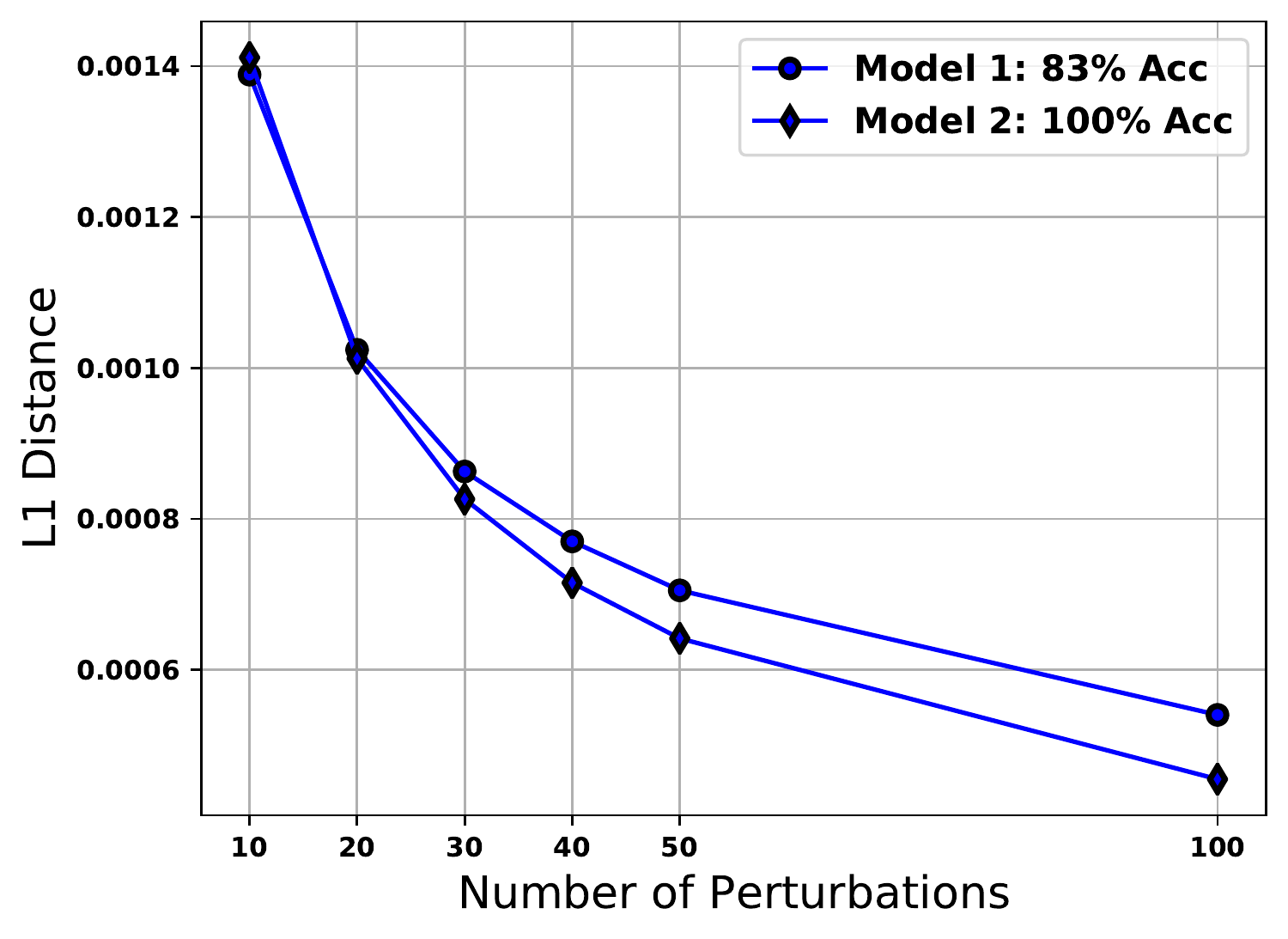}
    	\caption{Robustness of SmoothGrad}
        \label{fig:macc_sg_robust_shopper}
	\end{subfigure}
	\begin{subfigure}{0.33\linewidth}
		\centering
    	\includegraphics[width=0.9\linewidth]{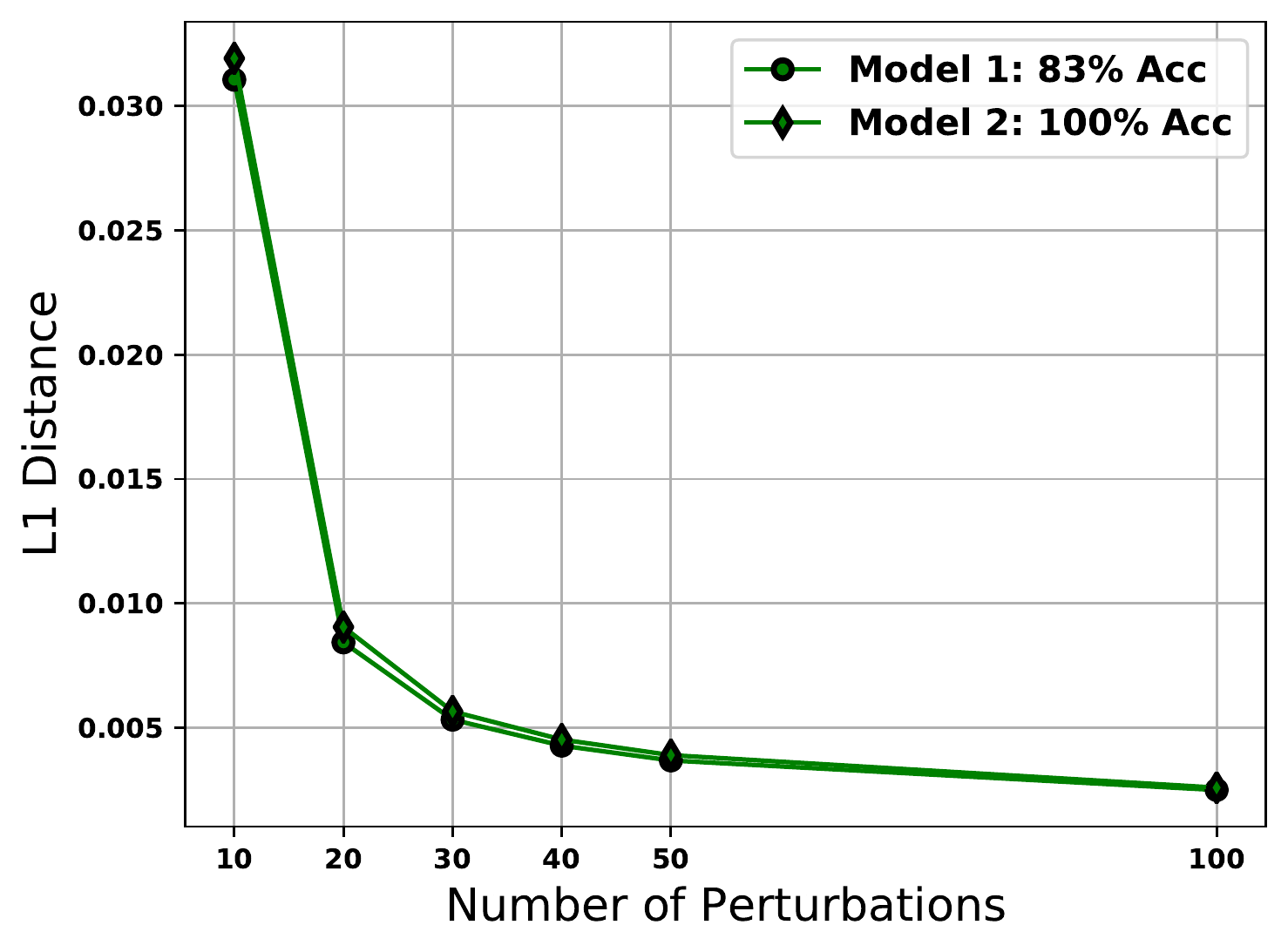}
    	\caption{Robustness of C-LIME}
        \label{fig:macc_lime_robust_shopper}
	\end{subfigure}	
	\caption{Equivalence (\ref{fig:macc_equiv_shopper}) and robustness plots for SmoothGrad~(\ref{fig:macc_sg_robust_shopper}) and C-LIME~(\ref{fig:macc_lime_robust_shopper}) for functions with various accuracies on the Online Shopping dataset. In each plot the Y axis corresponds to L1 distance and the X axis corresponds to the number of perturbations.}
	\label{fig:macc_shopper}
\end{figure*}
\fi

\end{document}
